\journalname{Constructive Approximation}
    \NewDocumentCommand{\cc}{o}{{\mathbb{C}{\IfValueT{#1}{^{{#1}}}}}}
    \newcommand{\ee}{{\mathbb{E}}}
    \NewDocumentCommand{\nn}{o}{{\mathbb{N}{\IfValueT{#1}{^{{#1}}}}}}
    \NewDocumentCommand{\pp}{o}{{\mathbb{P}{\IfValueT{#1}{^{{#1}}}}}}
    \NewDocumentCommand{\rr}{o}{{\mathbb{R}{\IfValueT{#1}{^{{#1}}}}}}
            \NewDocumentCommand{\rrd}{}{{\rr[d]}}
            \NewDocumentCommand{\rrD}{}{{\rr[D]}}
    \NewDocumentCommand{\xx}{o}{{\mathbb{X}{\IfValueT{#1}{_{{#1}}}}}}
    \NewDocumentCommand{\zz}{o}{{\mathbb{Z}{\IfValueT{#1}{^{{#1}}}}}}
    \NewDocumentCommand{\bbb}{}{{\mathcal{B}}}
    \NewDocumentCommand{\ddd}{}{{\mathcal{D}}}
    \NewDocumentCommand{\fff}{}{{\mathcal{F}}}
    \NewDocumentCommand{\www}{o}{{\mathcal{W}{\IfValueT{#1}{_{{#1}}}}}}
    \NewDocumentCommand{\xxx}{o}{{\mathcal{X}{\IfValueT{#1}{^{{#1}}}}}}
    \NewDocumentCommand{\yyy}{o}{{\mathcal{Y}{\IfValueT{#1}{^{{#1}}}}}}
    \NewDocumentCommand{\AW}{o}{{\mathcal{AW}\IfValueT{#1}{{({#1})}}}}
    \NewDocumentCommand{\NN}{oo}{
                                \ensuremath{
                                        \mathcal{NN}\IfValueT{#1}{_{#1}}\IfValueT{#2}{^{#2}}
                                        \IfValueF{#1}{_{d:D}}\IfValueF{#2}{^{\sigma}}
                                    }
                                }
    \NewDocumentCommand{\RNN}{oo}{{\mathcal{RNN}{\IfValueT{#2}{^{#2}}}{\IfValueF{#2}{^{\sigma}}}{\IfValueT{#1}{_{#1}}}{\IfValueF{#1}{_{d,D}}}}}
\NewDocumentCommand{\co}{m}{{\operatorname{hull}({#1})}}
\newtheorem{thrminformal}{Informal Theorem}
\newtheorem{condition}{Condition}
\NewDocumentCommand{\ccc}{oo}{\ensuremath{\mathcal{C}\IfValueT{#1}{_{#1}}\IfValueT{#2}{^{#2}}\IfValueF{#1}{_{d}}\IfValueF{#2}{^{{\alpha}}}}}
\NewDocumentCommand{\Gaus}{oo}{{\mathscr{G}
                                    \IfValueF{#1}{_{2}}
                                    \IfValueT{#1}{_{#1}}
                                    \IfValueF{#2}{^{d}}
                                    \IfValueT{#2}{^{#1}}
                                }}
\NewDocumentCommand{\mmm}{o}{{
                    \mathscr{M}{\IfValueT{#1}{\left({#1}\right)}
                            }}}
\NewDocumentCommand{\ppp}{oo}{{
                \mathscr{P}{\IfValueT{#2}{_{{#2}}}}{\IfValueT{#1}{\left({#1}\right)}}
                            }}
\newcommand{\eqdef}{\ensuremath{\stackrel{\mbox{\upshape\tiny def.}}{=}}}
\definecolor{darkgreen}{rgb}{0.0, 0.2, 0.13}
\definecolor{IllustrationGreen}{RGB}{61,126,0}
\definecolor{IllustrationRed}{RGB}{163,0,43}
\definecolor{EDITINGCOLOR_COMMENTPink}{RGB}{204,0,204}
\definecolor{EDITINGCOLOR_NewBlue}{RGB}{0,56,153}
\definecolor{EDITINGCOLOR_MovedGreen}{RGB}{0,102,51}
\newcommand{\ra}[1]{\renewcommand{\arraystretch}{#1}}
\begin{document}
\title{Universal Regular Conditional Distributions}
\subtitle{Via Probabilistic Transformers}

\author{Anastasis Kratsios}

\institute{A. Kratsios \at
              Department of Mathematics\\ 
              McMaster University\\ 
              Hamilton Hall\\
              1280 Main Street West \\
              Hamilton, Ontario, Canada\\
              Tel.: (+41) 078 933 99 19\\
              \email{kratsioa@mcmaster.ca}
}

\date{Received: July 5$^{th}$, 2021 / Revised: November $16^{th}$, 2022/ Accepted: February $23$, 2023}


\maketitle

\begin{abstract}
We introduce a deep learning model that can universally approximate regular conditional distributions (RCDs).  The proposed model operates in three phases: first, it linearizes inputs from a given metric space $\mathcal{X}$ to $\mathbb{R}^d$ via a feature map, then a deep feedforward neural network processes these linearized features, and then the network's outputs are then transformed to the $1$-Wasserstein space $\mathcal{P}_1(\mathbb{R}^D)$ via a probabilistic extension of the attention mechanism of Bahdanau et al.\ (2014).  Our model, called the \textit{probabilistic transformer (PT)}, can approximate any continuous function from $\mathbb{R}^d $ to $\mathcal{P}_1(\mathbb{R}^D)$ uniformly on compact sets, quantitatively.  We identify two ways in which the PT avoids the curse of dimensionality when approximating $\mathcal{P}_1(\mathbb{R}^D)$-valued functions.  The first strategy builds functions in $C(\mathbb{R}^d,\mathcal{P}_1(\mathbb{R}^D))$ which can be efficiently approximated by a PT, uniformly on any given compact subset of $\mathbb{R}^d$.  In the second approach, given any function $f$ in $C(\mathbb{R}^d,\mathcal{P}_1(\mathbb{R}^D))$, we build compact subsets of $\mathbb{R}^d$ whereon $f$ can be efficiently approximated by a PT.  
\end{abstract}

\keywords{Regular Conditional Distributions \and Geometric Deep Learning \and Computational Optimal Transport \and Measure-Valued Neural Networks \and Universal Approximation \and Transformers.}
\subclass{MSC (2020): 68T07 \and 28A50 \and 49Q22 \and 54C65}

\section{Introduction}
Conditioning the law of one random vector $X$ on the outcome of another random vector $Y$ is one of the central practices in probability theory with application in various areas of machine learning, ranging from learning stochastic phenomena to uncertainty quantification.  Due to the recent success of deep learning approaches to various applied science problems, one would hope that rigorous deep learning tools would exist which can approximate any regular conditional distribution (RCD).  However, such tools are currently unavailable in the theoretically-driven machine learning literature.  This paper addresses this open problem by introducing a universal deep learning model designed for constructive universal $1$-Wasserstein space-valued functions, particularly RCDs.    

We motivate this principled deep learning model with four open problems illustrating the need for universal RCDs.  Though our results are novel in the Euclidean case, we will later consider more general input and output metric spaces compatible with various machine learning tasks.  Thus, we begin by focusing on the case where the input space $\xxx=\rr^d$ and the probability measures are supported on $\yyy=\rr^D$.  We denote the $1$-Wasserstein space of probability measures with finite mean by $\ppp[\yyy][1]$ (formally defined shortly).
\begin{enumerate}[(i)]
    \item \textbf{Problem 1 - Universal Regular Conditional Distributions:} Suppose that $X$ and $Y$ are random-elements defined on a common probability space $(\Omega,\fff,\pp)$ and let $X$ and $Y$ take values in $\rr^d$ and in $\rr^D$, respectively.  
    We ask: \textit{Can we approximate the regular conditional distribution map $\xxx\ni x\mapsto \pp\left(Y\in \cdot\mid X=x\right)\in\ppp[\yyy][1]$, uniformly in $x$, with high-probability? }
\item 
    \textbf{Problem 2 - Universal Stochastic Processes:} \textit{(partner paper \cite{acciaio2022metric__PartnerPaper})}
    Assuming that Problem $1$ can be solved.  We ask: \textit{Can a recursive solution to Problem $1$ be used to approximate the conditional evolution of any suitable non-Markovian stochastic process?}
\item 
    \textbf{Problem 3 - A Generic Expression of Epistemic Uncertainty:} Given a finitely-parameterized machine learning model; i.e., 
    $\{\hat{f}_{\theta}\}_{\theta \in [-M,M]^p}\subset C(\rrd,\rrD)$, where $M\in \mathbb{N}_+$, it is common to randomize the trainable parameter $\theta$, i.e.\ replace $\theta$ with a specific $[-M,M]^p$-valued random variable $\vartheta$.  This is typically either done to reduce the model's training time (e.g.\ \cite{kingma2014adam}) or improve the model's generalizability (e.g.\ \cite{srivastava2014dropout}).  Therefore, $\{\hat{f}_{\vartheta}(x)\}_{x \in \xxx}$ is a collection of $\rrD$-valued random-vectors continuously depending on the input $x$.  We ask: \textit{is it possible to design a machine learning model which can approximately quantify the epistemic uncertainties: $\{\pp(\hat{f}_{\vartheta}(x)\in \cdot )\}_{x \in \xxx}$?}
\item 
    \textbf{Problem 4 - Universal Approximation Under Non-Convex Constraints:}  \textit{(partner paper \cite{AB_2022__PartnerPaper})}
    Let $\mathcal{Y}$ be a non-empty compact subset of $\mathbb{R}^D$ and let $f:[0,1]^d\rightarrow \mathbb{R}^D$ be a function satisfying the constraint $f([0,1]^d])\subseteq \mathcal{Y}$.  Typical examples of such functions include the solution operator to convex optimization problems \cite{SurvitCOLT2016_GeodesicConvexityOptimization} or the action of the second player in a two-player Stackelberg game
        \footnote{In a two-player Stackelberg game, the second player must optimize their utility subject to constraints imposed by the first player's utility maximization problem.} \cite{holters2018playing,jin2020local,li2021complexity}
    .  Unfortunately, classical universal approximation theorems do not guarantee that neural networks approximating $f$ satisfy the constraint $f([0,1]^d)\subseteq \mathcal{Y}$.  We ask: \textit{Can one design a neural network architecture which can approximate $f$ while exactly taking values in the constraint set $\mathcal{Y}$?}  Since the answer to this question is negative; instead, we ask: \textit{Is this possible if we allow our model's outputs to be randomized?}
\end{enumerate}
We solve these four problems with a new geometric deep learning model called the \textit{probabilistic transformer}.  This model defines a universal class of probability measure-valued maps.  Illustrated in Figure~\ref{fig_ProbalisticTransformer}, the probabilistic transformer works in three steps, an encoding step, a transformation step, and a decoding step.  In the first step, non-Euclidean data on a suitable metric space $\xxx$ is represented in a Euclidean feature space via an appropriate feature map $\varphi$.  Then, a deep feedforward network with a softmax output layer maps the Euclidean features in $\varphi(\xxx)$ to points on a high-dimensional simplex.  In the last step, the \textit{probabilistic attention mechanism} (which we introduce below) transforms the points on that high-dimensional simplex to a convex combination of a set of quantized probability measures supported on a given (non-empty) closed subspace $\yyy$ of $\mathbb{R}^D$.  

\begin{figure}[H]
\centering
\includegraphics[width=0.95\linewidth]{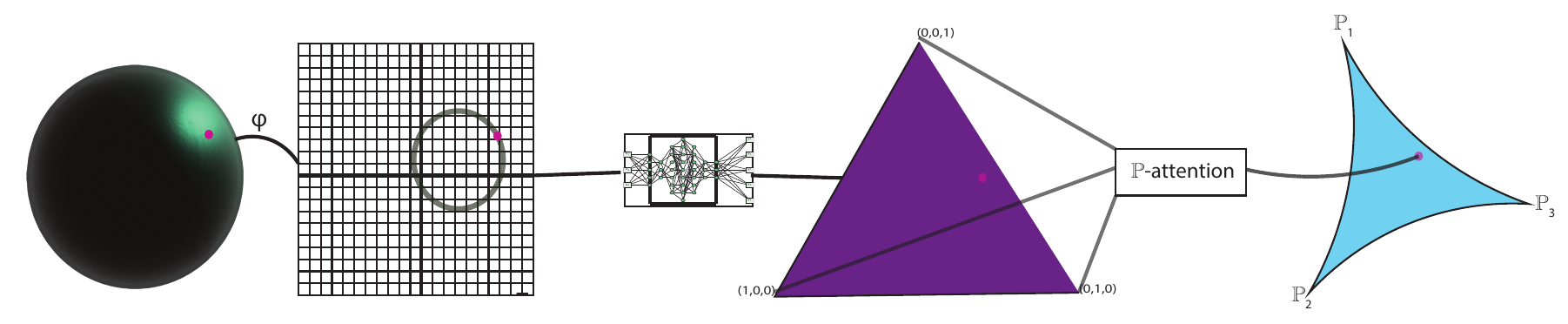}
\caption{Probabilistic Attention Mechanism: Transforming ``simplicial code'' to probability measures.}
\label{fig_ProbalisticTransformer}
\end{figure}
\paragraph{The Probabilistic Attention Mechanism}
Fix a positive integer $N$ and define the $N$-simplex $\Delta_N \eqdef \{ w\in [0,1]^N:\,\sum_{n=1}^N\, w_n=1\}$.  Given probability measures $\pp_1,\dots,\pp_N$ in the $1$-Wasserstein space on $\yyy$, we would like the map $\mathbb{P}\mbox{-}\operatorname{attention}$ in Figure~\ref{fig_ProbalisticTransformer} to produce outputs in $\mathcal{P}_1(\yyy)$ by parameterizing convex combinations of these probability measures, i.e.,\ we would like to implement the map
\begin{equation}
\label{eq:motivation_Pattention}
w\mapsto \sum_{n=1}^N \, w_n\, 
	\pp_n
,
\end{equation}
where the parameter $w$ is in $\Delta_N$.  
However, there are two difficulties with implementing~\eqref{eq:motivation_Pattention}.  First, probability measures in $\mathcal{P}_1(\yyy)$ typically cannot be \textit{exactly} parameterized using finite-dimensional vectors.  Second, the constraint $w\in \Delta_N$ can be troublesome to enforce if $w$ is allowed to vary.  

Our probabilistic attention mechanism bypasses these difficulties by directly implementing convex combinations of the quantizations of the probability measures $\pp_1,\dots,\pp_N$ in~\eqref{eq:motivation_Pattention}.  We make use of the \textit{softmax} function $\operatorname{Softmax}_N:\rr[N]\rightarrow \Delta_N$ defined by $w\mapsto \big(
\frac{e^{w_n}}{\sum_{k=1}^N e^{w_k}}
\big)_{n=1}^N$.  For any positive integers $Q$ and $N$, our probabilistic attention mechanism $\mathbb{P}\mbox{-}\operatorname{attention}_{N,Q}$ maps any $(Y,v)\in \big(\mathcal{Y}^{Q}\times \Delta_Q\big)^N$ and any $w \in \mathbb{R}^N$ to the following probability measure
\begin{equation}
\label{eq_probabilistic_attention}
    \pp\mbox{-}\operatorname{attention}_{N,Q}\big(w,(Y,v)\big)
   		\eqdef 
    \sum_{n=1}^N \, [\operatorname{Softmax}_N(w)]_n\, 
    \left(
    	\sum_{q=1}^Q\,
    	 	v_{n,q}
    	 	\delta_{Y_{n,q}}
    \right)
.
\end{equation}
To keep notation concise, we will not explicitly write the dependence of $\mathbb{P}\mbox{-}\operatorname{attention}_{N, Q}$ on $\yyy$.  

\paragraph{Connection to Classical Attention}
The classical \textit{attention mechanism} of \cite{bahdanau2014neural} is a central component of the state-of-the-art natural language processing (NLP) model of \cite{Vaswanietal__Transformer_2017}, called the \textit{transformer} network.  The classical attention mechanism maps a matrix of ``queries'' $Q$, a matrix of ``keys'' $K$, and a matrix of ``values'' $V$ to the quantity $\operatorname{Softmax}_N(QK^{\top})V$.  As in most theoretical studies of deep learning tools, e.g.\ in \cite{PetersonVoigtlander_2020_EquivalenceCNNFFNN,UniversalDeepConv}, we focus on a simplified version of the classical attention mechanism which maps a $w\in \mathbb{R}^N$ and a set of $Y_{1,1},\dots,Y_{N,1}$ in $\mathbb{R}^D$ to their convex combination $\sum_{n=1}^N \, [\operatorname{Softmax}_N(w)]_n\, 
Y_{n,1}$.  This simplification highlights the connection between probabilistic attention and classical attention mechanisms, namely, if%
\footnote{A similar relation holds when $Q>1$.}
$Q=1$ then
\[
	   \mathbb{E}_{Y\sim \pp\mbox{-}\operatorname{attention}_{N,1}(w,(Y,v))}[Y]
	=
		\sum_{n=1}^N \, [\operatorname{Softmax}_N(w)]_n\, 
		Y_{n,1}.
\]
Thus, the classical attention of \cite{Vaswanietal__Transformer_2017} is implemented in the probabilistic attention mechanism ``on average''.  Therefore, the probabilistic attention mechanism captures more information than the ``deterministic'' attention mechanism of \cite{Vaswanietal__Transformer_2017}.  Namely, the classical attention mechanism cannot encode probabilistic features into its predictions, e.g.\ variance and higher moments.  

Let $\mathcal{P}_1(\yyy)$ denote the set of probability measures on $\yyy$ with a finite mean (defined below).  We quantify the distance between any two probability measures therein using the $1$-\textit{Wasserstein distance} $\mathcal{W}_1$ (defined below).  The metric space $(\mathcal{P}_1(\yyy),\mathcal{W}_1)$ is called the \textit{$1$-Wasserstein} space.  When clear from the context, it will be abbreviated by $\mathcal{P}_1(\yyy)$.  

Synchronizing with the partner paper \cite{AB_2022__PartnerPaper}, we call a map $\hat{T}:\xxx\rightarrow \yyy$ a \textit{probabilistic transformer} (defined precisely in the paper's main body) if it can be represented as in Figure~\ref{fig_ProbalisticTransformer}.  A focal consequence of this paper's first main result is the following probably approximately correct (PAC) guarantee that the probabilistic transformer model can approximate RCDs.  
\begin{thrminformal}[PAC Universal Approximation of {$\ppp[\yyy][1]$}-Borel Functions]
\label{thrm_UAT_Measurable_Variant}
    Let $\xxx$ be a suitable metric space, $\yyy\subseteq \rr^D$ be closed, $\varphi:\xxx\rightarrow \rr^d$ be a regular feature map.  Let $X$ and $Y$ be random variables taking values in $\xxx$ and $\yyy$, respectively.  For every $0<\epsilon\le 1$, there is a probabilistic transformer $\hat{T}:\xxx\rightarrow \mathcal{P}_1(\yyy)$ for which $\mathcal{W}_1(\hat{T}(x),\mathbb{P}(Y\vert X=x))<\epsilon$ with probability at-least $1-\epsilon$.  
\end{thrminformal}
Informal Theorem~\ref{thrm_UAT_Measurable_Variant} is a qualitative worst-case guarantee that addresses one of the approximation theoretical questions on the PT's asymptotic expressiveness.  Namely, ``given enough parameters \textit{can} the probabilistic transformer approximate any continuous RCD?''  However, for this model to be practically feasible, we must address the following converse question:
\[
\mbox{``When can the probabilistic transformer model overcome the curse of dimensionality?''}
\]
There are at least two scenarios when this question admits an affirmative answer.  The first scenario occurs when the target function is regular enough.  The second scenario happens when only considering inputs in a compact subset which is ``near any given (finite) training dataset''.   

To explain the first case, let us briefly consider the problem of approximating a continuous function between Euclidean spaces with ReLU feedforward neural networks uniformly on compact sets.  In \cite{ShenYangZhang__OptimalApproxRatesReLUWidthDepth_2022_JPAMath}, the authors demonstrate an optimal neural network approximation of an arbitrary such continuous function can only be implemented by a neural network whose number of parameters grows exponentially as a function of input space's dimension.  A key point here is that the target function is only assumed to be continuous, and no additional structure is presupposed.  
The typical strategy to circumventing this problem was first proposed in \cite{barron1993universal} and is summarized as \textit{``[the curse of dimensionality] is avoided by considering sets [of target functions] that are more and more constrained as [the input space's dimension] increases"}.  The most common constraint requires smoothness constraints on the target function through moment constraints on its Fourier transform (see \cite{SIEGEL2020313}), Besov-type constraints on the target function, as in \cite{Suzuki2018ReLUBesov,gribonval2019approximation}, but many other options also exist.  Our first result in this direction builds classes of functions in $C(\xxx,\ppp[\yyy][1])$, which can be approximated by our PT model without suffering from the curse of dimensionality.  
These functions are pieced together using classes of functions that classical feedforward neural networks can efficiently approximate. 

\begin{thrminformal}[{$\ppp[\yyy][1]$-Measure Valued Functions that are Efficiently Approximable}]
\label{informal_theorem_f_structure}\hfill\\
We construct a broad class of functions $\mathcal{F}\subseteq C(\xxx,\mathcal{P}_1(\yyy))$ with the property: for any error $\epsilon>0$ and any compact $K\subseteq \xxx$ there is a probabilistic transformer $\hat{T}:\xxx\rightarrow \mathcal{P}_1(\yyy)$ defined by $\mathscr{O}(\epsilon^{-2})$ parameters such that: if $x\in K$ then $1$-Wasserstein distance between $f(x)$ and $\hat{T}(x)$ is at-most $\epsilon$.
\end{thrminformal}

Informally, Theorem~\eqref{informal_theorem_f_structure} exhibits a class of functions which probabilistic transformers can efficiently approximate.  However, in practice, the target RCD need not belong to this class of functions.  Our last main result states that the curse of dimensionality can \textit{always} be avoided by ``localizing'' our set of inputs about a given (finite) training dataset and only approximating the target RCD on this ``localized'' set.  In contrast, most other quantitative universal approximation theorems in the literature provide a single approximation rate valid for all compact subsets of the input space, regardless of any compact set's size and geometry.  These ``localized datasets'' are illustrated pictorially in Figure~\ref{fig_not_localization}.  

\begin{figure}[H]
\centering
\begin{subfigure}{.5\textwidth}
    \centering
    \includegraphics[width=0.40\linewidth]{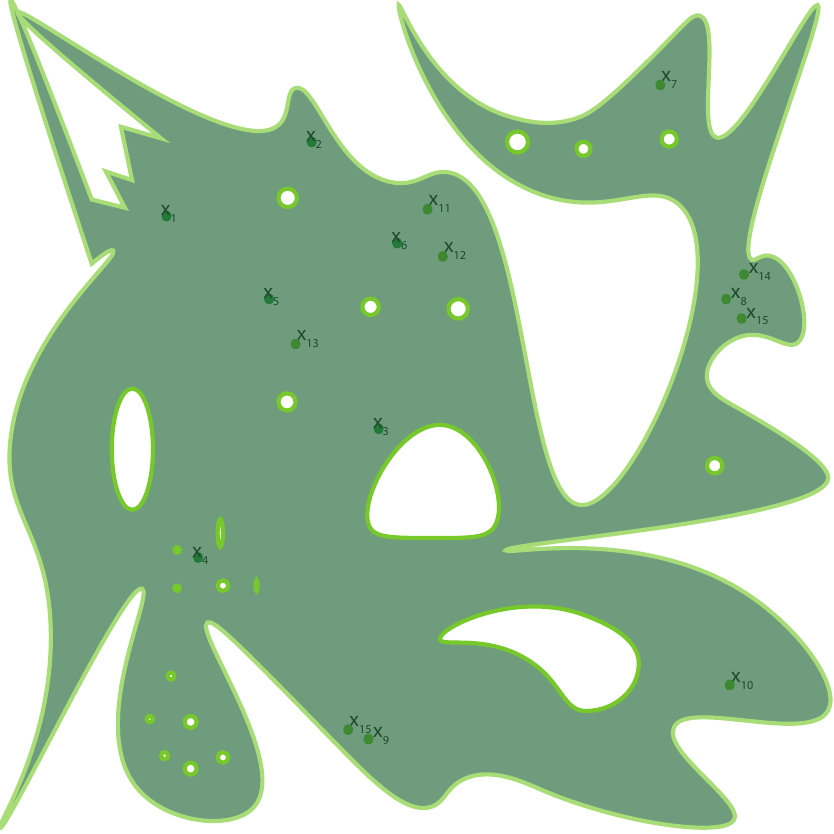}
    \caption{``Large and pathological'' compact set containing data.}
    \label{fig_localization}
\end{subfigure}%
\begin{subfigure}{.5\textwidth}
\centering
    \includegraphics[width=0.40\linewidth]{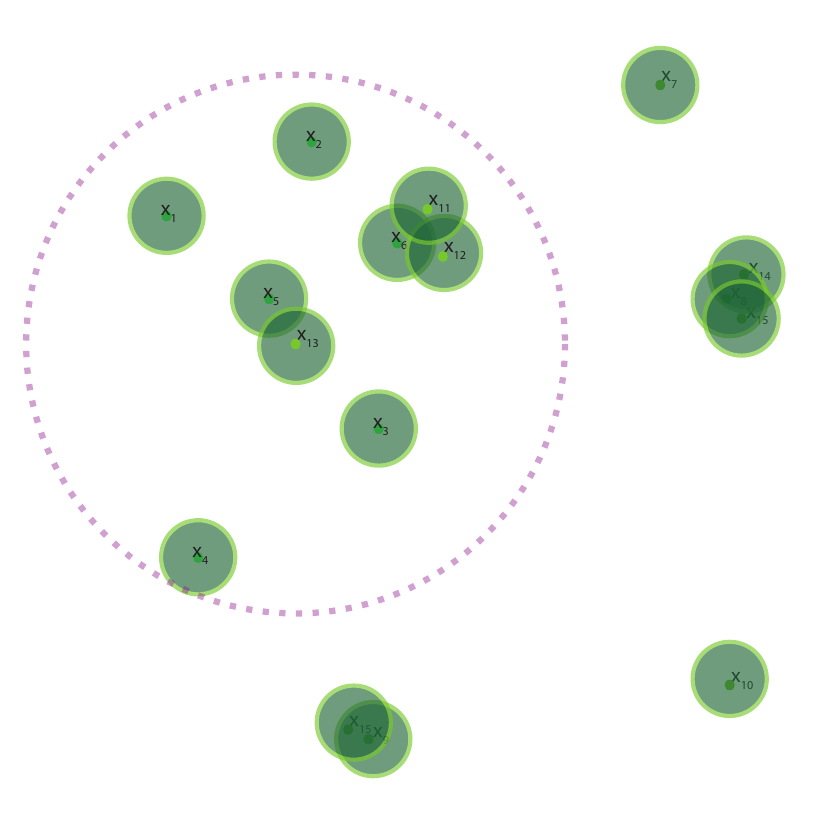}
    \caption{Localization of the dataset about $x_{13}$.}
    \label{fig_not_localization}
\end{subfigure}
\caption{The localization of a dataset is ``well-structured'' while the arbitrary compacta containing the data $x_1$, $\dots$, $x_{18}$ can be ``too large'' have a ``pathological'' geometry.}
\label{fig_general_vs_localized}
\end{figure}

As illustrated in Figure~\ref{fig_general_vs_localized}, the issue is that general uniform approximation guarantees require a model to accommodate arbitrarily large patches of the input space containing a given dataset.  However, expecting an approximation to only hold near a given training dataset is more realistic.  Figure~\ref{fig_localization} illustrates what is meant by ``localization near the dataset $x_1,\dots,x_{18}$'' (about the datum $x_{13}$).  The localized dataset is illustrated by the union of the {\color{IllustrationGreen}{green bubbles}} lying within the {\color{violet}{purple circle}}.  We can guarantee that the target function can be approximated with a probabilistic transformer determined by a small number of parameters.  The diameter of the {\color{violet}{purple circle}} is proportional to the regularity of the target function.  In particular, the curse of dimensionality can always be avoided by appropriately shrinking the {\color{IllustrationGreen}{green bubbles}} and {\color{violet}{purple circle}}.

\begin{thrminformal}[Localization Avoids the Curse of Dimensionality]
\label{informaltheorem_curseofdim}
Let $f:\mathbb{R}^d\rightarrow \mathcal{P}_1(\mathbb{R}^D)$ be continuous and suppose that each of the measures $f(x)$ is supported a low-dimensional manifold.  
Let $\{x_i\}_{i=1}^n$ be a finite subset of $\mathbb{R}^d$.  For every $\epsilon>0$ we construct a compact subset $K$ of $\mathbb{R}^d$, of positive Lebesgue measure, which contains $\{x_i\}_{i=1}^n$ and such that there is a probabilistic transformer $\hat{T}:\mathbb{R}^d \rightarrow \mathcal{P}_1(\mathbb{R}^D)$ whose number of parameters is a polynomial function of $\epsilon^{-1}$, $d$, and $D$ such that: for every $x\in K$ the $1$-Wasserstein distance between $f(x)$ and $\hat{T}(x)$ is at-most $\epsilon$.
\end{thrminformal}

\subsection*{The Probabilistic Transformer}
\label{ss_Models}
We require the following condition of our continuous feature map $\varphi\in C(\xxx,\rrd)$.  A feature map $\varphi$ is said to be \textit{UAP-invariant} if it preserves the universal approximation property (UAP) of any model class upon pre-composition; i.e., $\mathcal{F}\subseteq C(\rrd,\yyy)$ is dense\footnote{Density in $C(\rrd,\yyy)$ and in $C(\xxx,\yyy)$ is for the uniform convergence on compacts topology.} if $\{\hat{f}\circ \varphi\}_{\hat{f}\in \mathcal{F}}$ is dense in $C(\xxx,\yyy)$.  Conveniently, \citep[Theorem 3.4]{kratsios2020non} shows that UAP-invariance is equivalent to the injectivity of $\varphi$.
 
For simplicity, we will instead assume a stronger structural condition on the feature map $\varphi$.  This condition assumes that $\varphi$ and its inverse (on $\varphi(\xxx)$) are not too irregular, more precisely we require that $\varphi$ is invertible on its image and that $\varphi$ and its inverse are both H\"{o}lder.  Indeed most Euclidean optimal embeddings of well-behaved metric spaces have this property (see \citep[Chapter 12]{heinonen2001lectures} for details and  \cite{NaorNeiman_QuantitativeAssouad_2012,DavidSnipes_NonProbabilisticAssouad_2013} for a quantitative refinement of that result).  
\begin{condition}[{Bi-H\"{o}lder Feature Map}]
\label{cond_feature}
There are constants $0< C_{\varphi:L} \le C_{\varphi:U}$ and an $\alpha \in (0,1]$ such that: for every $x,\tilde{x}\in \xxx$ the following holds
\[
    C_{\varphi:L}\,
    d^{\alpha}(x,\tilde{x})
        \leq 
    \|
        \varphi(x)
            -
        \varphi(\tilde{x})
    \|
        \leq
    C_{\varphi:U}\,
    d^{\alpha}(x,\tilde{x})
    .
\]
\end{condition}
\begin{example}[Finite-Dimensional Affine Subspaces of The Hilbert Space]
Let $(\mathcal{H},\langle \cdot,\cdot\rangle)$ be a separable infinite-dimensional Hilbert space, $d<\dim(\mathcal{H})$, $x_0\in \mathcal{H}$, $\{x_i\}_{i=1}^d$ be an orthonormal subset of $\mathcal{H}$ not containing $\{x_0\}$, and set $\xxx
\eqdef \operatorname{span}\{x_1,\dots,x_d\}+x_0$.  
The map $\varphi:\xxx\rightarrow \rr^d$ defined by
\[
    \varphi(x) 
        =
    \big(
        \langle x-x_0, x_i\rangle
    \big)_{i=1}^d
,
\]
is a UAP-invariant feature map satisfying~\eqref{cond_feature} with $\alpha=1$. 
\end{example}
The following example is due to Gromov \cite{Gromov_FillingRiemannianManifolds_1983_Systol_JDiffGeo}.
\begin{example}[Bi-Lipschitz Feature Maps for Compact Connected Riemannian Manifolds]
Let $\mathcal{X}$ be a compact connected Riemannian manifold with geodesic distance $d$.  By \citep[Theorem 1.1]{Katz2_BiLipschitzImbeddingsEuclidean_2011_GeoDedic} for any sufficiently small\footnote{Here, sufficiently small means $\delta \in \big(
    0
        ,
    \frac{\operatorname{Sys}(\xxx)}{10}
\big)$ where $\operatorname{Sys}(\xxx)$ is the shortest length of a non-contractible loop $\gamma:[0,1]\rightarrow \xxx$ (i.e.\ $\gamma(0)=\gamma(1)$ for which there is no homotopy to a point in $\xxx$).  For further details on $\operatorname{Sys}(\xxx)$ see \cite{Guth_MetaphorsinSystolicGeometry_2010}.}
$\delta>0$, and any maximal $\delta$-separated subset%
\footnote{A subset $\{x_i\}_{i \in I}$ is $\delta$-\textit{separated}, for a fixed $\delta>0$, if $\min_{i,j\in I;\,i\neq j}\,d(x_i,x_j)\geq \delta$.  This subset is a \textit{maximal $\delta$-separated subset} if there is no larger subset of $\xxx$ which is $\delta$-separated.}
$\{x_i\}_{i\in I}$ of $\mathcal{X}$, the map $\varphi:\xxx\rightarrow \rr^I$ defined for any $x\in \xxx$ by
\[
    \varphi(x) 
        \eqdef 
    (d(x,x_i))_{i\in I},
\]
satisfies Condition~\ref{cond_feature} with $\alpha=1$.  In particular, $\varphi$ is UAP-invariant.  
\end{example}

We now define the main model class studied in this paper.  

\begin{definition}[Probabilistic Transformer]
\label{defn_proabilitic_transformer}
Let $\varphi:\xxx\rightarrow \rr^d$ be a feature map, $\sigma\in C(\rr)$, and $\yyy\subseteq \rrD$ be non-empty.  A map $\hat{T}:\xxx\rightarrow \mathcal{P}_1(\yyy)$ is called a \textit{probabilistic transformer} if it admits the iterative representation
\begin{equation}
\label{eq__defn_proabilitic_transformer___representation}
\begin{aligned}
\hat{T} (x) & \eqdef  
	 \pp\mbox{-}\operatorname{attention}_{N,Q}\big(
		x^{J+1}
	,
		(Y,v)
	\big)
\\
x^{(j+1)} & \eqdef A^{(j)} \sigma \bullet x^{(j)} + b^{(j)} \qquad j=0,\dots,J\\
x^{(0)} & \eqdef x
,
\end{aligned}
\end{equation}
where $\bullet$ denotes component-wise composition, for each $j=0,\dots,J+1$ $A^{(j)}$ is a $d_{j+1}\times d_j$-matrix and $b^{(j)}\in \rr^{d_{j+1}}$ with $d_{j+1}=\rr^N$ and $d_0=d$, and where $(Y,v)\in \big(\mathcal{Y}^{Q}\times \Delta_Q\big)^N$, $w \in \Delta_N$, for some positive integers $N$ and $Q$.  
The integer $J$ is called $\hat{T}$'s \textit{depth}, $\max_{j=0,\dots,J+1}\,d_j$ is called $\hat{T}$'s \textit{width}, and the \textit{number of parameters} is
\[
		\operatorname{Param}(\hat{T})
	\eqdef 
		\underbrace{
		    NQ(D+1)
		}_{\pp\text{-Attention Parameters}}
		+
		\underbrace{
		    \sum_{k=0}^J\,
		        d_{j+1}(d_j+1)
		}_{\text{Feedforward Network Parameters}}
	.
\]
The set of all $\hat{T}$ with representation~\eqref{eq__defn_proabilitic_transformer___representation} is denoted by $\NN[\varphi,\yyy][\sigma:\star]$.
\end{definition}

\begin{definition}[Empirical Measures in General Position]
\label{definition_em_general_position}
A set of empirical probability measures $\big\{\frac1{Q}\sum_{q=1}^{Q_n} \delta_{y,q}\big\}_{n=1}^N$ is said to be in \textit{general position} if the $y_{1,1},\dots,y_{N,Q_N} \in \yyy$ are all distinct.
\end{definition}
\begin{example}
Consider the probability measures $\mu_1\eqdef \delta_1,\mu_2\eqdef \frac1{2}\delta_1 + \frac1{2}\delta_0, \mu_3\eqdef \delta_0$ on $\rr$.  Together, $\mu_1$ and $\mu_3$ are in general position while $\mu_1$ and $\mu_2$ are not.  
\end{example}
\subsection{Notation}
\label{ss_intro_Notation}
Given any uniformly continuous function $f:\xxx\rightarrow \yyy$, such as any continuous function defined on a compact domain, we use $\omega_f$ to denote a modulus of continuity of $f$.  
If $\tilde{\omega}_f(t)\mapsto 
\sup\{d_Y(f(x),f(x')):x\in \xxx,x'\in X,d_{\xxx}(x,x')\leq t \}$ is right-continuous then we set $\omega_f\eqdef \tilde{\omega}_f$ to be this function\footnote{This particular modulus of continuity is often called the optimal modulus of continuity of $f$.}.  
If $\tilde{\omega}_f$ is not right-continuous then, we set $\omega_f(t)\eqdef \frac{1}{t}\int_t^{2t}\tilde{\omega}_f(s)ds$, which is necessarily continuous.  When restricting any continuous function to a compact set, we always let $\omega_f$ be the modulus of continuity just described; thus, we omit the dependence of $\omega_f$ on the particular compact.
Following \cite{EmbrechtsHofert}, we define the \textit{generalized inverse} of $\omega_f$ by $\omega_f^{-1}(t)\eqdef \sup\{s:\omega_f(s)\leq t\}$. Note that if $\omega_f$ is invertible, then its generalized inverse and its usual inverse coincide.

Fix $1\le q<\infty$.  The set $\ppp[\yyy][q]\subseteq \ppp[\yyy][1]$ consist of of all $\mu \in \ppp[\yyy][1]$ for which the value of $\int_{y \in \yyy} d_{\yyy}^q(y_0,y)d\mu(y)$ is finite for some fixed $y_0\in \yyy$.  We equip $\ppp[\yyy][q]$ with the \textit{subspace topology} induced by inclusion in $\ppp[\yyy][1]$.  We emphasize that when $1<q<\infty$, $\ppp[\yyy][q]$ is not equipped with the Wasserstein-$q$ metric but rather with the Wasserstein-$1$ metric.  

The \textit{diameter} of any subset $A\subseteq \xxx$ is defined to be $\operatorname{diam}(A)\eqdef \sup_{x_1,x_2\in A}\, d_{\xxx}(x_1,x_2)$.  We denote the \textit{cardinality} of any $A\subseteq \xxx$ by $\# A$.  The rectified linear unit (ReLU) activation function is defined by $\operatorname{ReLU}(x) \eqdef \max\{0,x\}$, for every $x\in \mathbb{R}$.  

\section{Main Universal Approximation Results}\label{s_main_result}
We now present the paper's main results.  
\subsection{Universal Approximation: The General Case}
\label{ss_MAIN_Unstructured_Case}
Our first main result concerns the universal approximation capabilities of the probabilistic transformer model.  Two errors appear in the theorem: a \textit{quantization error} $(\epsilon_Q>0)$ and an \textit{approximation error} $(\epsilon_A>0)$.  These errors respectively quantify the error made by reducing the infinite-dimensional problem to a finite-dimensional problem and the error incurred by approximately solving that finite-dimensional problem.  

The following result is approximately optimal, in the sense that it guarantees the existence of a PT which uniformly approximates the target function $f:\xxx\rightarrow\mathcal{P}_1(\yyy)$, it implements an $\frac{\epsilon_Q}{2}$-metric projection onto the hull of a finite number of probability measures supported on $Q$ points, and together these probability measures approximately optimally ``discretize $\xxx$'s image under $f$''.  For any \textit{quantization error} $\epsilon_Q>0$, $N$ is the smallest number of probability measures $\mu_1,\dots,\mu_N$ in $f(\xxx)$ which form $\frac{\epsilon_Q}{2}$-net of $f(\xxx)$.  The integer $Q$ is the smallest number of points in $\yyy$ required to approximate each of these probability measures $\mu_1,\dots,\mu_N$ by finitely supported probability measures to an error of $\frac{\epsilon_Q}{2}$, in $1$-Wasserstein distance.  
That is,\ for each $n$, 
$
    \mathcal{W}_1\big(
        \mu
    ,
        \hat{\mu}_n
    \big) 
        <
    \frac{\epsilon_Q}{2}
$ where $\hat{\mu}_n\eqdef \sum_{q=1}^Q\, v_{n,q}\delta_{Y_{n,q}}$ for some $v_n\in \Delta_Q$ and $Y_1,\dots,Y_Q$ are points in $\yyy$.  

Once the image of $f(\xxx)$ is discretized, we may reduce the problem of approximating $f$ by approximately implementing a map sending $x\in \xxx$ to an element of the $\epsilon_A$-optimality set
\[
	\Big\{
		\nu 
			=
		\sum_{n=1}^N\, w_n \hat{\mu}_n:
		\,
		\mathcal{W}_1(\nu,f(x)) 
		\leq 
		 \inf_{\tilde{w}\in \Delta_N}\,
		 \mathcal{W}_1\big(
		 	f(x)
		 	,
		 	\sum_{n=1}^N\, \tilde{w}_n \hat{\mu}_n
		 \big) 
		 +
		 \epsilon_A/2
	\Big\}
.
\]
Since probability measures formed by convex combinations of $\{\hat{\mu}_n\}_{n=1}^N$ in general position are in $1$-$1$ correspondence with parameters in the $N$-simplex, then we may effectively transcribe our approximation problem to a problem of approximating a continuous map into the $N$-simplex.  The results of \cite{paponkratsios2021quantitative} imply that this function can be implemented to $\frac{\epsilon_A}{2}$-precision while satisfying the constraint that all outputs lie in the $N$-simplex by using a deep feedforward network with softmax output layer.  

\begin{theorem}[Quantitative Universal Approximation: General Case]
\label{theorem_MAIN__GeneralCase}
Let $d,D\in \nn_+$, $\mathcal{X}$ be a compact metric space, $\mathcal{Y}\subseteq \rr^D$ be non-empty, closed, and path-connected, $f:\xxx\rightarrow \mathcal{P}_1(\yyy)$ be uniformly continuous with modulus of continuity $\omega_f$, $\varphi:\xxx\rightarrow \rr^d$ be continuous and injective and let $\sigma \in C(\rr)$ satisfy condition~\ref{condi_KL}.  For every ``quantization error'' $\epsilon_Q>0$ there exists a minimal $N\in \nn_+$ such that there exist ${\mu}_1,\dots,{\mu}_N\in f(\xxx)$ satisfying the covering condition
\[
        \max_{x\in \xxx}\,
        \min_{n\leq N}\,
        \mathcal{W}_1(f(x),{\mu}_n)
    <
        \epsilon_Q/2.
\]
For every ``approximation error'' $\epsilon_A \in (0,1)$ there exists $Q\in \nn_+$ and a probabilistic transformer network $\hat{T}:\xxx\rightarrow \mathcal{P}_1(\yyy)$ satisfying:
\begin{enumerate}
    \item[(i)] \textbf{Universal Approximation:}
    \[
        \max_{x\in \xxx}\,
            \mathcal{W}_1\big(f(x),\hat{T}(x)\big)
        <
        \epsilon_Q + \epsilon_A
    \]
    \item[(ii)] \textbf{Approximate Metric Projection:}
    \[
            \max_{x\in \xxx}\,
            \mathcal{W}_1\big(f(x),\hat{T}(x)\big)
        \leq 
            \max_{x\in \xxx}
            \min_{n\leq N}
            \,
            \mathcal{W}_1(f(x),{\mu}_n) + \epsilon_A + \epsilon_Q/2
        .
    \]
\end{enumerate}
Moreover, the depth, width, and the number of attention parameters defining $\hat{T}$ are recorded in Table~\ref{tab_general_approximation_rates}.  
\end{theorem}

Before discussing the depth, width, and attention complexity of the approximating PT constructed in Theorem~\ref{theorem_MAIN__GeneralCase}, as recorded in Table~\ref{tab_general_approximation_rates}, we showcase some implications of our universal approximation theorem.
Any PT induces a map from $\bar{T}:\xxx\rightarrow \mathbb{R}^{D}$ which sends any $x\in \xxx$ to the vector defined via 
\begin{equation}
\label{eq_induced_map__probtransformer}
        \bar{T}(x)
    \eqdef 
    	\mathbb{E}_{\hat{Y}^x\sim \hat{T}(x)}[Y^x]
    .
\end{equation}
Together Theorem~\ref{theorem_MAIN__GeneralCase} and a result of \cite{BruHeinicheLootgieter1993} imply that if $\hat{T}$ is a probabilistic transformer approximating a uniformly continuous $\mathcal{P}_1(\rr^D)$-valued function $f$ then, $\bar{T}$ uniformly approximates the map sending any $x\in \xxx$ to the mean of the probability measure $f(x)$.  
Similarly to~\eqref{eq_induced_map__probtransformer}, we denote map by $\bar{f}:x\mapsto  \mathbb{E}_{Y\sim f(x)}[Y]$.  

\begin{corollary}[Uniform Approximation of Mean]
\label{cor_mean_prediction}
Consider the setting of Theorem~\ref{theorem_MAIN__GeneralCase}.  It holds that
\[
    \max_{x\in \xxx}\,
    \Big\|
    	\bar{f}(x)
    	-
		\bar{T}(x)
    \Big\|
        <
    \epsilon_Q  + \epsilon_A.
\]
\end{corollary}
The following corollary shows that once we have approximated a probability measure-valued map $f$ by a probabilistic transformer, we can directly approximately evaluate integrals of Lipschitz integrands integrated against our approximations $\hat{T}(x)$ to each of the probability measures $f(x)$.  

\begin{corollary}[Uniform Quadrature for Lipschitz Test Functions]
\label{cor_MC_Lipschitz_test_functions}
Consider the setting of Theorem~\ref{theorem_MAIN__GeneralCase} 
and fix a Lipschitz map $g:\yyy\rightarrow \rr$.  It holds that
\[
    \max_{x\in \xxx}\,
        \left|
            \sum_{n=1}^N\,
            \sum_{q=1}^Q
                \operatorname{Softmax}_N(x^{J+1})
                    \,
                    v_{n,q}
                    g(Y_{n,q})
        -
            \mathbb{E}_{\tilde{Y}\sim f(x)}\big[
            g(\tilde{Y})
            \big]
        \right|
    <
        \operatorname{Lip}(g)(\epsilon_Q+\epsilon_A)
;
\]
where, $\{v_{n,q}\}_{n=1,\dots,N,\,q=1,\dots,Q}$ and $\{Y_{n,q}\}_{n=1,\dots,N,\,q=1,\dots,Q}$ are as in~\eqref{eq__defn_proabilitic_transformer___representation} and $x^{J+1}$ denotes the output of the penultimate layer of $\hat{T}$ given the input $x$.
\end{corollary}

Theorem~\ref{theorem_MAIN__GeneralCase} implies the following probably approximately correct (PAC) approximation guarantee for Borel-measurable functions from $\xxx$ to $\mathcal{P}_1(\rr^D)$.  The result states that any Borel measurable map from a locally compact $\xxx$ to $\mathcal{P}_1(\rr^D)$ can be approximated by a PT.

\begin{corollary}[PAC-Universal Approximation: General Case]
\label{cor_UAT_Qualitative}
Let $d,D\in \nn_+$, $\mathcal{X}$ be a locally-compact metric space, $\mathcal{Y}\subseteq \rr^D$ be closed and path-connected, $\varphi:\xxx\rightarrow \rr^d$ satisfy condition~\ref{cond_feature}, let $\sigma \in C(\rr)$ satisfies condition~\ref{condi_KL}, and let $\mu$ be a Borel probability measure on $\mathcal{X}$. 
For every Borel measurable function $f:\mathcal{X}\rightarrow \mathcal{P}_1(\mathcal{Y})$, and every $\epsilon\in (0,1]$ there is probabilistic transformer $\hat{T}:\xxx\rightarrow \mathcal{P}_1(\yyy)$ satisfying:
\[
    \mathbb{P}\biggl(
        \max_{x\in \xxx}\,
            \mathcal{W}_1\big(f(x),\hat{T}(x)\big)
        <
        \epsilon
        \biggr)
    \geq 
    1-\epsilon
    .
\]
\end{corollary}
We return to the discussion of our main quantitative universal approximation theorem.  
\begin{table}[H]
    \centering
    \ra{1.3}
    \resizebox{\columnwidth}{!}{%
    \begin{tabular}{@{}llll@{}}
    \toprule
    $\boldsymbol{\sigma}$ \textbf{Regularity} 
    & $\boldsymbol{\mathbb{P}}$\textbf{-Attention Parameter} $\boldsymbol{N}$ & \textbf{Width} & \textbf{Depth} 
    \\
    \midrule
    $\sigma = \operatorname{ReLU}$ 
    &
        $\left\lceil
            \frac{C_{\varphi:U}^{2d}
            }{C_{\varphi:L}} 
                \left(
                    \frac{
                        d
                        2^{\frac{5}{2}}
                        \big|
                            \xxx
                        \big|^{\alpha 2d}
                    }{
                        (d+1)^{\frac{1}{2}}
                        \big(
                            \omega_f^{-1}(\frac{4}{3}\epsilon_Q)
                        \big)^{\alpha}
                    }
                \right)^d
        \right\rceil
        $
    &
    $ d(N-1)+3^{d+3}\max\{d,3\}$
    &
    $ N\left(
            C_d^{(1)}
        +
            11
            \left\lceil
                    \frac{
                        {C}_d^{(2)} 
                        |\xxx|^{\alpha d /2}
                    }{
                    \Big(
                        \omega_f^{-1}\Big(
                            {C}_d^{(3)} 
                            \,
                            \frac{\epsilon_A^2}{
                            |2-\epsilon_A|}
                            \,
                            \frac{
                                N - \sqrt{N}- 1
                            }{
                                N^{3/2}
                            }
                            \Big)
                    \Big)^{\alpha d/2}
                    }
            \right\rceil
        \right)
    $
    \\
    Smooth and Not Polynomial
     & 
     $\left\lceil
            \frac{C_{\varphi:U}^{2d}
            }{C_{\varphi:L}} 
                \left(
                    \frac{
                        d
                        2^{\frac{5}{2}}
                        \big|
                            \xxx
                        \big|^{\alpha 2d}
                    }{
                        (d+1)^{\frac{1}{2}}
                        \big(
                            \omega_f^{-1}(\frac{4}{3}\epsilon_Q)
                        \big)^{\alpha}
                    }
                \right)^d
        \right\rceil
    $
    &
    $2 + d + N$
    & 
    $
    \mathscr{O}\left(
        \frac{
            C_{\varphi,d}\,
            N 
	        \big(
	            |\xxx|^{2d/\alpha}
	        \big)
        }{
            \omega_f^{-1}\Big(
                C_{d,\hat{\mu}_{\cdot}}
                \frac{
                    \epsilon_A
                }{
                    |2-\epsilon_A|
                }
                \frac{
                    N - \sqrt{N}- 1
                }{
                    N^{1+2d}
                }
                \,
                    \big(
                		\epsilon_A'
                	\big)^{2d}
                \Big)
            \Big)^{\alpha}
        }
    \right)
    ,
    $
    \\
    Polynomial Degree at-least $2$
     & 
     $\left\lceil
            \frac{C_{\varphi:U}^{2d}
            }{C_{\varphi:L}} 
                \left(
                    \frac{
                        d
                        2^{\frac{5}{2}}
                        \big|
                            \xxx
                        \big|^{\alpha 2d}
                    }{
                        (d+1)^{\frac{1}{2}}
                        \big(
                            \omega_f^{-1}(\frac{4}{3}\epsilon_Q)
                        \big)^{\alpha}
                    }
                \right)^d
        \right\rceil
    $
    &
    $2 + d + N$
    & 
    $
    \mathscr{O}\left(
        \frac{
            C_{\varphi,d}\,
            N 
	        \big(
	            |\xxx|^{(4d+2)/\alpha}
	        \big)
        }{
            \omega_f^{-1}\Big(
                C_{d,\hat{\mu}_{\cdot}}
                \frac{
                    \epsilon_A
                }{
                    |2-\epsilon_A|
                }
                \frac{
                    N - \sqrt{N}- 1
                }{
                    N^{3+4d}
                }
                \,
                \big(
            		\epsilon_A'
                	\big)^{4d+2}
                \Big)
            \Big)^{\alpha}
        }
    \right)
        $
    \\
    Continuous and Not Affine 
     & 
     $\left\lceil
            \frac{C_{\varphi:U}^{2d}
            }{C_{\varphi:L}} 
                \left(
                    \frac{
                        d
                        2^{\frac{5}{2}}
                        \big|
                            \xxx
                        \big|^{\alpha 2d}
                    }{
                        (d+1)^{\frac{1}{2}}
                        \big(
                            \omega_f^{-1}(\frac{4}{3}\epsilon_Q)
                        \big)^{\alpha}
                    }
                \right)^d
        \right\rceil
    $
    &
    $2 + d + N$
    &
    -
    \\
    \bottomrule
    \end{tabular}
    }
    \caption{General Approximation Rates.}
    \label{tab_general_approximation_rates}
\end{table}
The constant $C_{d,\hat{\mu}_{\cdot}}>0$ in Table~\ref{tab_general_approximation_rates}, depends on the embedding dimension $d$, the choice of the measures $\{\hat{\mu}_n\}_{n=1}^N$ in Theorem~\ref{theorem_MAIN__GeneralCase}, and on $\sigma$.  Furthermore, the constant $C_{\varphi,d}>0$ depends only on $\varphi$, on the embedding dimension $d$, and on $\sigma$.  Both $C_{d,\hat{\mu}_{\cdot}}$ and $C_{\varphi,d}$ are independent of $\epsilon_A$.  Likewise, the constants $C_d^{(1)},\, C_d^{(2)}, \, C_d^{(3)}>0$ depend only on the embedding dimension $d$ and on $\varphi$'s H\"{o}lder exponent; i.e.\ they depend only on $d$ and on $\alpha\in (0,1]$.

The approximation rates in Table~\ref{tab_general_approximation_rates} contain explicit dependencies on every quantity impacting the probabilistic transformer's approximation error.  If we suppress all constants independent of $\epsilon_A$ and assume that the target function is H\"{o}lder, then the rates may be simplified and easily interpreted.  
\begin{example}[H\"{o}lder Continuous Maps from $\xxx$ to $\mathcal{P}_1(\rr^D)$]
\label{ex_HolderExample}
Consider the setting of Theorem~\ref{theorem_MAIN__GeneralCase} with $\sigma=\operatorname{ReLU}$.  
Let $\varphi:\xxx\rightarrow \rr^d$ be bi-Lipschitz (i.e.\ Assumption~\ref{cond_feature} holds with $\alpha=1$) and let $f:\xxx\rightarrow \mathcal{P}_1(\rr^D)$ be $\beta$-H\"{o}lder.  
Then, for every $0 < \epsilon_A<1$, the probabilistic transformer of Theorem~\ref{theorem_MAIN__GeneralCase} $\hat{T}$ has:
\begin{enumerate}
    \item[(i)] \textbf{Width (N):} $\tilde{\mathscr{O}}\Big(
     \epsilon_Q^{-d/\beta} |\xxx|^{2d}
    \Big)$,
    \item[(ii)] \textbf{Depth:} $\mathscr{O}\Big(
        |\xxx|^{d/2}
        (2 \epsilon_A^2 - \epsilon_A)
        N^{5 d/\beta}
    \Big)$,
    \item[(iii)] \textbf{Attention Parameter $(Q)$:} $\mathscr{O}(\epsilon_Q^{-1/D})$.
\end{enumerate}
where $\tilde{\mathscr{O}}$ also suppresses all constants independent of $\epsilon_A$. 
\end{example}

Comparing the rates in Example~\ref{ex_HolderExample} with the optimal rates at which ReLU feedforward neural networks can approximate H\"{o}lder functions, derived in \cite{ShenYangZhang__OptimalApproxRatesReLUWidthDepth_2022_JPAMath}, illustrates a few key differences between our probability measure-valued approximation problem and the classical Euclidean-valued problem.  The first difference is that the network's width diverges as the quantization error $\epsilon_Q$ approaches $0$; this effect reflects the infinite dimensionality of $\mathcal{P}_1(\rr^D)$.  The following distinction between probability measure-valued deep neural approximation and Euclidean-valued deep neural approximation is the appearance of a ``third network dimension'', namely, the attention parameter $Q$.  This parameter expresses the error incurred by approximately implementing probability measures in the image space $f(\xxx)$ only using finite-dimensional parameters.  In contrast, when approximating Euclidean targets, every vector is exactly implementable with finite-dimensional parameters.  

The worst-case rates of Table~\ref{tab_general_approximation_rates} motivate the following question:
\[
\mbox{\textit{What structures on $f(\xxx)$ accelerates the approximation rates of Theorem~\ref{tab_general_approximation_rates}?}}
\]
We answer this question by ablating the role of each parameter in the probabilistic transformer.  

\subsection{Probabilistic Attention Mechanisms Defined by Few Parameters}
\label{ss_MAIN_Structured_Case__Improved_Q}

We consider structures on $f(\xxx)$ which decelerate the rate at which $Q$ and $N$ grow as the quantization error $\epsilon_Q$ tends to $0$, as compared to the rates derived in Theorem 1~\ref{theorem_MAIN__GeneralCase}.  

\begin{table}[H]
    \centering
    \ra{1.3}
    \resizebox{\columnwidth}{!}{
    \begin{tabular}{@{}lllll@{}}
    \toprule
    Structure 
    & \textbf{Width} & $\boldsymbol{\mathbb{P}}$\textbf{-Attention Parameter} $\boldsymbol{Q}$ & $\boldsymbol{\mathbb{P}}$\textbf{-Attention Parameter} $\boldsymbol{N}$ & \textbf{Theorem} 
    \\
    \midrule
    $f(\xxx)\subseteq \mathcal{P}_p(\rr^D)$; $p>1$, $d>1$
    &
    $
    \mathcal{O}\Big(
        d 
            +
        \exp\big(
            -\log(\epsilon_Q)
            \frac{2-p}{1-p}\epsilon_Q^{-D/(1-p)}
            \tilde{C}_D
        \big)
    \Big)
    $
    &
    $
    C_{D,p} \epsilon_Q^{-D + D/p}
    $
    &
    $
    \mathcal{O}\Big(
        \exp\big(
            -\log(\epsilon_Q)
            \frac{2-p}{1-p}\epsilon_Q^{-D/(1-p)}
            \tilde{C}_D
        \big)
    \Big)
    $
    &
    Theorem~\ref{theorem_pmoments}
    \\
    $f(\xxx)$ 
    Supported in $\overline{B_{\rr^D}(0,1)}$; $D>1$
    &
    $
    \mathcal{O}\Big(
        d + 
        \exp\big(
                -\log(\epsilon_Q) 
                C_D 
                \epsilon_Q^{-D}
            \big)
    \Big)
     $
     & 
     $\mathcal{O}\left(
     C^{1/D}
        \epsilon_Q^{-1/D}
     \right)$ 
     & 
    $
    \mathcal{O}\Big(
        \exp\big(
            -\log(\epsilon_Q/3) 
            C_D 
            (\epsilon_Q/3)^{-d}
        \big)
    \Big)
    $
    & 
    Theorem~\ref{theorem_unit_ball}
     \\
    $f(\xxx)$ Lower Ahflors $(C,q)$-Regular
    & -
    &
    $
    \big\lceil \epsilon_Q^{-q}\,(C^{-1}5^q) \big\rceil
    $
    & - 
    & 
    Theorem~\ref{theorem_MAIN__AhlforsRegularCase}
    \\
    \bottomrule
    \end{tabular}
    }
    \caption{Efficient Attention Mechanisms}
    \label{tab_QuantizationControl}
The constants $C,c>0$ in Table~\ref{tab_general_approximation_rates} depend only on the quantization error $\epsilon_Q$ and the constants $C_D,\tilde{C}_D>0$ constants depend only on the dimension $D$.
\end{table}

\begin{theorem}[Universal Approximation: Compact Supported Supported Measures Case]
\label{theorem_unit_ball}
Consider the setting of Theorem~\ref{theorem_MAIN__GeneralCase} with $D>1$, and suppose that for every $x\in \xxx$, the probability measure $f(x)$ is supported on the closed $D$-dimensional Euclidean unit ball $\overline{B_{\rr^D}(0,1)}$. Then, the conclusion of Theorem~\ref{theorem_MAIN__GeneralCase} holds with 
\[
    Q
        \eqdef
    \Big\lceil
    \epsilon_Q^{-D}
     \Big(\frac{4D}{D-1}\Big)^D
    \Big\rceil
        \mbox{ and }
    N \mbox{ is of the order }
    \mathcal{O}\Big(
        \exp\big(
            -\log(\epsilon_Q/3) 
            C_D 
            (\epsilon_Q/3)^{-d}
        \big)
    \Big).
\]
\end{theorem}
\begin{theorem}[Universal Approximation: Higher Moments]
\label{theorem_pmoments}
Consider the setting of Theorem~\ref{theorem_MAIN__GeneralCase} with $D>1$, and suppose that for every $x\in \xxx$, $f(x)$ is supported on the closed $D$-dimensional Euclidean unit ball $\overline{B_{\rr^D}(0,1)}$.  Then there is a probabilistic transformer $\hat{T}:\xxx\rightarrow \ppp[\rr^D][1]$ satisfying
\[
    \max_{x\in \xxx}\,
        \mathcal{W}_1\big(
            f(x)
        ,
            \hat{T}(x)
        \big)
            <
        \epsilon_A
            +
        o\big(
           \epsilon_Q
        \big)
    .
\]
Furthermore the depth and width of $\hat{T}$ are as in Table~\ref{tab_general_approximation_rates} but with 
\[
    Q
        = 
    C_{D,p} 
    \epsilon_Q^{-D + D/p}
    \mbox{ and with $N$ of the order }
    \mathcal{O}\Big(
        \exp\big(
            -\log(\epsilon_Q)
            \frac{2-p}{1-p}\epsilon_Q^{-D/(1-p)}
            \tilde{C}_D
        \big)
    \Big)
    ,
\]
and the constant $C_{D,p}>0$ depends only on $D$ and on $p$.
\end{theorem}
Next, we focus on target functions that output probability measures supported on low-dimensional structures in $\mathbb{R}^D$ and such that each output measure does not place too much mass on small subsets of those low-dimensional structures.  Following, \cite{KloecknerQuantizationAhlforsRegular2012,Quasissymmetric_dimension_Ahlfors_Bishopetal_2016,SemmesSurfaces_Fassleretal_2020} we formalize this condition using the notation of Ahlfors lower-regularity of a probability measure.  Let $C,q>0$ and $0<q\leq D$.  We call a compactly supported probability measure $\pp$ on $\mathbb{R}^D$ \textit{Ahlfors lower $(C,q)$-regular} if there is a constant $\tilde{C}\geq C$ such that: for $\mu$-almost every $y\in \operatorname{supp}(\pp)$ and every $|\operatorname{supp}(\mu)|\geq r>0$ it holds that
\[
        \tilde{C}r^q
    \leq 
        \pp\big(
            B_{\rr^D}(y,r)
        \big)
.
\]
The Riemannian volume measure on any compact Riemannian submanifold of $\rr^D$ (see \cite{AlvaroMitrea_2015_HardyAhlforsRegular_2015}) and self-similar measures on fractals (see \citep[Theorem 4.7]{Triebel_FractalsSpectra_2011}) are Ahlfors lower-regular probability measures.  The set of Ahlfors lower $(C,q)$-regular probability measures on $\mathbb{R}^D$ is denoted by $\mathcal{AP}^{C,q}(\rrD)$.  

\begin{theorem}[Universal Approximation: Ahlfors Lower Regular Case]
\label{theorem_MAIN__AhlforsRegularCase}
Consider the setting of Theorem~\ref{theorem_MAIN__GeneralCase}.  
Let $C,q>0$ and $0<q\leq D$ and suppose moreover that $f(\xxx)\subseteq \mathcal{AP}^{C,q}(\rrD)$.  Then, the conclusion of Theorem~\ref{theorem_MAIN__GeneralCase} holds with 
\[
    Q
        \eqdef 
    \big\lceil
        \epsilon_Q^{-q}\,(5^{q}C^{-1})
    \big\rceil.
\]
\end{theorem}
To illustrate the role of each constant in the lower $(C,q)$-Ahlfors regularity condition, we consider the particular case where the probability measures in $f(\xxx)$ are supported on a latent $q$-dimensional affine subspace of $\rr^D$; here, $q$ is a positive integer less than $D$.
\begin{corollary}[Uniformly-Lower Bounded Densities on Affine Subspaces]
\label{cor_lowerboundeddensity}
Let $H\subseteq \rr^D$, $q\le D$ be a positive integer, dimensional affine subspace, and let $\mu$ denote the $q$-dimensional Lebesgue measure on $H$.  
Consider the setting of Theorem~\ref{theorem_MAIN__GeneralCase} and suppose that for every $x\in \xxx$, the measure $f(x)$ is absolutely continuous with respect to $\mu$.  Suppose also that $f(x)$ is supported on $[0,1]^D\cap H$
\[
    0<
        C \eqdef 
    \inf_{x\in \xxx,u\in [0,1]^D\cap H}\,
    \Big\|
        \frac{df(x)}{d\mu}(u)
    \Big\|
    .
\]
Then, the conclusion of Theorem~\ref{theorem_MAIN__GeneralCase} holds with 
$
    Q
        \eqdef 
    \Big\lceil
        \epsilon_Q^{-q}\,
        \left(
            \frac{5^q
                \Gamma(\frac{q}{2}+1)
            }{
                \pi^{q/2}
                C
            }
        \right)
    \Big\rceil.
$
\end{corollary}

\subsection{Narrower Transformers Width - For Geometric Priors}
\label{ss_MAIN_Structured_Case}

Theorem~\ref{theorem_MAIN__AhlforsRegularCase} and Corollary~\ref{cor_lowerboundeddensity} describe a situation where the probability measures in $f(\xxx)$ are all supported on the same low-dimensional structure in $\rr^D$.  This is one possible interpretation of the ``manifold" hypothesis in machine learning \cite{VincentPascaletalManifoldDenoising2008}.  An alternative interpretation of the low-dimensional ``manifold'' hypothesis, within this paper's context, is that there is a low-dimensional structure in the $\mathcal{P}_1(\rr^D)$ containing $f(\xxx)$.  The following few results show that this alternative interpretation slows the rate at which $N$ grows as a function of the quantization error $\epsilon_Q$.

\begin{table}[H]
    \centering
    \ra{1.3}
    \resizebox{\columnwidth}{!}{
    \begin{tabular}{@{}lllll@{}}
    \toprule
    Structure 
    & Width & $\mathbb{P}$-Attention Parameter $Q$ & $\mathbb{P}$-Attention Parameter $N$ & Theorem 
    \\
    \midrule
    $f(\xxx)\subseteq \{\pp_{\theta}\}_{\theta \in [0,1]^{d_{\mathcal{M}}}}$ 
    & 
    $
    \mathcal{O}\Big(
            d 
        + 
                \epsilon_Q^{-
                    d_{\mathcal{M}}
                }
    \Big)
    $
    &
    -
    &
        $
        \mathcal{O}\big(
                \epsilon_Q^{-
                    d_{\mathcal{M}}
                }
        \big)
        $
    &
    Theorem~\ref{theorem_MAIN__LipschitzManifoldCase}
    \\
    $f(\xxx)\subseteq \mathcal{M}$,  $\mathcal{M}$: $d_{\mathcal{M}}$-dim. Manifold
    & 
    $
    \mathcal{O}\big(
            d
        +
            \epsilon_Q^{-d_{M}}
            \omega_f(|\xxx|)^{d_{M}}
    \big)
    $
    & 
    -
    & 
    $
    \mathcal{O}\big(
        \epsilon_Q^{-d_{M}}
        \omega_f(|\xxx|)^{d_{M}}
    \big)
    $
    &
    Corollary~\ref{cor_embedded_manifold}
    \\
    $\dim(f(\xxx))\leq d_{\mathcal{M}}$ 
    & 
    $
    \mathcal{O}\big(
    d + 
    \lceil \epsilon_Q^{-d_{\mathcal{M}}}
    \omega_f(|\xxx|)
    ^{d_{\mathcal{M}}}\rceil
    \big)
    $
    &
    -
    &
    $\mathcal{O}\big(
    \epsilon_Q^{-d_{\mathcal{M}}}
    \omega_f(|\xxx|)
    ^{d_{\mathcal{M}}}
    \big)$
    &
    Theorem~\ref{defn_Assouad_dimension}
    \\
    \bottomrule
    \end{tabular}
    }
    \caption{Narrower Probabilistic Transformers via Geometric Priors.}
    \label{tab_WidthControl}
\end{table}

We formalize low-dimensionality using the fractal dimension of \cite{Assouad_OG_Thesis_1979}, which is well-suited to general metric spaces.  Let $\mathcal{M}\subseteq \mathcal{P}_1(\rr^d)$ be compact and suppose that there exists constants $C_{\mathcal{M}},d_{\mathcal{M}}>0$ such that for every $\pp\in \mathcal{M}$ and every $0<r\leq |\mathcal{M}|$ the number of balls in $\mathcal{W}_1(\rrd)$ of radius at-most $r$ required to cover set $\mathcal{M}\cap B_{\mathcal{W}_1(\rr^d)}(\pp,r)$ is at-most 
\begin{equation}
\label{eq_homogenity}
    C_{\mathcal{M}}
    \big(
    r^{-1}
    |\mathcal{M}|
    \big)^{d_{\mathcal{M}}}
.
\end{equation}
\begin{definition}
[Assouad Dimension]
\label{defn_Assouad_dimension}
   The smallest $d_{\mathcal{M}}$ for which~\eqref{eq_homogenity} holds is $\mathcal{M}$'s Assouad dimension.  
\end{definition}

\begin{theorem}[Universal Approximation - Image of Finite Assouad Dimension]
\label{theorem_finite_Assouad_Dimension}
    Consider the setting of Theorem~\ref{theorem_MAIN__GeneralCase}.  Let $\mathcal{M}\subseteq \mathcal{P}_1(\rrD)$ have Assouad dimension $d_{\xxx}>0$ and let $C_{\mathcal{M}}$ be such that~\eqref{eq_homogenity} holds.  
    Then the conclusion of Theorem~\ref{theorem_MAIN__GeneralCase} holds but with $
        N
    \leq 
        C_{\mmm}\epsilon_Q^{-d_{\mathcal{M}}}|f(\xxx)|^{d_{\mathcal{M}}}
    $.  
\end{theorem}

\begin{example}[Embedding of a Compact Manifold into $\mathcal{P}_1(\rr)$]
Suppose that $M$ is a compact manifold of dimension $d_{M}\in \nn_+$.  The Whitney Embedding Theorem guarantees that there is a smooth injective map $F_1:M\rightarrow \rr^{2d_{M}+1}$ with a smooth inverse on its image.  In particular, $F_1(M)$ is contained in some $2d_M+1$-dimensional closed Euclidean ball $\overline{B(0,r)}$, for a sufficiently large radius $r>0$.  Consequentially, the quantity $C\eqdef \max_{x\in \xxx}\,\max\{\|\nabla F_1(x)\|,\|\nabla (F_1)^{-1}\circ F(x)\|\}$ is finite; whence, $F_1$ is a $C$-bi-Lipschitz map.  The construction in \citep[Example 5.5]{BenoitBertrand_2012_JTopAnal} implies that there is some $b\in \rr^{2d_{M}+1}$ such that the map $F_2:\overline{B(0,r)}\mapsto \mathcal{P}_1(\rr)$ given by
\[
    x \mapsto \frac1{2d_{M}+1} \sum_{i=1}^{2d_{M}+1}\,
    \delta_{\sqrt{d}x_i + b_i}
\]
is an isometric embedding.  Therefore, the composing map $F\eqdef F_2\circ F_1$ given by
\begin{equation}
\label{eq_Klockner_biLipschtiz_embedding}
    F(x)
        \eqdef 
    \frac1{2d_{M}+1} \sum_{i=1}^{2d_{M}+1}\,
        \delta_{\tilde{F}_1(x)}
\end{equation}
where $\tilde{F}_1(x)\eqdef \sqrt{d_{M}} F_1(x) + b$, 
is a $C$-bilipschitz embedding of $M$ into $\mathcal{P}_1(\rrd)$.  In Particular, \citep[Lemma 9.3]{Robinson_2009} that $\mathcal{M}\eqdef F(M)$ has Assouad dimension at-most $d_M$, whence Theorem~\ref{theorem_finite_Assouad_Dimension} applies.
\end{example}
\begin{corollary}[Universal Approximation - Image In An Embedded Compact Smooth Manifold]
\label{cor_embedded_manifold}
    Consider the setting of Theorem~\ref{theorem_MAIN__GeneralCase}, let $\mathcal{M}$ be a compact manifold, let $F:M\rightarrow \mathcal{P}_1(\rr^D)$ be the bi-Lipschitz embedding defined in~\eqref{eq_Klockner_biLipschtiz_embedding}, set $\mathcal{M}\eqdef F(M)$, and suppose that $f:\xxx\rightarrow \mathcal{P}_1(\rr^D)$ is such that $f(\xxx)\subseteq \mathcal{M}$.  Then, the conclusion of Theorem~\ref{theorem_MAIN__GeneralCase} holds but with $N$ of the order $ \mathcal{O}\big(
    \epsilon_Q^{-d_{M}}|f(\xxx)|^{d_{M}}
    \big)
    $.  
\end{corollary}

Another example of a low-dimensional structure containing $f(\xxx)$ occurs when $f$ maps $\xxx$ into some (possibly unknown) regular parametric family of probability measures $\mathcal{M}\eqdef \{\pp_{\theta}\}_{\theta \in [0,1]^P}$ contained in $\mathcal{P}_1(\rr^D)$; where $P\in \nn_+$.  Here, regularity means that the map $\theta \mapsto \pp_{\theta}$, from the parameter space $[0,1]^P$ to the mean of each member of the parametric family $\mathcal{M}$, is bi-Lipschitz.

\begin{theorem}[Universal Approximation: Lipschitz Manifold - via Barycentric Injectivity]
\label{theorem_MAIN__LipschitzManifoldCase}
Consider the setting of Theorem~\ref{theorem_MAIN__GeneralCase}.  
Let $D,P\in \nn_+$, $p:\rr^{D}\times \rr^P\rightarrow \rr^{d}$ be $L$-Lipschitz, $\pp\in \mathcal{P}_{2}(\rrD)$, and define $\mathcal{M}\eqdef \{\pp_{\theta}\eqdef {p_{\theta}}_{\#} \pp\}_{\theta \in [0,1]^P}$.  If $f(\xxx)\subseteq \mmm$ and if there is a $C>0$ such that the following ``bi-Lipschitz barycenter condition'' holds: for every $\theta_1,\theta_2 \in [0,1]^p$ it holds that
\begin{equation}
\label{eq_biLipshitz_condition}
        \|\theta_1-\theta_2\|
    \leq 
        C
        \big\|
            \mathbb{E}_{Y_1\sim \pp_{\theta_1}}[Y_1]
                -
            \mathbb{E}_{Y_2\sim \pp_{\theta_2}}[Y_2]
        \big\|
\end{equation}
then, the conclusion of Theorem~\ref{theorem_MAIN__GeneralCase} holds with
\[
N \eqdef 
\big\lceil
    \epsilon_Q^{-p}
    \,
    \big(
        2^p\max\{L,C\}^{2p} L^p\sqrt{p}^{p}
    \big)
\big\rceil
.
\]
Furthermore, the map $\theta \mapsto \pp_{\theta}$ is a Lipschitz homeomorphism onto its image with Lipschitz inverse.  
\end{theorem}

\section{Circumventing the Curse of Dimensionality}
\label{ss_no_curses}
One reason classical neural approaches have attained such popularity is that they bypass the curse of dimensionality on several learning tasks.  We illustrate two cases in which probabilistic transformers circumvent dimensionality's curse.  

Our first approach builds functions in $C(\xxx,\ppp[\yyy][1])$ that are efficiently approximable by an extension of the PT model.  This extension of the PT model leverages multiple probabilistic attention mechanisms.  The constructed class of functions is built by piecing together maps that feedforward neural networks can efficiently approximate (see \cite{barron1993universal,Suzuki2018ReLUBesov,chen2019efficient,SchmidtHieberAnnStat2020,lu2020deep,FloR2021}).  The second method constructs subsets of $\xxx$ on which the curse of dimensionality can always be broken when approximating any given target function $f\in C(\xxx,\ppp[\yyy][1])$.  

\subsection{Avoiding the Curse of Dimensionality by Lifting DNN Approximation Classes}
\label{sss_Functorial_efficiency}

Paralleling \cite{gribonval2019approximation}, we consider the following class of functions that deep feedforward networks can efficiently approximate but which also have uniformly bounded outputs.

\begin{definition}[{DNN Approximation Classes: $\mathcal{A}^{\sigma,r,S}(\rr^d,\rr^N)$}]
\label{defn_approximation_class_DNN}
Let $\sigma\in C(\rr)$, $r>0$ be rate parameter, $S>0$ a growth parameter, and $d,N\in \nn_+$.  
A function $f:\rr^d\rightarrow \rr^N$ belongs to the DNN approximation class $\mathcal{A}^{\sigma,r}(\rr^d,\rr^N)$ if there is a constant $C>0$ such that for every $x\in \rr^d$, every $R>0$, and every $\epsilon>0$ there is a DNN $\hat{f}\in \NN[d,N][\sigma]$ satisfying:
\begin{enumerate}
    \item \textbf{Approximable:} $
    \max_{u\in \rr^d,\, \|u-x\|\leq R}\,
    \big\|
        f(u)
            -
        \hat{f}(u)
    \big\|
        <
    \epsilon$,
    \item \textbf{$S$-Boundedness}: $\max_{x\in \rr^d}\, \|f(x)\|\leq S$,
    \item \textbf{Efficient Approximation Rate}: $\hat{f}$ has a realization of depth and width at-most $C R^r \epsilon^{-r}$.
\end{enumerate}
\end{definition}
Our construction is motivated by the following observation.  Fix $d,N\in \nn_+$, $\sigma \in C(\rr)$ and $r>0$.  The map 
$y\mapsto \delta_y$ is an isometric embedding of $\rr^D$ in $\mathcal{P}_1(\rr^D)$.  Suppose that $f$ belongs to the DNN approximation class $\mathcal{A}^{\sigma,r,S}(\rr^d,\rr^N)$ and consider the probability measure-valued map
\begin{equation}
    \label{eq_Markov_TowerMotivation}
    f(x) = \delta_{g(x)}
    .
\end{equation}
Since $f$ belongs to the DNN approximation class $\mathcal{A}^{\sigma,r,S}(\rr^d,\rr^N)$, then it can be uniformly approximated to arbitrary precision by a map of the form $x\mapsto \delta_{\hat{f}(x)}$ where $\hat{f}\in \NN[d,N][\sigma]$ with a realization of depth and width at-most $C R^r \epsilon^{-r}$.  Iterating this construction together with convex combinations yields the following class of functions, reminiscent of \cite{FloR2021}.   

\begin{definition}[Tree-Class]
\label{defn_tree_class}
Fix a valency parameter $V\in \nn_+$ with $V\geq 2$, a rate $r>0$, a growth parameter $S>0$, an activation function $\sigma \in C(\rr)$, a feature map $\varphi:\xxx\rightarrow \rr^d$ satisfying condition~\ref{cond_feature}, and a dimension $D\in \nn_+$.  
A function $f:\xxx\rightarrow \mathcal{P}_1(\rr^D)$ belongs to the tree-class of height $0$ and valency $V$, denoted by $\mathcal{T}_{0:V}^{\varphi,r,\sigma,S}(\rr^d,\mathcal{P}_1(\rr^D))$, if it is of the form
\[
    f(x) 
        \eqdef 
    \delta_{g\circ \varphi(x)}
    ,
\]
where $g\in \mathcal{A}^{\sigma,r,S}(\rr^d,\rr^D)$.  
Let $T\in \nn_+$.  A function $f:\xxx\rightarrow \mathcal{P}_1(\rr^D)$ belongs to the tree-class of height $T$ and valency $V$, denoted by $\mathcal{T}_{T:V}^{\varphi,r,\sigma,S}(\rr^d,\mathcal{P}_1(\rr^D))$, if it is of the form
\[
    f
        \eqdef 
    \sum_{v=1}^V\,
        [\operatorname{Softmax}_V\circ w\circ \varphi(\cdot)]_v
        f_v(\cdot)
    ,
\]
where $
f_1,\dots,f_V
    \in
\mathcal{T}_{T-1:V}^{\varphi,r,\sigma,S}(\rr^d,\mathcal{P}_1(\rr^D))
$ and $w\in \mathcal{A}^{r,\sigma,S}(\rr^d,\rr^V)$.
\end{definition}

Mimicking the structure of tree-class functions, we may extend the probabilistic transformer, using not one but several probabilistic attention mechanisms.  The resulting \textit{``tall probabilistic transformer''} models is illustrated in  Figure~\ref{fig_ProbalisticTransformer__DeepAttention}. 

\begin{figure}[H]
\centering
\includegraphics[width=0.6\linewidth]{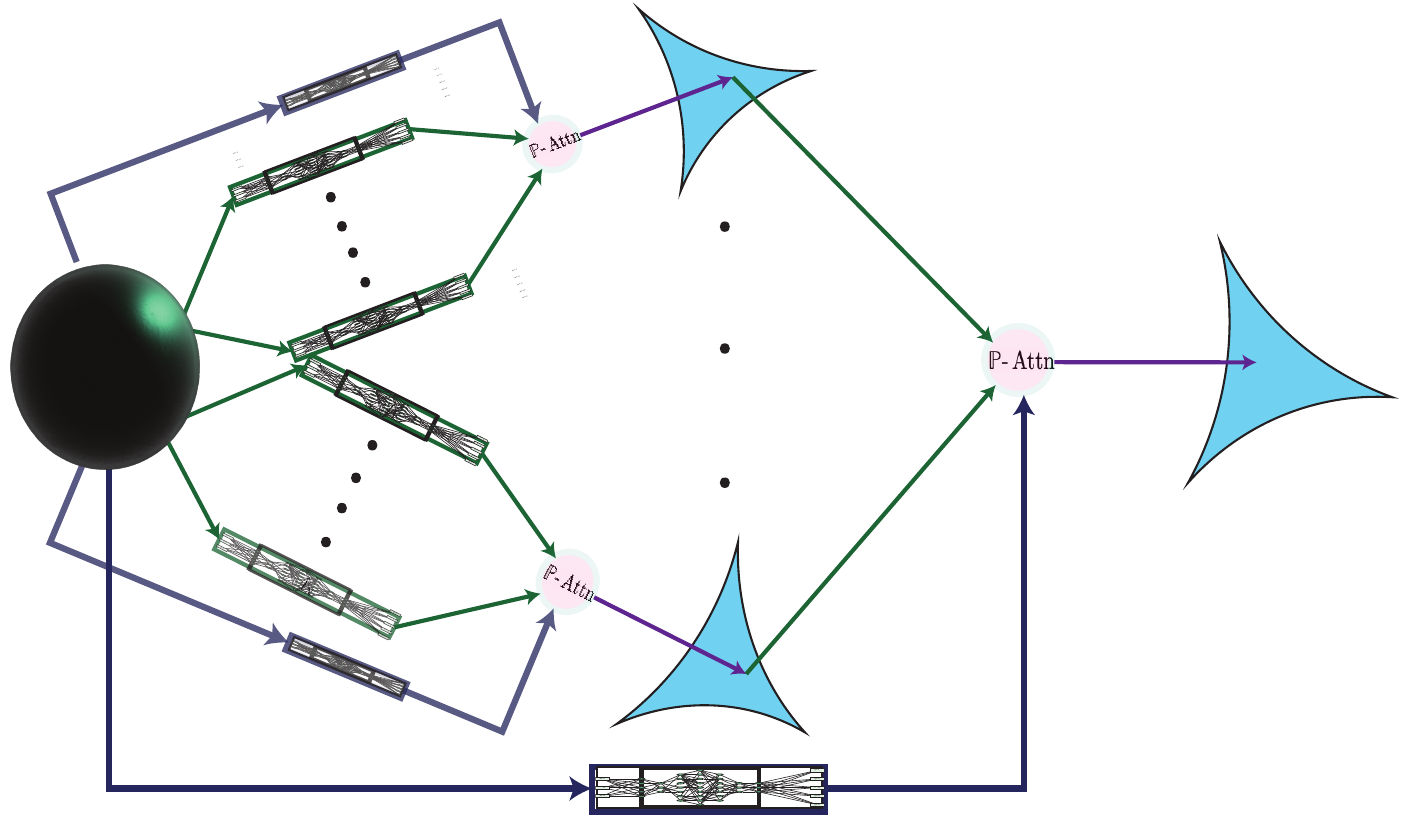}
\caption{Tall Probabilistic Transformer with height $2$.}
\label{fig_ProbalisticTransformer__DeepAttention}
\end{figure}
\begin{definition}[Tall Probabilistic Transformer]
\label{defn_tall_probabilistic_transformer}
Let $\varphi:\xxx\rightarrow \rr^d$ be a feature map satisfying condition~\ref{cond_feature}, fix an activation function $\sigma \in C(\rr)$, a dimension $D\in \nn_+$, a valency parameter $V\in \nn_+$.  
Define $\NN[\varphi:\yyy:0,V][\sigma:\star]\eqdef \NN[\varphi:\yyy][\sigma:\star]$.  
For every $T\in \nn_+$, a probabilistic transformers of height $T$ and valency $V$ is a map $f:\rr^d\rightarrow \mathcal{P}_1(\yyy)$ with representation
\[
    \hat{T}
        \eqdef
    \sum_{v=1}^V\,
        [\operatorname{Softmax}_V\circ w\circ \varphi(\cdot)]_v\,
        \hat{T}_v
\]
where $
    \hat{T}_1,
    \dots,
    \hat{T}_V
    \in
\NN[\varphi:\yyy:T-1,V][\sigma:\star]
$ and $w\in \NN[d,N][\sigma]$.  
We denote the set of all probabilistic transformers of height $T$ and valency $V$ by $\NN[\varphi:\yyy:T,V][\sigma:\star]$.  
\end{definition}
Tall probabilistic transformers can efficiently approximate tree-class functions.  

\begin{theorem}[Tall Probabilistic Transforms Efficiently Approximate Tree-Class Functions]
\label{theorem_efficient_approximation}\hfill\\
Let $\xxx$ be a compact metric space and $d,D\in \nn_+$.
Fix a valency parameter $V\in \nn_+$, $V\geq 2$, a rate $r>0$, a growth parameter $S>0$, a height parameter $T$, an activation function $\sigma \in C(\rr)$, and a feature map $\varphi:\xxx\rightarrow \rr^d$ satisfying Condition~\ref{cond_feature}.  
For every $\epsilon>0$ and each $f\in \mathcal{T}_{T:V}^{\varphi,r,\sigma,S}$
there exists a tall probabilistic transformer $\hat{T}\in \NN[\varphi:\yyy:T,V][\sigma:\star]$ 
satisfying
\begin{enumerate}
    \item \textbf{Approximation:} 
    $
    \max_{x\in \xxx}\, 
    \mathcal{W}_1\big(
        f(x)
            ,
        \hat{T}(x)
    \big)
    <\epsilon,
    $
    \item \textbf{Efficient Parameter Count}: The number of parameters determining $\hat{T}$ is at-most
    \[
    \mathscr{O}\left(
        \epsilon^{-2r}
        V^{T(1-2r)}
        \big(
            2^{-1}C_{\varphi:U}^{2} 
            |\xxx|^{2\alpha } 
        \big)^r
    \right),
    \]
where, $\mathscr{O}$ hides a positive constant independent of $\epsilon$, $d$, $D$, $|\xxx|$, $\alpha$, and of $r$.  
\end{enumerate}
\end{theorem}

\subsection{Breaking the Curse of Dimensionality by Localization}
\label{ss_MAIN_Localization}
The following result reflects the machine learning folklore that neural network models predict well \textit{``near a training dataset"}.  We make the convention that, for any $x\in \xxx$, $B_{\xxx}(x,\infty)\eqdef \xxx$.  
\begin{definition}[Localization about a Training-Set]\label{defn_localization}
Let $\xx\subseteq \xxx$ be a non-empty finite set.  
Given $\bar{x}\in \xx$ and $0\leq \delta<\infty$, $0<R\leq \infty$, define the \textit{$(R,\delta)$-localization about $\xx$ relative to $\bar{x}\in \xx$} as
\begin{equation}
    \mathbb{X}_x^{R,\delta}
        \eqdef 
    \left\{
x \in \xxx:\, d_{\xxx}\left(x,\xx\cap \overline{B_{\xxx}\left(
\bar{x},\delta
\right)}\right)\leq R
\right\}
\label{eq_localization}
.
\end{equation}
\end{definition}

Our approach reinterprets the recently proposed notion of \textit{``controlled universal approximation}", introduced in \cite{paponkratsios2021quantitative}, in which the user restricts the maximum size of the given compact subset of inputs in $\xxx$ for which a neural network approximates a given target function.  In the context of \cite{paponkratsios2021quantitative}, the size restriction on that compact allows one to bypass topological obstructions present in approximating continuous functions between Riemannian manifolds.  Instead, in this paper's context, the size and structure restrictions on the compact subset of inputs allow us to bypass the curse of dimensionality.   

\begin{corollary}[Localization Avoids the Curse of Dimensionality]
\label{cor_localization}
Consider the setting of Theorem~\ref{theorem_MAIN__GeneralCase}, suppose additionally suppose that $f:\xxx\rightarrow \mathcal{P}_1(\yyy)$ is $\beta$-H\"{o}lder continuous for some $\beta \in (0,1]$, and that for each $x\in \xxx$, $f(x)$ is lower $(C,r)$-Ahlfors regular for some $C>0$.  
Let $\mathbb{X}\subseteq \xxx$ be any non-empty finite set and fix some $x\in \mathbb{X}$.  
There is a constant $C_f>0$, depending only on $f$, such that for any $r>0$ if we define
\[
        \delta 
            \eqdef 
        C_f\,
        \epsilon^{1/\beta}
    \mbox{ and }
        R
            \eqdef 
        \min\left\{
            \epsilon^{2/\beta - 2r/ad}
        ,
            \sup\{r>0:\, 
                \# \mathbb{X} \cap B_{\xxx}(x,r) \leq \epsilon^{-r/2}
            \}
        \right\}
,
\]
then, there exist an $\hat{T}\in \NN[\varphi,D][\operatorname{ReLU}:\star]$ satisfying the uniform estimate holds on $\mathbb{X}^{R,\delta}_x$
\[
    \sup_{x\in \mathbb{X}^{R,\delta}_x}\,
        \mathcal{W}_1\big(
            f(x)
        ,   
            \hat{T}(x)
        \big)
    <
        \epsilon
    ,
\]
and such that:
\begin{enumerate}[(i)]
    \item $Q\in \mathscr{O}(\epsilon^{-r})$,
    \item $N\leq \epsilon^{-r}$,
    \item $\hat{T}$ has depth and width of the order
    $
        \mathscr{O}\big(
             \epsilon^{-r}
        \big)
    .
    $
\end{enumerate}
\end{corollary}
\section{Applications}\label{s_Applications}
Applications of the paper's main theoretical results are now considered.  The focus is on problems at the interface of approximation theory and applied probability.  A numerical illustration accompanies each application. 

We begin by describing the benchmarks and evaluation metrics used in each of our implementations and the heuristic used to train the PT model.  Implementation details are relegated to Appendix~\ref{a_NumericalScheme}.   

\paragraph{Description of Supporting Numerical Illustrations}  

In these experiments, we aim to learn a fixed ``target function" $f\in C(\xxx,\ppp[\yyy][1])$ from a set of training data.  In each experiment, we are given a (non-empty and finite) set of training inputs $\xx\subseteq \rrd$ and training outputs $\{\mu_x\}_{x \in \xx}$.  Here, $\mu_x\eqdef \frac{1}{S}\sum_{s=1}^S \delta_{X_x^s}$ where, $X_x^1,\dots,X_x^S$ are i.i.d.\ random samples drawn from some the probability measure $f(x)$, where $S=10^3$.  The ``testing" dataset is used solely for evaluation.  It consists of the $100$ pairs $(x^{\text{test}},\mu_x^{\text{test}})$ where each $x$ is sampled uniformly from $\cup_{x \in \xx} \overline{\operatorname{Ball}(x,10^{-1})}\subset \rrd$.  

\paragraph{Benchmark:} The performance of the proposed universal model $\NN[1_{\rrd},\ddd][\operatorname{ReLU},\star]$ is benchmarked against four types of learning models.  The first type consists of ``classical learning models," which are designed to predict $\rrD$-valued outputs.  These models are trained on the empirical means $\frac{1}{S}\sum_{s=1}^S{X_x^s}$.  The benchmark models include the Elastic-Net regressor of \cite{zou2005regularization} (ENET), kernel ridge regressor (kRidge), gradient boosted (see \cite{friedman2002stochastic}) random forest regressor (GBRF), and a DNN.  Each of these models maps $\rrd$ to $\rrD$.  

The second type of model maps inputs in $\rrd$ to Gaussian probability measures on $\mathbb{R}^D$.  Two such models are considered.  The first is a Gaussian process regressor (GPR).  The second model is an instance of the geometric deep network (GDN) model of \citep[Section 4.4.2]{paponkratsios2021quantitative}, which in this context, sends each input in $\rrd$ to a pair of a ``mean" vector in $\rrD$ and a $D\times D$-symmetric positive-definite ``covariance" matrix.  This transformation is implemented by deep feedforward networks in $\NN[d,d+2^{-1}d(d-1)]$ post-composed by the map $\rr[d+2^{-1}d(d-1)]\ni (x_1,x_2)\to (x_1,x_2x_2^{\top})\in \rrd\times P_{d,d}$; where $P_{d,d}$ is the set of $d\times d$-symmetric positive definite matrices.  

Analogously to \cite{lakshminarayanan2016simple}, the GDN is trained as follows.  First, for each $\mu_x$ in the training dataset, we identify a pair $(\mu_x^{MLE},\sigma_x^{MLE})$ via maximum likelihood estimation.  Then the GDN model is trained to learn the map $x\mapsto (\mu_x^{MLE},\sigma_x^{MLE})$.  

The third type of model is the mixture density network (MDN) of \cite{bishop1994mixture}.  These are the first neural network models introduced with the intent to learn conditional expectations, but no approximation theoretic foundation is available for MDNs.  These are trained similarly to the GDNs.  First, for each $x$ in the training set, the parameters of a Gaussian mixture model $\hat{\nu}_x$ are inferred using the EM algorithm.  The MDN is trained to learn a map sending each $\xx\ni x \mapsto \hat{\nu}_x \in \ppp[\yyy][2]$.  

The last benchmark is an \textit{``Oracle"} (MC-Oracle).  Here, MC-Oracle sends any $x$ in the input space to the Monte Carlo estimate of $f(x)$ obtained by independently sampling $S$ times from $f(x)$.  MC-Oracle represents an inaccessible gold standard since it requires knowledge of the unknown measure-valued function $f$.  We emphasize that such samples are unavailable to the user outside the training set.  We highlight that the predictive accuracy of MC-Oracle cannot generally be beaten but only matched.  However, the MC-Oracle model cannot be deployed in practice since it requires complete knowledge of the target function $f$.  Nevertheless, the proposed models often need less time than the MC-Oracle to generate predictions.  

All hyperparameters of the involved models are cross-validated on a large grid, and the parameters are optimized using the ADAM stochastic gradient-descent method of \cite{kingma2014adam}.  The code for all implementations and further implementation details is available at \cite{AK2021Git}. 

We use two quality metrics in each experiment to evaluate and compare each machine learning model's prediction precision and complexity/parsimony.  These quality metrics are defined as follows.  

\paragraph{Evaluation Metrics:} The prediction quality metrics compare the model's predicted Wasserstein distance to the MC-Oracle model (W1) and the difference in the predicted mean to the mean predicted by the MC-Oracle model (M).  These error metrics are first averaged over each data point in the training and testing sets.  We then report the worst-case approximation quality between these two evaluation sets.  Confidence intervals for the error distributions of W1 and M are constructed using the bias-corrected and accelerated bootstrapped confidence intervals of \cite{EfronConfidenceIntervals}.  The respective lower and upper confidence intervals about (W1) are denoted by (W1-95L) and (W1-95), and those about (M) are similarly denoted by (M-95L) and (M-95R).  All values less than $10^{-20}$ are set to $0$.  

\paragraph{Parsimony Metrics:} These later quantities are computed directly using the (practically inaccessible) MC-Oracle method.  The model complexity metrics are the number of parameters (N\_Par), the total training time (Train\_Time), and the ratio of the time required to generate predictions for every instance in the testing set over the entire time the MC-Oracle model needs to do the same ($\frac{\text{Test-Time}}{\text{MC-Oracle Test-Time}}$).  
            
\subsection{Problem 1: Generic Regular Conditional Distributions}\label{s_applications}
We first show that our architecture can approximate regular conditional distributions to arbitrary precision with arbitrarily high probability under mild integrability conditions.  Let $(\Omega,\fff,\pp)$ be a probability space, let $X:\Omega\rightarrow \xxx$ be an $\fff$-measurable random variable, and let $Y:\Omega\rightarrow \xxx$.  Then, by \citep[Theorem 6.3]{Kallenberg2002Foundations} there exits a $\pp$-a.s. uniquely determined \textit{probability kernel}
$\pp_{\cdot}^X:\xxx\times \bbb(\yyy)\rightarrow [0,\infty]$ satisfying:
\begin{equation}
    \pp_{x}^Y(B)
        =
    \pp\left(
        Y \in B
            \mid
        X = x
    \right),
    \label{eq_quick_overview_disintegration}
\end{equation}
for every Borel subset $B\subseteq \yyy$.  Thus, $x\mapsto \pp(Y\in \cdot|X=x)$ is a Borel-measurable function from $(\xxx,\bbb(\xxx))$ to $\left(\ppp(\yyy),\bbb(\ppp(\yyy))\right)$, which is called the \textit{regular conditional distribution} of $Y$ given $X$.  We point the interested reader to \citep[Pages 106-107]{Kallenberg2002Foundations} for further details).  We denote the push-forward measure of $\pp$ by $X$ using $X_{\#}\pp$.  The following result is the rigorous version of Informal Theorem~\ref{thrm_UAT_Measurable_Variant}.  

\begin{corollary}[PAC-Universal Regular Conditional Distributions]
\label{cor_Universal_Regular_Conditional_Distributions_RCP}
Consider the setup of~\eqref{eq_quick_overview_disintegration} and suppose that for every $x\in \xxx$ it holds that $\pp_x^Y\in \mathcal{P}_p(\rr^D)$ for some $p>1$.  
For every $0<\epsilon<1$ there exists a probabilistic transformer $\hat{T}:\xxx\rightarrow \ppp[\rr^D][1]$ satisfying:
$$
X_{\#}\pp\Big(
    W_1\left(
        \pp^Y_x
            ,
        \hat{T}(x)
    \right)
        \leq 
    \epsilon
\Big)
    \geq 1- \epsilon
.
$$
\end{corollary}
Once the probabilistic transformer has approximated the RCD of $Y$ given $X$, we can directly approximate any the conditional expectation $\ee[f(Y, X)|X=x]$. 
\begin{corollary}[Generic Conditional Expectations]\label{cor_Universal_Disintegration}
Consider the setting of Corollary~\ref{cor_Universal_Regular_Conditional_Distributions_RCP}.  For every Borel-measurable function $f:\yyy\times \xxx \rightarrow \rr$ which is uniformly Lipschitz
\footnote{
By uniformly Lipschitz, we mean that there is some $L>0$ satisfying
$
\left|
    f(y_1,x)
        -
    f(y_2,x)
\right|\leq L\, d_{\yyy}\left(y_1,y_2\right)
,
$
for every $x \in \xxx$ and every $y_1,y_2 \in \yyy$.  
} 
in its first argument and uniformly bounded
\footnote{
By uniformly bounded, we mean that there is some $m>0$ and some $(y^{\star},x^{\star})\in \xxx$ such that:
$\sup_{(t,x)\in \yyy\times  \xxx}|f(y,x)|\leq m |f(y^{\star},x^{\star})|<\infty$.  Note that if both $\xxx$ and $\yyy$ are compact, then if $f$ is jointly continuous, then it is necessarily uniformly bounded.  
} it holds that:
\begin{equation*}
\left|
    \ee\left[
        f(Y,X)
        \mid X = x
    \right]
    -
    \int_{y \in \yyy} 
        f(y,x)
        \big(\hat{T}(x)\big)(dy)
\right|\leq M\epsilon
,
\end{equation*}
$\pp$-a.s. for every $x \in K_{\epsilon}$; where, $M>0$ is a constant independent of $\epsilon$, $K_{\epsilon}$, $N$, and of $\{\hat{\mu}_n\}_{n=1}^{N}$.  
\end{corollary}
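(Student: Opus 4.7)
The plan is to reduce the claim to a Kantorovich--Rubinstein duality estimate combined with the Wasserstein approximation guarantee already supplied by Corollary~\ref{cor_Universal_Regular_Conditional_Distributions_RCP}. First, I would invoke the defining property~\eqref{eq_quick_overview_disintegration} of the regular conditional distribution $\pp^Y_x$: by the disintegration theorem, for the given Carath\'{e}odory integrand $f:\yyy\times \xxx\to \rr$, one has $\ee\left[f(Y,X)\mid X=x\right] = \int_{y\in\yyy} f(y,x)\,\pp^Y_x(dy)$, $\pp$-a.s.\ in $x$. The uniform boundedness of $f$ guarantees that both this integral and its analogue against the approximating measure are well-defined and finite for every $x\in\xxx$, while measurability of $f$ in its second argument ensures that the identity may be read pointwise on the conull set.

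Next, I would denote the approximating measure by $\hat{\pp}^Y_x\triangleq \sum_{n=1}^N (\operatorname{Softmax}\circ \hat{f}\circ \phi(x))_n\hat{\mu}_n$, which lies in $\ppp[\yyy][1]$ as a convex combination of elements of $\ppp[\yyy][1]$. The task then reduces to controlling
\[
\left|\int_\yyy f(y,x)\,\pp^Y_x(dy) - \int_\yyy f(y,x)\,\hat{\pp}^Y_x(dy)\right|.
\]
Since $f(\cdot,x)$ is $L$-Lipschitz by hypothesis, the rescaled map $g_x\triangleq L^{-1}f(\cdot,x)$ is $1$-Lipschitz on $\yyy$ and bounded, hence integrable against every element of $\ppp[\yyy][1]$. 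The Kantorovich--Rubinstein dual representation of $\www[1]$ then yields
\[
\left|\int g_x\,d\pp^Y_x - \int g_x\,d\hat{\pp}^Y_x\right|\leq \www[1]\!\left(\pp^Y_x,\hat{\pp}^Y_x\right).
\]

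Finally, multiplying through by $L$ and invoking the pointwise estimate $\www[1](\pp^Y_x,\hat{\pp}^Y_x)\leq \epsilon$ for $x\in K_\epsilon$ supplied by Corollary~\ref{cor_Universal_Regular_Conditional_Distributions_RCP} yields the claimed bound with $M\triangleq L$; this constant depends only on $f$ and is therefore independent of $\epsilon$, $K_\epsilon$, $N$, and $\{\hat{\mu}_n\}_{n=1}^N$. The only delicate point is that the Kantorovich--Rubinstein inequality be applied without boundary corrections; this is standard here since both measures lie in $\ppp[\yyy][1]$ and $g_x$ is a bounded $1$-Lipschitz test function, so no substantive obstacle is anticipated in executing this plan.
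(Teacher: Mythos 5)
Your proposal is correct and follows essentially the same route as the paper: identify $\ee[f(Y,X)\mid X=x]$ with $\int f(y,x)\,\pp^Y_x(dy)$ via the Disintegration Theorem, rescale $f(\cdot,x)$ into the Kantorovich--Rubinstein dual test class, and combine the duality bound with the uniform Wasserstein estimate on $K_\epsilon$ from Corollary~\ref{cor_Universal_Regular_Conditional_Distributions_RCP}. The only cosmetic difference is your normalizing constant $M=L$ with the plain $1$-Lipschitz dual class, whereas the paper rescales by $M=\max\{2,L,\sup|f|\}$ to place $\tilde f(\cdot,x)$ in the bounded-Lipschitz unit ball; both are valid and yield the claimed bound.
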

\subsubsection{Experimental Ablation of Corollary~\ref{cor_Universal_Regular_Conditional_Distributions_RCP} and of~Theorem~\ref{theorem_MAIN__GeneralCase}: Conditional Evolution of an SDE}
Fix $r,d\in \nn_+$ and consider a $r$-dimensional Brownian motion $(B_t)_{t\geq 0}$ on a probability space $(\Omega,\fff,\pp)$ equipped with the natural filtration $(\sigma\{B_s\}_{0\leq s\leq t})_t$ generated by $(B_t)_{t\geq 0}$.  For each $x \in \rrd$, let $(X_t^x)_t$ denote the unique continuous strong solution to the stochastic differential equation:
\begin{equation}
    \label{eq_SDE_Definition}
    X_t^x = x + \int_0^t \alpha(s,X_s^x)ds + \int_0^t \beta(s,X_s^x)dB_s
    ;
\end{equation}
where $x \in \rrD$, under the assumption that $\alpha\in 
C\left([0,T]\times \rrd,\rrd\right)$, $\beta\in C\left(
[0,T]\times \rrd,\rr[r\times d]
\right)$ satisfy the usual conditions, (i) and (ii) listed below, where $\rr[r\times d]$ is identified with the space of $r\times d$-matrices normed by the Fr\"{o}benius norm $\|\cdot\|_F$; which exists (for example by \citep[Theorem 8.7]{DaPratoIntroStochasticCalculusandMalliavin2014ThirdEdidtion}) under the following conditions:
there exists some $M>0$ such that for every $x_1,x_2\in \rrd$ and every $t \in [0,T]$ the following holds
\begin{enumerate}[(i)]
    \item $\left(\|\alpha(t,x_1)-\alpha(t,x_2)\|^2 + \|\beta(t,x_1)- \beta(t,x_2)\|^2_F\right)^{\frac1{2}} \leq M\|x_1-x_2\|$,
    \item $\left(
    \|\alpha(t,x_1)\|^2
        +
    \|\beta(t,x_1)\|^2_F
    \right)^{\frac1{2}}\leq M \left(1+\|x_1\|\right)^{\frac1{2}}$.
\end{enumerate}
Furthermore, conditions (i) and (ii) together with \citep[Propositions 8.15 and 8.16]{DaPratoIntroStochasticCalculusandMalliavin2014ThirdEdidtion}
and \citep[Theorem 6.9]{VillaniOptTrans} imply that the map from $[0,\infty)\times \rr^d$ to the subset $\ppp[\rrd][2]$ of the $1$-Wasserstein space $(\mathcal{P}_1(\rrd),\mathcal{W}_1)$ defined by
\begin{equation}
\label{eq_conditional_expectation_approximation}
    (t,x)
        \mapsto 
    \pp\left(X_t^x\in \cdot \right)
\end{equation}
is a continuous function.  Thus, our results guarantee that the map of~\eqref{eq_conditional_expectation_approximation} can be approximated uniformly by the probabilistic transformer.  
Let us examine the implications of the rates in Table~\ref{tab_general_approximation_rates} using a numerical implementation of the probabilistic transformer model, trained to approximate~\eqref{eq_conditional_expectation_approximation}.  Set $d=1,\xxx=[0,1]\times [-1,1]$, and $\varphi=1_{\rr}$.  

We first uniformly sample a finite set of training inputs $\mathbb{X}_{\operatorname{train}}$ in $\xxx$.  For each $(t,x)\in \mathbb{X}_{\operatorname{train}}$ we simulate the random variable $X_t^x$ by first discretizing the stochastic differential equation~\eqref{eq_SDE_Definition} using the Euler–Maruyama method and then approximating the $\mathbb{E}[X_t^x]$ and $\mathbb{P}(X_t^x \in \cdot)$ using standard Monte Carlo sampling.  A probabilistic transformer of a fixed depth and width is then trained to minimize the objective function
\[
    \sum_{(t,x)\in \mathbb{X}_{\operatorname{train}}}
    \,
    \mathcal{W}_1\Big(
        \hat{T}(t,x)
            ,
        \hat{\mathbb{P}}(X_t^x \in \cdot)
    \Big)
,
\]
where $\hat{\mathbb{P}}(X_t^x \in \cdot)$ denotes our approximation of $\mathbb{P}(X_t^x \in \cdot)$, as outlined above.  We then uniformly sample a distinct finite subset $\mathbb{X}_{\operatorname{test}} \subseteq \xxx$, all of whose elements do not belong to $\mathbb{X}_{\operatorname{train}}$.  For each $(t,x)\in \mathbb{X}_{\operatorname{test}}$ we then generate approximate $\mathbb{E}[X_t^x]$ and $\mathbb{P}(X_t^x \in \cdot)$ using the combined Euler–Maruyama method and Monte Carlo sampling approach above.  

The performance of the probabilistic transformer, which does not know the drift $(\alpha)$ and diffusion $(\beta)$ coefficients defining the stochastic differential equation~\eqref{eq_SDE_Definition}, is compared to the Euler–Maruyama method and Monte Carlo sampling ``Oracle method'' described above, on the testing dataset $\{\big((t,x),\hat{\mathbb{P}}(X_t^x \in \cdot)\big):(t,x)\in \mathbb{X}_{\operatorname{test}}\}$ using the following performance metrics
\[\resizebox{1\hsize}{!}{$
        \boldsymbol{W}_1^{\operatorname{test}}
            \eqdef 
        \sum_{(t,x)\in \mathbb{X}_{\operatorname{test}}}
            \,
            \mathcal{W}_1\Big(
                \hat{T}(t,x)
                    ,
                \hat{\mathbb{P}(X_t^x \in \cdot)}
            \Big)
    \mbox{ and }
        \boldsymbol{M}^{\operatorname{test}}
            \eqdef 
        \sum_{(t,x)\in \mathbb{X}_{\operatorname{test}}}
            \,
        \Big(
            \mathbb{E}_{Y\sim \hat{T}(t,x)}[Y]
                -
            \mathbb{E}_{\tilde{Y}\sim \hat{\mathbb{P}(X_t^x \in \cdot)}}[\tilde{Y}]
        \Big)^2
$}\]

The performance of the trained probabilistic transformer model for different specifications of the drift parameter ($\alpha$) and the diffusion parameter ($\beta$) are reported in Table~\ref{tab_Approximation_SDE} below.  The time required to train the probabilistic transformer model $T_{\operatorname{tr}}$ on the training set $\mathbb{X}_{\operatorname{train}}$ and the time required to evaluate its predictions on the test set $\mathbb{X}_{\operatorname{test}}$ are also reported.  

The rates in Table~\ref{tab_general_approximation_rates} are reflected by Table~\ref{tab_Approximation_SDE} since the probabilistic transformer produces a larger approximation error (relative to the performance metric $\boldsymbol{W}_1^{\operatorname{test}}$) for more complicated drift and diffusion specifications.  Less regular target functions require deeper and wider probabilistic transformers to be approximated with comparable accuracy to more regular target functions  
\begin{table}[H]
    \centering
    \caption{Test-Set Prediction of~\eqref{eq_SDE_Definition}}
	\ra{1.3}
    \resizebox{\columnwidth}{!}{
    \begin{tabular}{@{}lllrrr@{}}
			\toprule
			Drift $(\alpha)$ & Volatility $(\beta)$ & 
			$
			\boldsymbol{W}_1^{\operatorname{test}}
			$ &  $
			\boldsymbol{M}^{\operatorname{test}}
			$ &  $\text{T}_{\text{tr}}$ & $\text{T}_{\text{eval}}$
			\\
			\midrule
			$.1$ &  $.01$ 
			& 
			2.17E-05 &    1.00E-03 &              
			4.00E+01 &          6.42E-03 
			\\
			$.1x$ & $.1x$ & 
			1.54E-03 &    8.82E-04 &                    5.42E+01 &          4.33E-03 
			\\
			$\sin(t+x)-xe^{-t}$ & $(|\cos(t+x)|+e^{-t})x$ 
		& 2.34E-01 &  5.12E-03 &           
		5.50E+01 &        4.92E-03
		\\
			\bottomrule
		\end{tabular}
    }
    \label{tab_Approximation_SDE}
\end{table}
Next, we investigate the effect of dimensionality described in Table~\ref{tab_general_approximation_rates} on the probabilistic transformer's performance.  For this, we repeat the above experiment but replacing~\eqref{eq_SDE_Definition} with
\begin{equation}
    \label{eq_SDE_Definition__fractional}
    X_t^x = x + \int_0^t \alpha(s,X_s^x)ds + \int_0^t \beta(s,X_s^x)dB_s^H
    ,
\end{equation}
where $(B_t^H)_{t\geq 0}$ is a fractional Brownian motion with Hurst parameter $H \in (1/2,1)$.  As shown in \cite{CarmonaCoutin_1998_fBM_nonMarkov,DuncanDuncan__2000_StochCalfBM}, this means that the increments of the fractional Brownian motion $(B_t^H)_{t\geq 0}$ are auto-correlated, which implies that both $(B_t^H)_{t\geq 0}$ and $X_t^x$ are non-Markovian stochastic processes.  Whence $\mathbb{P}(X_t^x\in \cdot)$ depends on the entire (infinite) path of the process $X_t^x$ and not only on $(t,x)$.  The findings of the analogous numerical experiment, mutatis mutandis, are reported in Table~\ref{tab_Approximation_fSDE}.  
\begin{table}[H]
    \centering
    \caption{Test-Set Prediction of~\eqref{eq_SDE_Definition__fractional}}
	\ra{1.3}
    \resizebox{\columnwidth}{!}{
    \begin{tabular}{@{}lllrrr@{}}
			\toprule
			Drift $(\alpha)$ & Volatility $(\beta)$ & 
			$
			\boldsymbol{W}_1^{\operatorname{test}}
			$ &  $
			\boldsymbol{M}^{\operatorname{test}}
			$ &  $\text{T}_{\text{tr}}$ & $\text{T}_{\text{eval}}$
			\\
			\midrule
			$.1$ &  $.01$ &
			5.21E-02 &  1.56E-03 &          
			5.72E+01 &        3.75E-03
			\\
			$.1x$ &  $.1x$ 
			& 5.27E-02 &  1.47E-03 &          
			5.03E+01 &        4.42E-03
			\\
			$\sin(t+x)-xe^{-t}$ & $(|\cos(t+x)|+e^{-t})x$ &
			1.25E-01 &  4.32E-03 &           
			4.80E+01 &        4.28E-03 
			\\\bottomrule
		\end{tabular}
	}
    \label{tab_Approximation_fSDE}
\end{table}
In this experiment, the process $(X_t^x)_{t\ge 0}$ is non-Markovian; thus, the distribution $\mathbb{P}(X_t^x\in \cdot )$ depends on the entire path realized by the process $(X_s^x)_{0\le s<t}$.  Therefore, the drop in predictive accuracy from Table~\ref{tab_Approximation_SDE} to~\ref{tab_Approximation_fSDE} is likely due to the fact that the trained PT model only used on the realized value of $X_0$ and on the terminal time $t$ as inputs.  

\subsection{Problem 3: ``A Generic Expression of Epistemic Uncertainty"}
\label{ss_Problem_2}

Consider a family of learning models $\big\{\hat{f}_{\theta}:\, \theta \in \Theta\big\}\subseteq C(\xxx,\yyy)$ parameterized by a compact metric space $\Theta$ with metric $d_{\Theta}$; $\Theta$ is called the learning model's \textit{parameter space}.  Randomized algorithms are often used to select a learning model's parameters.  Examples include stochastic optimization algorithms \cite{kingma2014adam,Rosasco2020StochasticProximalDescent}, the randomization of a reservoir computer's hidden layers \cite{LukasJPLyudmilaRiskboundsReservoir2020JMLR}, randomized training of neural ODE-based models \cite{TeichmannCuchieroLarsson2020,herrera2021optimal}, or dropout algorithms used to improve a deep learner's generalizability \cite{srivastava2014dropout}.  The randomized algorithm defines a $\Theta$-valued random variable $\vartheta$ on some auxiliary probability space $(\Omega,\fff,\pp)$, and a learning model is chosen by sampling $\vartheta$.  

In what follows, we metrize the product space $\xxx\times \Theta$ via the metric $d_{\xxx\times \Theta}((x_1,\theta_1),(x_2,\theta_2))\eqdef d_{\xxx}(x_1,x_2)+d_{\Theta}(\theta_1,\theta_2)$.  The following regularity conditions are required.  
\begin{condition}\label{regularity_condition_randomization}
Assume that $\hat{f}$ and $\vartheta$ satisfy the following:
\begin{enumerate}[(i)]
    \item $\hat{f}:\xxx\times \Theta \ni (x,\theta)\mapsto \hat{f}_{\theta}(x)\in \yyy$ is uniformly continuous with modulus of continuity $\omega_{\hat{f}}$,
    \item There are $y_0\in \yyy$ and $1<q<\infty$ such that, for every $x \in \xxx$: $\sup_{\theta \in \Theta} \ee\left[d_{\yyy}(\hat{f}_{\theta}(x),y_0)^{q}\right]<\infty$.
\end{enumerate}
\end{condition}
\begin{proposition}[{Continuity of the Map $x\mapsto \pp (\hat{f}_{\vartheta}(x)\in \cdot )$}]
\label{prop_MOTIVATION_ex_MOTIVATION_CNTFUNCTIONS_2_Randomized_Parameters}
Suppose that condition~\ref{regularity_condition_randomization} holds.  The measure-valued function $\xxx\ni x \mapsto \pp\big(\hat{f}_{\vartheta}(x)\in \cdot \big)$ has image in $\ppp[\yyy][q]$ and it is uniformly continuous with modulus of continuity $\omega_{\hat{f}}$; i.e.\ for any $x_1,x_2 \in \xxx$ it holds that
\[
    W_1 \left(
    \pp(\hat{f}_{\vartheta}(x_1)\in\cdot)
    ,
    \pp(\hat{f}_{\vartheta}(x_2)\in\cdot)
    \right)
    \leq 
    \omega_{\hat{f}}\left(
    d_{\xxx}\left(x_1,x_2\right)
    \right)
.
\] 
\end{proposition}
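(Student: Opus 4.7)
The plan is to verify both claims—that the map has image in $\ppp[\yyy][q]$ and that it satisfies the quantitative Wasserstein estimate—by exploiting a single ``synchronous'' coupling built from the randomization $\vartheta$ itself.

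First, I would handle the integrability (image in $\ppp[\yyy][q]$) claim. Fix $x \in \xxx$ and the reference point $y_0 \in \yyy$ from Condition~\ref{regularity_condition_randomization}(ii). Since $\hat{f}_{\vartheta}(x)$ is a measurable function of the random element $\vartheta$, its pushforward $\pp(\hat{f}_{\vartheta}(x)\in\cdot)$ is a Borel probability measure on $\yyy$. By the change-of-variables formula and Condition~\ref{regularity_condition_randomization}(ii),
\begin{equation*}
\int_{\yyy} d_{\yyy}(y,y_0)^q\,\pp(\hat{f}_{\vartheta}(x)\in dy)
= \ee\!\left[d_{\yyy}(\hat{f}_{\vartheta}(x),y_0)^q\right]
\leq \sup_{\theta \in \Theta}\, d_{\yyy}(\hat{f}_{\theta}(x),y_0)^q < \infty,
\end{equation*}
so $\pp(\hat{f}_{\vartheta}(x)\in\cdot)\in\ppp[\yyy][q]\subseteq\ppp[\yyy][1]$.

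Next, for the uniform continuity estimate, fix $x_1,x_2\in\xxx$ and let $\mu_i \triangleq \pp(\hat{f}_{\vartheta}(x_i)\in\cdot)$. The key construction is the joint law $\pi$ on $\yyy^2$ obtained as the pushforward of $\pp$ under the map $\omega \mapsto (\hat{f}_{\vartheta(\omega)}(x_1),\hat{f}_{\vartheta(\omega)}(x_2))$. A direct check of marginals (applying each coordinate projection and using the pushforward identity) shows $\pi\in\operatorname{Cpl}(\mu_1,\mu_2)$. Therefore, by the definition of $\www[1]$ and another change of variables,
\begin{equation*}
\www[1](\mu_1,\mu_2) \;\leq\; \int_{\yyy^2} d_{\yyy}(y_1,y_2)\,d\pi(y_1,y_2)
\;=\; \ee\!\left[d_{\yyy}(\hat{f}_{\vartheta}(x_1),\hat{f}_{\vartheta}(x_2))\right].
\end{equation*}

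Finally, I would apply Condition~\ref{regularity_condition_randomization}(i): since $\hat{f}:\xxx\times\Theta\to\yyy$ is uniformly continuous with modulus $\omega_{\hat{f}}$ relative to the product metric $d_{\xxx\times\Theta}$, for every $\omega\in\Omega$ we have the pointwise bound
\begin{equation*}
d_{\yyy}(\hat{f}_{\vartheta(\omega)}(x_1),\hat{f}_{\vartheta(\omega)}(x_2))
\leq \omega_{\hat{f}}\bigl(d_{\xxx\times\Theta}((x_1,\vartheta(\omega)),(x_2,\vartheta(\omega)))\bigr)
= \omega_{\hat{f}}(d_{\xxx}(x_1,x_2)).
\end{equation*}
The right-hand side is deterministic, so taking expectation and combining with the previous display yields the claimed bound. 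There is no genuine obstacle in this proof; the only care needed is (a) to verify that the candidate $\pi$ is genuinely a coupling (routine pushforward check) and (b) to make sure $\omega_{\hat{f}}$ can be regarded as a modulus for $\hat{f}$ with respect to the chosen product metric on $\xxx\times\Theta$, which is guaranteed by how $d_{\xxx\times\Theta}$ is defined in the paragraph immediately preceding the statement.
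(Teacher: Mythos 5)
Your proposal is correct and follows essentially the same route as the paper: the moment bound from Condition~\ref{regularity_condition_randomization}(ii) via change of variables gives the image in $\ppp[\yyy][q]$, and the synchronous coupling $\omega\mapsto(\hat{f}_{\vartheta(\omega)}(x_1),\hat{f}_{\vartheta(\omega)}(x_2))$ combined with the modulus $\omega_{\hat{f}}$ on $\xxx\times\Theta$ (with zero $\Theta$-displacement) yields the Wasserstein estimate. Your version is in fact slightly more carefully written than the paper's, which states the same coupling argument with some notational slippage.
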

\begin{example}
\label{ex_motivational_example_covered}
If $\xxx$ and $\Theta$ are compact, then\footnote{This follows from \citep[Tychonoff Product Theorem (Theorem 37.3)]{munkres2014topology} and by the fact that $d_{\xxx\times \Theta}$ metrizes the product topology on $\xxx\times \Theta$.} so is $\xxx\times \Theta$; moreover, in this case the Heine-Cantor Theorem \cite[Theorem 27.6]{munkres2014topology} implies that every $\hat{f}\in C(\xxx\times \Theta ,\yyy)$ is uniformly continuous.  
\end{example}
\begin{corollary}[Universal Epistemic Uncertainty Models]\label{cor_learnability}
Assume the setting of Proposition~\ref{prop_MOTIVATION_ex_MOTIVATION_CNTFUNCTIONS_2_Randomized_Parameters}, that $\xxx$ and $\yyy$ are compact, and that conditions~\ref{condi_KL} and~\ref{cond_feature} are met by $\sigma$ and $\varphi$, respectively.  Then, there is a probabilistic transformer $\hat{T}:\xxx \rightarrow \mathcal{P}_1(\yyy)$ satisfying:
\[
    \max_{x\in \xxx}\,
        \mathcal{W}_1\left( \pp\left(\hat{f}_{\vartheta}(x)\in \cdot \right),\hat{T}(x)\right)<\epsilon
.
\]
\end{corollary}
\subsubsection{Numerical Illustration: Learning Stochasticity from MC-Dropout}\label{ss_predicting_dropout}
We consider DNNs with parameters randomized by the \textit{MC-dropout} regularization algorithm of \cite{srivastava2014dropout}.  
In this experiment, the user is provided with an exogenous DNN $\tilde{f}\in C(\rrd,\rr)$ defined by:
\[
    \tilde{f}(x)=A_J f^{(J-1)}_x + b_J , \qquad 
    f^{(j)}_x\eqdef A_{j-1} f^{(j-1)}_x + b_{j},
    \qquad f^{(0)}_x \eqdef x.
\]
The weights and biases of the model are drawn randomly from a standard normal distribution, then fixed throughout the experiment.  

The MC-dropout algorithm randomly sets some of $\tilde{f}$'s parameters to $0$.  Formally, the user is given random matrices $B_1,\dots,B_J$ (of the same respective dimension as the $A_1,\dots,A_J$), where each $B_i$ is populated with i.i.d.\  Bernoulli entries.  For each input $x \in \rrd$ we then define
$$
Y_x = 
(B_J\odot A_J)f^{(J-1)}_x + b_J , \qquad 
f^{(j)}_x\eqdef (B_{j-1}\odot A_{j-1}) f^{(j-1)}_x + b_{j},
\qquad f^{(0)}_x \eqdef x;
$$
where $\odot$ denotes the Hadamard product defined on matrices of the same dimensions by $(B\odot A)_{i,j}\eqdef B_{i,j}A_{i,j}$.  In this experiment, each learning model's goal is to learn the probability measure-valued function $x \mapsto \pp(Y_x\in \cdot)\in \ppp[\rr][1]$. 
\begin{table}[H]
\centering
\caption{Quality Metrics; d:500, D:1, Depth:1, Width:5, Dropout rate: 0.1.}
\label{tabl_MC2}
\ra{1.3}
\resizebox{\columnwidth}{!}{
\begin{tabular}{@{}lrrllllrrr@{}}
\toprule
{} &      PT &  MC-Oracle &     ENET &   KRidge &     GBRF &      DNN &      GPR &      GDN &      MDN \\
\midrule
W1-95L                        & 1.03e-06 &          0 &        - &        - &        - &        - & 0.0001 &    0.948 &   0.039 \\
W1                            & 2.09e-06 &          0 &        - &        - &        - &        - &  0.001 &    0.976 &   0.043 \\
W1-95R                        &  3.9e-06 &          0 &        - &        - &        - &        - &  0.004 &     1.050 &   0.046 \\
M-95L                         & 8.33e-07 &   0 &   0.001 & 2.54e-08 & 0.0003 &  0.002 &        0 &   0.014 &   0.039 \\
M                             & 2.29e-06 &   0 &  0.002 & 0.0003 & 0.001 &  0.003 &   0.001 &   0.0194 &   0.047 \\
M-95R                         & 4.79e-06 &   0 &   0.003 & 0.001 &  0.002 &  0.004 &  0.004 &    0.024 &   0.056 \\
N\_Par                         & 1.55e+05 &          0 &      200 &        200 & 3.98e+06 & 1.41e+05 &        - & 1.41e+05 & 4.66e+05 \\
Train\_Time                    &     15.1 &       1.02 & 1.62e+09 &     1.12 &     14.2 &     11.6 &    0.889 &     12.5 &    276 \\
$\frac{\text{Test-Time}}{\text{MC-Oracle Test-Time}}$ &    0.26 &          1 &  0.001 &    0.14 &  0.01 &    0.23 &  0.003 &    0.20 &   0.13 \\
\bottomrule
\end{tabular}
}
\end{table}

Table and~\ref{tabl_MC2} show that the probabilistic transformer reflects the information in the training data more efficiently than the classical learning models since it achieves a good lower value of M.  Moreover, it does so while simultaneously achieving lower a lower W1 value.  Thus, the PT architecture is well-suited for predicting the uncertainty arising from MC-dropout.

\subsection{Learning Heteroscedastic Noise}
\label{ss_Numerical_Hetero_Noise}
The next example concerns heteroscedastic regression.  Heteroscedasticity is a common phenomenon, and it is particularly central to econometrics (e.g., \cite{engle1982autoregressive} and \cite{mcculloch1985miscellanea}).  
\begin{example}[Approximating Heteroscedastic Noise]\label{ex_heteroscedastic_regression}
Typical non-linear regression problems are interested in inferring an unknown function $f\in C(\rrd,\rrD)$ from a set of noisy observations:
\begin{equation*}
    \{Y_x = f(x) + \epsilon_x\}_{x\in\xx}
    ,
\end{equation*}
where $\xx\subset \rrd$ is a (finite non-empty) dataset and $\{\epsilon_x\}_{x \in \xx}$ are independent and integrable $\rrD$-valued random vectors with mean $0$.  A key point here is that the random variables $\{\epsilon_x\}_{x\in \xx}$ are not required to be identically distributed.  Since each $\epsilon_x$ has mean $0$, such non-linear regression tasks are interested in learning the map:
\begin{equation}
    \xx\ni x \mapsto \mathbb{E}\left[Y_x\right]=f(x) \in \rrD
\label{eq_vanilla_regression}
.
\end{equation}
However, approximation of~\eqref{eq_vanilla_regression} need not provide any information about the noise $\epsilon_x$.  Therefore, instead of approximating~\eqref{eq_vanilla_regression}, it is more informative to instead approximate:
\begin{equation}
    \xx\ni x \mapsto \pp\left(f(x)+\epsilon_x\in \cdot\right) \in \ppp[\rrD][1]
    .
    \label{eq_learning_noise_nonboring_regression}
\end{equation}
NB, the approximation in $1$-Wasserstein distance of~\eqref{eq_learning_noise_nonboring_regression}, uniformly for every $x \in \xx$, implies the approximation of~\eqref{eq_vanilla_regression}, uniformly for every $x \in \xx$.  
\end{example}

Consider the setting of Example~\ref{ex_heteroscedastic_regression}  and assume that the map $[0,1]^d\ni x \mapsto \pp\left(f(x)+\epsilon_x\ni \cdot \right)$ belongs to $\ppp[\rr][1]$; where each $\epsilon_x$ is distributed as a multivariate Laplace random-variable with mean $0$ and variance $\|x\|$ and where $f$ is an exogenously given DNN in $\NN[d,1]$.  In this experiment, $f$'s weights and biases are generated randomly by sampling independently and uniformly from $[-2^{-1},2^{-1}]$, and then they are left fixed.  

Unlike in the previous experiments, in this experiment, we can explicitly compute the law of $f(x)+\epsilon_x$ in closed form.  Therefore, we can explicitly compare the predictive capabilities of MC-Oracle to the probabilistic transformer model.  We emphasize that the law of $f(x)+\epsilon_x$ is not directly available to the user.  As reflected in Table~\ref{tab_hetero_skedestic_learning_d_100}, the inaccessible MC-Oracle method offers a better prediction of the law.  Nevertheless, the PT model offers the best overall performance with respect to the (W1) and (M) performance metrics. 

\begin{table}[H]
    \centering
    \caption{Quality Metrics: $d=100$, $Depth=1$, $Width=100$.}
    \label{tab_hetero_skedestic_learning_d_100}
\ra{1.3}
\resizebox{\columnwidth}{!}{
\begin{tabular}{@{}lrrllllrrr@{}}
    \toprule
    {} &      PT &  MC-Oracle &     ENET & KRidge &     GBRF &      DNN &    GPR &      GDN &      MDN \\
    \midrule
    W1-95L                        &    0.907 &          0 &        - &      - &        - &        - &    196 &      128 &     8.87 \\
    W1                            &     1.03 &          0 &        - &      - &        - &        - &    197 &      129 &      9.1 \\
    W1-95R                        &     1.19 &          0 &        - &      - &        - &        - &    198 &      129 &     9.78 \\
    M-95L                         &     23.1 &       23.5 &     22.6 &   22.4 &     21.9 &     22.2 &   22.2 &     21.7 &     23.3 \\
    M                             &     26.1 &       26.2 &     25.1 &   25.3 &     25.1 &     25.3 &   26.1 &     25.2 &     26.1 \\
    M-95R                         &     30.5 &         30 &     29.6 &   29.3 &     27.9 &     29.1 &   29.1 &     29.2 &     29.8 \\
    N\_Par                         & 2.11e+05 &          0 &      200 &      0 & 1.45e+05 & 6.06e+04 &      0 & 6.06e+04 & 6.33e+05 \\
    Train\_Time                    & 5.44e+03 &         14 & 1.62e+09 &   2.31 &     2.09 &     59.7 &   8.08 &     59.6 &    0.166 \\
    $\frac{\text{Test-Time}}{\text{MC-Oracle Test-Time}}$ &      0.4 &          1 & 0.000879 &  0.026 &  0.00224 &    0.246 & 0.0779 &    0.242 & 2.41e+05 \\
    \bottomrule
    \end{tabular}    
    }
\end{table}
 
Table~\ref{tab_hetero_skedestic_learning_d_100} shows that the PT offers similar accuracy to the other machine learning models when predicting $f$.  However, it also shows that the PT does so while simultaneously showing the best prediction of the probability measure-valued map $x\mapsto \mathbb{P}(f(x)+\epsilon_x\in \cdot)$.  
It also shows that unlike the other $\ppp[\rr][1]$-valued models, the PT's performance does not degrade in high-dimensional settings.  Thus, the PT model provides a viable means of learning probability measure-valued functions even in high-dimensional settings.  

\subsubsection{An Example: Learning the Law of Extreme Learning Machines}
\label{sss_Extreme_Learning_Machine}
We now use Corollary~\ref{cor_learnability} to show that the probabilistic transformer model can learn the behaviour of a standard class of randomized deep neural networks.  These randomized deep neural network models are not trained using conventional stochastic gradient descent schemes.  Instead, all but the last neural network layer's weights and biases are generated randomly, and only the network's final layer's parameters are trained.  This final layer is trained using the \textit{ridge regression} method of \cite{hoerl1970ridge}.  This is an instance of an \textit{extreme learning machine} introduced by \cite{ELMSHuangBabri2004}, which has found extensive theoretical study since then (see  \cite{ELMsHuangQuinYuMao2006} and \cite{LukasJPLyudmilaRiskboundsReservoir2020JMLR}).  The model is now formalized.  

Fix some $J,W,M\in \nn_+$ and let $\Theta\eqdef [-M,M]^{W[(d+1)+J(W+1)]}$.  Identify each $\theta \in \Theta$ with $(A_J,b_J,\dots,A_1,b_1)$ where if $j>1$ then $A_j$ is a $d_j\times d_{j-1}$-matrix with coefficients in $[-M,M]$, $b_j \in \rr[W]$, $d_j\eqdef W$ if $j>0$ and $d_0\eqdef d$.  Fix an activation function $\sigma \in C(\rr)$.   
Each $\theta\in  \Theta$ defines a \textit{feature map} $\hat{\varphi}_{\theta}\in C(\rrd,\rr[W])$ via:
\begin{equation*}
    \hat{\varphi}_{\theta}(x)\eqdef x^{(J)},
\qquad
x^{(j)}\eqdef \sigma\bullet (A_{j-1} x^{(j-1)} + b_{j}),
\qquad x^{(0)} \eqdef x
.
\end{equation*}

Fix a training dataset $\xx\subset \rrd$ (i.e.\ a non-empty finite set) and a hyperparameter $\lambda>0$, let $Y^{\xx}$ be the $\#\xx\times D$-matrix whose rows are $Y^{\xx}_{\tilde{x}}\eqdef f(\tilde{x})$, indexed via $\tilde{x}\in \xx$, and let $X^{\xx:\theta}$ be the $\# \xx\times W$-matrix whose rows are $X^{\xx:\theta}_{\tilde{x}}\eqdef \hat{\varphi}(\tilde{x})$, for $\tilde{x}\in \xx$.  Define the learning-model class $\{\hat{f}_{\theta}:\theta \in \Theta\}$ through the associated ridge-regression solution operator as:
\begin{equation}
    \hat{f}_{\theta}(x) \eqdef 
\hat{\varphi}_{\theta}(x)
\left(
    (X^{\xx:\theta})^{\top}X^{\xx:\theta} + \lambda I_{W}
\right)^{-1}
(X^{\xx:\theta})^{\top}
Y^{\xx}
\label{eq_extreme_learning_machines}
.
\end{equation}
\begin{example}[The Law of Extreme Learning Machines is Approximable]\label{cor_learnability_of_ExLMs}
In the setting of~\eqref{eq_extreme_learning_machines}, let $\vartheta$ be a uniform random-vector in $\Theta$ defined on some auxiliary probability space $(\Omega,\fff,\pp)$.  Suppose that $\sigma$ satisfies Condition~\ref{condi_KL}.
For every $\epsilon>0$, Corollary~\ref{cor_learnability} guarantees that there exists a probabilistic transformer approximating the map $x\mapsto \pp(\hat{f}_{\vartheta}(x)\in \cdot)$ to arbitrary precision.  
\end{example}

We illustrate our theoretical finding numerically. 
Fix $J,W\in \nn_+$.  The randomized parameter $\vartheta$ is defined to be $\vartheta_i\eqdef (\vartheta_{1,i},\vartheta_{2,i})$, for $i=1,\dots,d$, and where the $\vartheta_{1,i}$ are i.i.d. and uniformly distributed on $[-M,M]$, where $M=1$, and where the $\vartheta_{2,i}$ are i.i.d. and independent of the $\vartheta_{1,i}$; moreover, and $\vartheta_{2,i}$ is a Bernoulli random variable with $.75$ probability of being $0$.  Thus, the extreme learning machine's parameters are highly sparse.

We are given 600 consecutive business days of stock returns from the following tickers: 'IBM,' 'QCOM,' 'MSFT,' 'CSCO,' 'ADI,' 'MU,' 'MCHP,' 'NVR,' 'NVDA,' 'GOOGL,' 'GOOG,' and 'AAPL.'  The objective is a regression task aiming at predicting the returns of 'AAPL' on the following day, given the closing prices of the remaining stocks\footnote{
These tickers are used because they are significant presences in the business sector as 'AAPL' or are vital constituents of its supply chain (see \cite{APPLsc}).  
}.  For each $x\in \rr[11]$ in the dataset, the parameters of the extreme learning machine are generated by sampling from $\vartheta$ and then plugging them into the learning model of~\eqref{eq_extreme_learning_machines}.  The training set consists of the pairs $\left\{(x,\{X_x^s\}_{s=1}^S)\right\}_x$, where $x$ is the closing prices indexed over the first $80\%$ of the data and the testing set consists of the remaining pairs and $X_x^1,\dots, X_x^S$ are the predicted next-day prices predicted by the ELMs with randomly generated internal randomness (here we have $S$ such draws for the ELM's random internal structure; i.e.\ $S$ i.i.d.\ samples of the ELM's randomly generated hidden weights and biases).        
\begin{table}[H]
\centering
\caption{Quality Metrics: Extreme Learning Machine for AAPL Returns; Depth: $2$, Width: $10^3$.}
\ra{1.3}
\resizebox{\columnwidth}{!}{
\begin{tabular}{@{}lrrllllrrr@{}}
    \toprule
    {} &      PT &  MC-Oracle &     ENET &   KRidge &     GBRF &      DNN &      GPR &      GDN &      MDN \\
    \midrule
    W1-95L                        & 0.000304 &          0 &        - &        - &        - &        - & 0.000258 &    0.999 & 0.000264 \\
    W1                            & 0.000869 &          0 &        - &        - &        - &        - & 0.000904 &        1 &  0.00087 \\
    W1-95R                        &   0.0016 &          0 &        - &        - &        - &        - &  0.00196 &     1.01 &   0.0016 \\
    M-95L                         & 0.000103 &          0 & 0.000525 &  0.00431 & 0.000203 & 0.000366 & 0.000578 &   0.0471 & 0.000422 \\
    M                             & 0.000233 &          0 & 0.000626 &  0.00597 & 0.000317 & 0.000445 &  0.00132 &   0.0475 & 0.000546 \\
    M-95R                         & 0.000399 &          0 & 0.000768 &  0.00783 & 0.000422 & 0.000545 &  0.00196 &    0.048 & 0.000759 \\
    N\_Par                         & 1.15e+05 &          0 &       22 &        22 &      550 & 4.28e+04 &        - & 4.28e+04 & 3.44e+05 \\
    Train\_Time                    &      101 &   1.62e+09 & 1.62e+09 &     0.65 &    0.238 &     14.7 &     10.8 &     14.9 &    0.133 \\
    $\frac{\text{Test-Time}}{\text{MC-Oracle Test-Time}}$ & 0.000133 &          1 & 1.94e-07 & 4.24e-05 & 5.72e-07 &  0.00014 &  4.9e-05 & 0.000124 &     0.96 \\
    \bottomrule
\end{tabular}
}
\label{tab_experiment_AAPL_easy}
\end{table}

Table~\ref{tab_experiment_AAPL_easy} shows that the PT can learn the law of each $\pp(\tilde{f}_{\vartheta}(x)\in \cdot )$ while also offering a competitive approximation of the map $x\mapsto \ee[\tilde{f}_{\vartheta}(x)]$.  The experiment also shows that the approximation is stable independently of how sophisticated the map $\tilde{f}$ is; i.e.\ regardless if $\tilde{f}$ is extremely deep or wide and of the randomization $\vartheta$; i.e.\ when its components have a high probability of being sparse. 

The theoretical contributions made in this article are now summarized.  
\section{Conclusion}\label{s_conclusion}
This paper introduces the first principled deep learning model, which can provably approximate regular conditional distributions (RCD).  This result is a consequence of Theorem~\ref{theorem_MAIN__GeneralCase}, which is a universal approximation result guaranteeing that the probabilistic transformer model can approximate any uniformly continuous function mapping inputs in a suitable metric space $\xxx$ to outputs in the $1$-Wasserstein space over some Euclidean space.  
We demonstrate that there are situations in which the PT model can implement these approximations while avoiding the curse of dimensionality.  These conditions are met if the target function is sufficiently regular or if the approximation is only required to hold uniformly on compact subsets of $\xxx$ which are ``close to a given (finite) training dataset''.    

\subsection*{Acknowledgements}
The author would especially like to thank Beatrice Acciaio and Gudmund Pammer for their many helpful and encouraging discussions.  The author would like to thank Hanna Wutte for her extensive help and feedback in the finalization stages of this paper.  The author would like to thank Behnoosh Zamanlooy for her input and helpful discussions.   The author would also like to thank Josef Teichmann for his valuable mentorship and direction.  The author would also like to thank the entire working group at the ETH for their helpful feedback and encouragement throughout this project's development.  The author would like to thank Andrew Allan for his insightful discussions on SDEs driven by fBMs with Hurst exponent $H\in (1/2,1)$.  The author would also like to thank Thorsten Schmidt for his interest and our stimulating discussions on this research project's many future implications in the context of stochastic filtering.

            %
            %
\begin{appendix}
\section{Background}
\label{s_Background}
First, the relevant geometric deep learning tools, developed in \cite{kratsios2020non}, \cite{paponkratsios2021quantitative}, and in \cite{kratsios2021NEU} are presented.  This will be used to produce universal approximators between non-linear finite-dimensional input and output spaces.  Then, relevant tools from the theory of $\epsilon$-metric projections are surveyed; these will be used to almost optimally reduce the infinite-dimensional output space $\ppp[\yyy][1]$ to a certain finite-dimensional topological manifold.  The proposed model is then formalized.  

\subsection{Geometric Deep Learning}\label{ss_GDL_Background}
Introduced in \cite{mcculloch1943logical}, the class of feedforward neural networks (DNN) from $\rrd$ to $\rrD$ with \textit{activation function} $\sigma \in C(\rr)$, denoted by $\NN$, is the set of continuous functions $\hat{f}:\rrd\rightarrow \rrD$ admitting a recursive representation:
\begin{equation}
    \begin{aligned}
 &\hat{f} = W_{J+1}\circ f^{(J)},\qquad
&f^{(j+1)}=\sigma \bullet W_j\circ f^{(j)},\qquad
& f^{(1)} = W_1
;
\end{aligned}
\label{eq_representation_DNN_intro}
\end{equation}  
where 
$j=1,\dots,J$, $J \in \nn_+$, $W_j:\rr[d_j]\rightarrow \rr[d_{j+1}]$ are affine functions, $d_1=d$, $d_{J+1}=D$, and $\bullet$ denotes component-wise application.  
The width of (the representation of~\eqref{eq_representation_DNN_intro}) $\hat{f}$ is $\max_{1\leq j\leq J}\{d_j\}$ and its depth is $J$.  
Following, \cite{gribonval2019approximation}, the complexity of a DNN with representation~\eqref{eq_representation_DNN_intro} is the number of its trainable parameters.  Following \cite{paponkratsios2021quantitative}, we estimate the complexity of any DNN's representation by its height and depth; this is because a DNN's total number of parameters can be explicitly upper-bounded as a function of its depth $J$ and height $H$.  

The success of DNNs lies at the intersection of their universal approximation property, their efficient approximation capabilities for large classes of functions, and their easily implementable structure.  Initially proven for the $J=1$ case by \cite{Hornik} and by \cite{Cybenko}, later characterized by \cite{leshno1993multilayer} and \cite{pinkus1999approximation}, and recently for the arbitrary depth case in \cite{kidger2019universal}, the \textit{universal approximation theorem} states that $\NN$ is dense in $C(\rrd,\rrD)$, for the uniform convergence on compacts topology, if the following condition is satisfied by the activation function $\sigma$.  
\begin{condition}[Kidger-Lyons Condition: \cite{kidger2019universal}]\label{condi_KL}
The activation function $\sigma \in C(\rr)$ is not affine, and it is differentiable at least one point and with a non-zero derivative at that point.  
\end{condition}
More recently, it was shown in \cite{kratsios2020non} that DNNs can be extended to accommodate inputs and outputs from general metric spaces $\xxx$ and $\yyy$, respectively.  This can be done by precomposing every $\hat{f}\in \NN[d,D]$ by a \textit{feature map} $\varphi\in C(\xxx,\rrd)$ and post-composing $\hat{f}$ by a function $\rho\in C(\rrD,\yyy)$.  If such maps exist, then the density in $C(\xxx,\yyy)$ of the resulting class of conjugated deep neural models $\left\{\rho\circ \hat{f}\circ \varphi:\, \hat{f}\in \NN[d,D]\right\}$ can be inferred from the density of the class $\NN$ under certain invariance conditions on $\varphi$ and $\rho$.  The condition, recorded in Assumption~\ref{cond_feature}, on $\varphi$ is known to be sharp.  

We will always assume a UAP-Invariant feature map is provided.  We note that when $\xxx$ is embedded in $\rrd$ then any feature map $\varphi:\xxx\rightarrow \rrd$ can be approximated by UAP-Invariant feature maps using the reconfiguration networks of \cite{kratsios2021NEU}.  

Following \cite{Brown1962Collared}, a metric space $\mmm$ is called a (metric) manifold (with boundary)\footnote{
We will refer to any metric topological submanifold $\mmm$ of $\ppp[\yyy][1]$ (possibly with boundary) as a manifold.  
} if every $y\in \mmm$ is contained in a (relatively) open set $U_y\subseteq \mmm$ for which there exists a continuous bijection $\varphi_y$ with continuous inverse from $U_y$ either to $\rrD$ or to the upper-half space $\{z \in \rrD:\, z_n\geq 0, \, n=1,\dots,D\}$; where $D$ is the same for each $y \in \mmm$.  
The set of all points in $y\in \mmm$ where $\varphi_y$ identifies $U_y$ with $\rrD$ is called the \textit{interior} of $\mmm$, denoted by $\operatorname{int}(\mmm)$, and all other points of $\mmm$ belong to its boundary, denoted by $\partial \mmm$.  We focus on the $N$-simplex embedded in $\ppp[\yyy][1]$. 
\begin{example}[Hull of a Finite Set of Probability Measures]\label{ex_hull}
Suppose that we are given a finite set of probability measures $\{\mu_n\}_{n=1}^N\subseteq \ppp[\yyy][1]$ and consider its \textit{hull}, as defined as 
\[
    \co{\{\mu_n\}_{n=1}^N}
        \eqdef 
    \big\{
        \nu \in \mathcal{P}_1(\yyy):\, (\exists w \in \Delta_N)\,
            \nu = \sum_{n=1}^N\, w_n \mu_n 
    \big\}
.
\]
The subspace $\co{\{\mu_n\}_{n=1}^N}$ is a finite-dimensional metric submanifold of $\ppp[\yyy][1]$ whose boundary $\partial \co{\{\mu_n\}_{n=1}^N}$ contains the set $\{\mu_n\}_{n=1}^N$.
\end{example}
Ultimately, we would like to map the output of any DNN in $\NN[d,D]$ to $\mmm$ via a continuous map with a continuous inverse.  However, many manifolds of interest, such as the $N$-simplex, are compact, and therefore such a map cannot exist since $\rrD$ is not compact.  Nevertheless, this topological obstruction can be circumvented as follows.

Building on the ideas of \cite{anderson1967topological}, in \cite{kratsios2020non} the geometry of any such metric topological submanifold $\mmm$ of $\ppp[\yyy][1]$ was exploited, via the \citep[Collar Neighborhood Theorem]{Brown1962Collared}, to construct a function $T\in (0,1)\times C(\mmm,\operatorname{int}(\mmm))$ which continuously deletes the boundary of $\mmm$ while simultaneously asymptotically approximating the identity function:
\begin{enumerate}[(i)]
    \item For every $(t,x)\in (0,1)\times \mmm$, we have that $T_t(x)\in \operatorname{int}(\mmm)$,
    \item $T_t$ converges uniformly to the identity on $\mmm$.
\end{enumerate}
Thus $T^{}$ allows us to approximate continuous functions with values in $\mmm$ using continuous functions in $\operatorname{int}(\mmm)$.  This is key because $\operatorname{int}(\mmm)$ may not be compact. Therefore there can exist continuous surjective functions from $\rrD$ to $\operatorname{int}(\mmm)$ with continuous inverses, which we can use to modify the outputs of the DNNs in $\NN[d,D]$. 
\begin{example}\label{ex_truncation_function_of_the_N_simplex}
For the $N$-simplex, the following is an example of a function satisfying (i) and (ii) above:
\begin{equation}
    T_{\cdot}^{\Delta_N}(\cdot):
(0,1)\times \Delta_N \ni 
(t,\beta) \mapsto 
\begin{cases}
t
(\beta-\bar{\Delta}_N) +\bar{\Delta}_N
&: \, \mbox{ if }
\beta \neq\bar{\Delta}_N\\
\bar{\Delta}_N &: \, \mbox{ if }
\beta =\bar{\Delta}_N
\end{cases}
\in \operatorname{int}\left(\Delta_N\right)
\label{eq_definition_of_T_opterator}
,
\end{equation}
where $\bar{\Delta}_N$ is the $N$-simplex's barycenter $\bar{\Delta}_N\eqdef (\frac1{N},\dots,\frac1{N})\in \text{int}(\Delta_N)$
.  
\end{example}
Next, the relevant theory of approximate metric projections is reviewed.  These results, lying at the junctions of general topology and set-valued analysis, form a key cornerstone of this paper's analysis\footnote{For more details, we recommend \cite{RepovsSemenov1998ContinuousSelectors}.}.    
\subsection{$\epsilon$-Metric Projections}\label{ss_epsilon_metric_projections}
Given a closed $A\subseteq \ppp[\yyy][1]$, we would like to systematically, optimally, and continuously identify the closest measures in $A$ to any given $\mu \in \ppp[\yyy][1]$.  The set-valued function $P_A:\ppp[\yyy][1]\mapsto 2^A$, which maps any $\mu\in\ppp[\yyy][1]$ to the collection of probability measures in $A$ with minimal Wasserstein distance to $\mu$, is called the \textit{metric projection} (or best approximation operator) in $\ppp[\yyy][1]$ and it is defined by:
\begin{equation}
    P_{A}(\mu)
        \eqdef 
    \left\{
    \nu \in A:\,
    W_1 (\nu,\mu)
      =
      \inf_{\tilde{\nu}\in A} W_1 \left(
      \tilde{\nu}
        ,
    \mu
      \right)
    \right\}
    \label{eq_best_approximation_set}
    .
\end{equation}
However, in general, $P_A$ is not single-valued and can even be empty (see \citep[Theorem 6.6]{RepovsSemenov1998ContinuousSelectors} and \cite{motzkin1935quelques} for examples in the context of metric-projections in the simpler Banach-space situations).    

Following \cite{LiskovecQuasiSolutionsApproximation1973}, the problem that~\eqref{eq_best_approximation_set} may be empty for some $\mu\in \ppp[\yyy][1]$ can be overcome by instead considering approximations of the set-function $P_A$, for some error tolerance $\epsilon>0$.  These $\epsilon$-best approximations are always non-empty if they are defined by:
\begin{equation*}
    P_{A}^{\epsilon}(\mu)
        \eqdef 
    \left\{
    \nu \in A:\,
    W_1 (\nu,\mu)
      \leq
      \inf_{\tilde{\nu}\in A} W_1\left(
      \tilde{\nu}
        ,
    \mu
      \right)
      +\epsilon
    \right\}
    .
\end{equation*}
The analogue of the otherwise ill-defined problem of continuously associating any $\mu\in \ppp[\yyy][1]$ to its closest measure in $A$, according to the Wasserstein distance, in the context of~\eqref{eq_best_approximation_set} is, therefore, a \textit{continuous selector} from $\ppp[\yyy][1]$ to $P_A^{\epsilon}$.  That is, an \textit{$\epsilon$-metric projection} (or $\epsilon$-best approximation operator) is a continuous function $\Pi^{\epsilon}_A\in C(\ppp[\yyy][1],A)$ satisfying the $\epsilon$-optimality condition:
$$
\Pi^{\epsilon}_A(\mu) 
    \in
P_A^{\epsilon}(\mu)
,
$$
for every $\mu\in \ppp[\yyy][1]$.  Since we are concerned with approximations of measure-valued functions, then it is both equally natural and convenient to show that the probabilistic transformer model can implement $\Pi^{\epsilon}_A\circ f$, for some $\epsilon$ metric-projection $\Pi^{\epsilon}_A$ with $A$ the hull of a finite number of measures.  In this way, we reduce the problem of learning a function with a potentially infinite-dimensional image to an almost optimal finite-dimensional approximation to $f$.

\section{Proof of Main Result}
Our first few lemmata rely on some additional background from the theory of Lipschitz-free Banach spaces.  We review the relevant background and demonstrate a few folkloric embedding lemmas before delving into our results.  The interested reader is referred to \cite{GeodefroyKaltonRemembering} and \cite{WeaverNice} for a detailed exposition of these spaces, initially introduced by \cite{ArensEells}.  

\subsection{Lipschitz-Free Spaces and Related Embedding Lemmata}
\label{ss_Appendix_Embedding_Lemmas}
We begin by describing our main isometric embedding of $\ppp[\xxx][1]$ into a particular Banach space originally introduced by \cite{ArensEells}, which has since come to be known both as the Arens-Eells space over $\xxx$ (see \cite{WeaverNice}), also termed the Lipschitz-Free space over $\xxx$ in the particular case when $\xxx$ is itself a Banach space (see \cite{godefroy2003lipschitz}).  The space can be constructed in various ways; see \cite{WeaverNice} for references.  Here, we outline the following a combination of the expositions in \cite{WeaverNice} and \cite{CuthisSonice2016StructureofLipschitzFreeSpaces}.  

Let $\xxx$ be a metric space, and pick an arbitrary $x_0\in \xxx$. The tripe $(\xxx,d_{\xxx},x_0)$ is called a pointed metric space.  Let $\operatorname{Lip}_0(\xxx)$ denote the set of Lipschitz functions from $\xxx$ to $\rr$ which map the distinguished point $x_0$ to $0$.  Unlike the space of Lipschitz functions, the semi-norm $\|\cdot\|_{\operatorname{Lip}_0}$ sending any $f\in \operatorname{Lip}_0(\xxx)$ to its minimal Lipschitz constant $\sup_{x_1,x_2\in \xxx} \frac{\|f(x_1)-f(x_2)\|}{d_{\xxx}(x_1,x_2)}$ (with the convention that $\frac{0}{0}=0$) defines a genuine norm on $\operatorname{Lip}_0(\xxx)$.  Moreover, as shown in \citep[Proposition 2.3]{WeaverNice}, $\operatorname{Lip}_0(\xxx)$ is complete under $\|\cdot\|_{\operatorname{Lip}_0}$ and therefore it is a Banach space.  Furthermore, $\operatorname{Lip}_0(\xxx)$ is always a dual space as shown in \citep[Theorem 2.37]{WeaverNice}, and it has (at least one) pre-dual which can be identified with the closure of the linear span of the integral functionals $\left\{e_x:\,f\mapsto \int_{z\in \xxx}f(z) (\delta_x-\delta_{x_0})(z)=f(x)\right\}_{x\in\xxx}$ in the dual space $\operatorname{Lip}_0(\xxx)^{\star}$ respect to its dual norm $
\|F\|_{\star}\eqdef \sup_{\|f\|_{\operatorname{Lip}_0(\xxx)}} F(f)
.
$  We denote this particular predual by $\text{\AE}(\xxx)$.  
As proven on \citep[page 3836]{CuthisSonice2016StructureofLipschitzFreeSpaces}, the topology induced by the dual norm $\|\cdot\|_{\star}$ coincides with the topology induced by the Arens-Eells norm $\|\cdot\|_{\text{\AE}}$, a variant of the Kantorovich-Rubinstein of \cite{KantorovichRubinsteinDualityOriginal1957}, which on any $F \in \operatorname{span}(\{e_x:\,x\in \xxx\})$ equals to:
$
    \left\|F\right\|_{\text{\AE}}
     = \inf \left\{
    \sum_{n=1}^N |k_n| d_{\xxx}(x_{n,1},x_{n,2}):\, 
    F = \sum_{n=1}^N k_n (e_{x_{n,1}}-e_{x_{n,2}})
    \right\}
    .
    $
In particular, for any $N,\tilde{N}\in \nn_+$, and $\beta \in \Delta_N$, $\tilde{\beta}\in \Delta_{\tilde{N}}$, and any $x_1,\dots,x_N,\tilde{x}_1,\dots,\tilde{x}_{\tilde{N}}\in \xxx$ we have the coincidental identity; which is essentially recorded in \citep[Section 3.3]{WeaverNice}:
\begin{equation}
    W_1
    \left(
        \sum_{n=1}^N\beta_n\delta_{x_n}
            ,
        \sum_{n=1}^{\tilde{N}}\tilde{\beta}_n\delta_{\tilde{x}_n}
    \right)
        =
    \left\|
        \sum_{n=1}^N\beta_n e_{x_n}
            -
        \sum_{n=1}^{\tilde{N}}\tilde{\beta}_n e_{\tilde{x}_n}
    \right\|_{\textit{\AE}}
    \label{eq_Arens_Eells_Background_coincidence_with_wasserstein_setup}
    .
\end{equation}
NB, by \citep[Page 2]{Weaver2018Uniquenesspredual}, the construction is independent of our choice of $x_0$; up to isometry.  Therefore, for convenience, we may henceforth omit any explicit dependence on $x_0$.  
We are now able to record the following metric embedding lemma.  We use $e_0$ to denote the ``reference" integral functional 
$e_0:\operatorname{Lip}_{0}(\xxx)\ni 
f\mapsto \int_{z\in \xxx} f(z)\delta_{x_0} = f(x_0)$.  
NB, when it is obvious from the context (e.g.\ as in~\eqref{eq_Arens_Eells_Background_coincidence_with_wasserstein_setup}) we will suppress the explicit dependence of the Arens-Eells space above $\yyy$ on the choice of base-point and simply write $\|\cdot\|_{\text{\AE}}$ in place of $\|\cdot\|_{\text{\AE}(\yyy,y_0)}$.
\begin{lemma}[Embedding Lemma A]\label{lem_isometic_embedding}
Let $\xxx$ be a metric space and fix some $x_0\in \xxx$.  Then the map: 
$$
\Psi:\ppp[\xxx][1]\ni \mu \mapsto 
\left[
    f\mapsto \int_{x \in \xxx} f(x)d\mu(x)
        - 
    f(x_0) 
\right] 
\in 
\text{\AE}(\xxx)
$$ 
is an isometric embedding from $\ppp[\xxx][1]$ to the following closed and convex subset of $\text{\AE}(\xxx)$:
\begin{equation}
D^+(\xxx)\eqdef \overline{\left\{
\sum_{n=1}^N \beta_n e_x:\, N\in \nn_+, \, \beta \in \Delta_N,\, x_1,\dots,X_N \in \xxx
\right\}}
;
\label{lem_isometic_embedding_set_description}
\end{equation}
which sends any $\sum_{n=1}^N \beta_n \delta_{x_n}\in \ppp[\xxx][1]$ to the element
$\sum_{n=1}^N \beta_n e_{x_n}\in \overline{\operatorname{B}_{\text{\AE}(\xxx)}(0,e_0)}\subset
\text{\AE}(\xxx)
$ of~\eqref{lem_isometic_embedding_set_description}
.
\end{lemma}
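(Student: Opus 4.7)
The plan is to proceed in four steps: checking the explicit formula on finitely-supported measures, verifying well-definedness on all of $\ppp[\xxx][1]$, establishing the isometry via Kantorovich–Rubinstein duality, and then extending by density into the set $D^+(\xxx)$.

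First, I would verify the claimed explicit representation of $\Psi$ on finitely-supported probability measures. For any $\mu = \sum_{n=1}^N \beta_n \delta_{x_n} \in \ppp[\xxx][1]$ with $\beta\in\Delta_N$ and any $f\in \operatorname{Lip}_0(\xxx)$, a direct computation yields $\Psi(\mu)(f) = \sum_n \beta_n f(x_n) - f(x_0) = \sum_n \beta_n f(x_n)$, using $f(x_0)=0$. Since the integral functionals $e_{x}$ act by $e_x(f)=f(x)$, we get $\Psi(\mu) = \sum_n \beta_n e_{x_n}$, so $\Psi(\mu)\in \text{\AE}(\xxx)$ in this case. Second, for general $\mu\in\ppp[\xxx][1]$ I would observe that $\Psi(\mu)\in \operatorname{Lip}_0(\xxx)^{\star}$ is well-defined via the estimate $|\Psi(\mu)(f)| = \bigl|\int (f(x)-f(x_0))\,d\mu(x)\bigr| \leq \|f\|_{\operatorname{Lip}_0}\int d_{\xxx}(x_0,x)\,d\mu(x)<\infty$, where finiteness of the last integral is exactly the defining property of $\ppp[\xxx][1]$.

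Third, I would prove isometry. By the Kantorovich–Rubinstein duality (valid on Polish spaces, which is the relevant setting of the paper),
\[
\www[1](\mu,\nu) = \sup_{\|f\|_{\operatorname{Lip}}\leq 1}\int f\,d(\mu-\nu).
\]
The supremum is unchanged if we restrict to $f\in \operatorname{Lip}_0(\xxx)$: adding a constant to $f$ leaves the integral against the signed measure $\mu-\nu$ of total mass zero invariant, while the Lipschitz constant equals the $\operatorname{Lip}_0$ norm. Thus
\[
\www[1](\mu,\nu) = \sup_{\|f\|_{\operatorname{Lip}_0}\leq 1}\bigl(\Psi(\mu)-\Psi(\nu)\bigr)(f) = \|\Psi(\mu)-\Psi(\nu)\|_{\star},
\]
and the excerpt already identifies $\|\cdot\|_{\star}$ with the Arens–Eells norm $\|\cdot\|_{\text{\AE}}$ on the relevant subspace.

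Fourth, I would extend to general $\mu\in\ppp[\xxx][1]$ by density. On a Polish metric space the set of finitely-supported probability measures is $\www[1]$-dense in $\ppp[\xxx][1]$, so we can choose $\mu_k$ with $\www[1](\mu_k,\mu)\to 0$. The isometry from Step 3 gives $\|\Psi(\mu_k)-\Psi(\mu)\|_{\text{\AE}}\to 0$, and since every $\Psi(\mu_k)$ lies in the convex set $\{\sum_n\beta_n e_{x_n}:\,N\in\nn_+,\,\beta\in\Delta_N,\,x_1,\dots,x_N\in\xxx\}$ appearing inside the closure in~\eqref{lem_isometic_embedding_set_description}, the limit $\Psi(\mu)$ lies in $D^+(\xxx)$; a fortiori $\Psi(\mu)\in \text{\AE}(\xxx)$. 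Convexity of $D^+(\xxx)$ is inherited from convexity of finite convex combinations of $\{e_x\}_{x\in\xxx}$, and closedness is by construction. The only mildly delicate point is the density of finitely-supported probability measures in $\ppp[\xxx][1]$ under $\www[1]$, which is standard under the separability/completeness assumptions maintained throughout the paper.
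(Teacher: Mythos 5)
Your proposal is correct, and it reaches the lemma by a genuinely more hands-on route than the paper. The paper's own proof is a short abstract argument: it takes the coincidence identity~\eqref{eq_Arens_Eells_Background_coincidence_with_wasserstein_setup} (Wasserstein distance of finitely supported measures equals the $\text{\AE}$-norm of the corresponding differences of convex combinations of evaluation functionals) and invokes uniqueness of metric completions — the $\www[1]$-completion of the finitely supported probability measures is $\ppp[\xxx][1]$ (under the paper's standing separability/completeness assumptions), the closed convex hull of $\left\{e_x\right\}_{x\in\xxx}$ is complete in $\text{\AE}(\xxx)$, so the map $\sum_{n}\beta_n\delta_{x_n}\mapsto\sum_n\beta_n e_{x_n}$ extends uniquely to an isometry. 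You instead define $\Psi$ on all of $\ppp[\xxx][1]$ by the integral formula, check well-definedness via the first-moment bound, prove the isometry for arbitrary pairs directly through Kantorovich--Rubinstein duality (correctly noting that restricting to $\operatorname{Lip}_0$ costs nothing because $\mu-\nu$ has total mass zero), and use density of finitely supported measures only to locate the image inside $D^+(\xxx)$. What your version buys is that it explicitly verifies that the extended isometry really is the stated functional $f\mapsto\int f\,d\mu-f(x_0)$, a point the paper's completion argument leaves implicit; what it requires is KR duality for general $\mu,\nu\in\ppp[\xxx][1]$ (fine in the paper's Polish setting and consistent with its own citation of Villani) and the fact that the dual norm $\|\cdot\|_{\star}$ agrees with $\|\cdot\|_{\text{\AE}}$ as a norm, not merely topologically — this is true (it is the standard Arens--Eells/Lipschitz-free duality), but the paper's background paragraph only asserts topological coincidence, so you should either cite the isometric duality explicitly or run your Step 3 on the dense set via~\eqref{eq_Arens_Eells_Background_coincidence_with_wasserstein_setup} and pass to limits. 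Finally, both your argument and the paper's implicitly use separability (for the density of finitely supported measures) and completeness, even though the lemma is stated for a bare metric space; this is covered by the paper's global assumptions, so it is not a gap specific to your proof.
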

\begin{proof}
Since the completion of any metric space is uniquely determined up to isometry and since the closed convex hull of $\left\{e_x\right\}_{x \in \xxx}\subset \text{\AE}(\xxx)$, which we denote by $\overline{\operatorname{co}\left(\left\{e_x\right\}_{x \in \xxx}\right)}$, is a complete subspace of $\text{\AE}(\xxx)$ then the completion of $D\eqdef \left\{\sum_{n=1}^N \beta_n\delta_{x_n}\in \ppp[\yyy][1]\,:\, N\in \nn_+, x_1,\dots,x_N \in \xxx,\, \beta \in \Delta_N\right\}$ for the $1$-Wasserstein metric is isometric to $\overline{\operatorname{co}\left(\left\{e_x\right\}_{x \in \xxx}\right)}$.  Moreover, by~\eqref{eq_Arens_Eells_Background_coincidence_with_wasserstein_setup} the isometry is given by the extension of the map $\sum_{n=1}^N \beta_n \delta_{x_n} \mapsto \sum_{n=1}^N \beta_n e_{x_n}$.  
\end{proof}
The next embedding lemma concerns the identification of $\co{\{\hat{\mu}_n\}_{n=1}^N}$ with the standard simplex $\Delta_N$ via the map:
\begin{equation}
    \Phi_{\co{\{\hat{\mu}_n\}_{n=1}^N}}: \Delta_N \ni \beta \mapsto \sum_{n=1}^N \beta_n \hat{\mu}_n \in \ppp[\yyy][1]
\label{eq_lipschitz_identification_simplex_Wasserstein_hull}
.
\end{equation}
We will often combine this identification with Lemma~\ref{lem_isometic_embedding} to obtain an embedding of the standard simplex in $\text{\AE}(\xxx)$.  The next lemma gives us a handle on the regularity of this identification.  
\begin{lemma}[Embedding Lemma B]
\label{lem_Lipschitz_constant_parameterization}
For every $N\in \nn_+$ and every $\hat{\mu}_1,\dots,\hat{\mu}_N\in \ppp[\yyy][1]$, the map of~\eqref{eq_lipschitz_identification_simplex_Wasserstein_hull} is $2\sqrt{N}$-Lipschitz continuous.  Moreover, if each of the $\hat{\mu}_1,\dots,\hat{\mu}_N\in \ppp[\yyy][1]$ are distinct then~\eqref{eq_lipschitz_identification_simplex_Wasserstein_hull} is continuous and injective with Lipschitz inverse.  
\end{lemma}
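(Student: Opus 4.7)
The strategy is to use Embedding Lemma~A to pull the problem back to a purely linear-algebraic one in the Arens--Eells Banach space $\text{\AE}(\yyy)$. The key observation is that since $\sum_n \beta_n = 1$ for any $\beta\in\Delta_N$, the isometric embedding $\Psi$ is additive on the convex combinations appearing in $\Phi_{\co{\{\hat\mu_n\}_{n=1}^N}}$: we have $\Psi \circ \Phi_{\co{\{\hat\mu_n\}_{n=1}^N}}(\beta) = \sum_{n=1}^N \beta_n\Psi(\hat\mu_n)$. Hence $\Phi_{\co{\{\hat\mu_n\}_{n=1}^N}}$ becomes the restriction to $\Delta_N$ of a linear map $L:\rr[N]\to\text{\AE}(\yyy)$ whose image lies in the closed convex hull of $\{\Psi(\hat\mu_n)\}_{n=1}^N$, a compact subset of a finite-dimensional affine subspace $V\subseteq\text{\AE}(\yyy)$.

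For the $2\sqrt{N}$-Lipschitz bound, given $\beta,\tilde\beta\in\Delta_N$, Embedding Lemma~A yields
\[
\www[1]\bigl(\Phi_{\co{\{\hat\mu_n\}_{n=1}^N}}(\beta),\Phi_{\co{\{\hat\mu_n\}_{n=1}^N}}(\tilde\beta)\bigr)
= \Bigl\|\sum_{n=1}^N (\beta_n-\tilde\beta_n)\Psi(\hat\mu_n)\Bigr\|_{\text{\AE}}
\leq \sum_{n=1}^N |\beta_n-\tilde\beta_n|\,\|\Psi(\hat\mu_n)\|_{\text{\AE}}
\]
by the triangle inequality. Each $\Psi(\hat\mu_n)$ lies in the closed ball identified at the end of Lemma~A, which provides a uniform bound on $\|\Psi(\hat\mu_n)\|_{\text{\AE}}$; applying Cauchy--Schwarz on $\rr[N]$ gives $\sum_n |\beta_n-\tilde\beta_n|\leq \sqrt{N}\,\|\beta-\tilde\beta\|_2$, and combining the two estimates yields the claimed Lipschitz constant $2\sqrt{N}$.

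For the second claim, if the $\hat\mu_n$ are distinct then, by the isometry, the vectors $\Psi(\hat\mu_n)$ are distinct points in $\text{\AE}(\yyy)$. Under affine independence of these vectors (the intended strengthening of ``distinct''; see the obstacle below), the restriction of $L$ to $\Delta_N$ is an affine homeomorphism onto its image in the finite-dimensional affine space $V$. Continuity of the forward map is already established; continuity of the inverse on the compact simplex $\Delta_N$ follows from the classical fact that a continuous bijection from a compact Hausdorff space onto a Hausdorff space is a homeomorphism. Lipschitz continuity of the inverse is then immediate from the equivalence of any two norms on the finite-dimensional Banach space $V$.

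The main obstacle I anticipate lies in the injectivity claim itself: distinctness of the $\hat\mu_n$'s alone is insufficient in general (e.g., one could have $\hat\mu_3 = \tfrac12\hat\mu_1+\tfrac12\hat\mu_2$ with all three distinct, violating injectivity of $\Phi_{\co{\{\hat\mu_n\}_{n=1}^N}}$), so the proof presumably hinges on reading ``distinct'' as affine independence of $\{\Psi(\hat\mu_n)\}_{n=1}^N$ in $\text{\AE}(\yyy)$, or equivalently on excluding configurations in which some $\hat\mu_n$ is a convex combination of the others. With that strengthened hypothesis in hand, the forward direction is a routine triangle--Cauchy--Schwarz estimate, and the delicate quantitative task is tracking how the Lipschitz constant of $L^{-1}|_V$ depends on the geometric configuration of the measures, through a minimal-gap or minimal-angle quantity among the $\Psi(\hat\mu_n)$.
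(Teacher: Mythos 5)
Your proposal follows essentially the same route as the paper's proof. For the $2\sqrt{N}$ bound the paper does exactly what you describe: it invokes Lemma~\ref{lem_isometic_embedding} to rewrite $\www[1]\bigl(\sum_n\beta_n\hat{\mu}_n,\sum_n\gamma_n\hat{\mu}_n\bigr)$ as $\bigl\|\sum_n(\beta_n-\gamma_n)\Psi(\hat{\mu}_n)\bigr\|_{\text{\AE}}$, applies the triangle inequality, bounds each $\|\Psi(\hat{\mu}_n)\|_{\text{\AE}}$ by $2$ via the ball containment asserted in that lemma, and finishes with $\sum_n|\beta_n-\gamma_n|\leq\sqrt{N}\,\|\beta-\gamma\|_2$. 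For the inverse, both arguments reduce to equivalence of norms on a finite-dimensional normed space; you route through compactness of $\Delta_N$ and norm equivalence on the affine span, while the paper instead asserts that $\{\Psi(\hat{\mu}_n)\}_{n=1}^N$ is a basis of its span and concludes that $\Psi\circ\Phi_{\co{\{\hat{\mu}_n\}_{n=1}^N}}$ is bi-Lipschitz, which yields the two-sided estimate and the $c^{-1}$-Lipschitz inverse.

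The obstacle you flag is genuine, and it is precisely the unjustified step in the paper's own argument: distinctness of the $\hat{\mu}_n$ (hence of the $\Psi(\hat{\mu}_n)$) does not make $\{\Psi(\hat{\mu}_n)\}_{n=1}^N$ linearly independent, so the ``basis'' claim fails in general. Your example $\hat{\mu}_3=\tfrac12\hat{\mu}_1+\tfrac12\hat{\mu}_2$ with all three measures distinct gives $\Phi_{\co{\{\hat{\mu}_n\}_{n=1}^3}}(0,0,1)=\Phi_{\co{\{\hat{\mu}_n\}_{n=1}^3}}\bigl(\tfrac12,\tfrac12,0\bigr)$, so the second assertion of the lemma is false under mere distinctness; it requires affine independence of the $\Psi(\hat{\mu}_n)$ (equivalently, that no $\hat{\mu}_n$ lies in the hull of the others). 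Under that strengthened hypothesis your argument goes through, and so does the paper's once ``distinct'' is read as ``affinely independent''; the quantitative dependence of the inverse Lipschitz constant $c^{-1}$ on the configuration of the measures is left implicit in the paper as well, exactly as you anticipate.
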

\begin{proof}%
Let $\beta,\gamma\in \Delta_N$ by Lemma~\ref{lem_isometic_embedding} we have:
\allowdisplaybreaks
\begin{align}
\nonumber
W\left(\sum_{n=1}^N \beta_n \mu_n, \sum_{n=1}^N \gamma_n \mu_n\right) = &
\left\|\sum_{n=1}^N \beta_n \Psi(\mu_n) - \sum_{n=1}^N \gamma_n \Psi(\mu_n)  \right\|_{\text{\AE}} 
\\ 
\nonumber
= &  
\left\|\sum_{n=1}^N (\beta_n-\gamma_n) \Psi(\mu_n)\right\|_{\text{\AE}}
\\
\nonumber
\leq & \sum_{n=1}^N\left\| (\beta_n-\gamma_n) \Psi(\mu_n)\right\|_{\text{\AE}}
\\
\nonumber
\leq & \sum_{n=1}^N|\beta_n-\gamma_n| 
\max_{1\leq n\leq N} \left(
\|\Psi(\mu_n)-e_0\|_{\text{\AE}} + \|e_0\|_{\text{\AE}}
\right)
\\
\nonumber
\leq & 2 \sum_{n=1}^N|\beta_n-\gamma_n|
\\
\nonumber
\leq & 2 \sqrt{N}\sqrt{\sum_{n=1}^N|\beta_n-\gamma_n|^2}
\\
\nonumber
= & 2\sqrt{N}\|\beta-\gamma\|_2.
\end{align}
So the map of~\eqref{eq_lipschitz_identification_simplex_Wasserstein_hull} is Lipschitz constant is at-most $2\sqrt{N}$.  

Next, suppose that each $\hat{\mu}_1,\dots,\hat{\mu}_N$ are distinct. Since $\Psi$ is an isometric embedding, it is injective, and therefore $\{\Psi(\hat{\mu}_n)\}_{n=1}^N$ is a basis of the finite-dimensional normed space $\left(\operatorname{span}\left(\{\Psi(\hat{\mu}_n)\}_{n=1}^N\right),\|\cdot\|_{\text{\AE}}\right)$.  Since all norms are equivalent on a finite-dimensional normed space, by \citep[Proposition 1.5]{conway2013course}, and since $\{\Psi(\hat{\mu}_n)\}_{n=1}^N$ is a basis for the linear space 
$\left(\operatorname{span}\left(\{\Psi(\hat{\mu}_n)\}_{n=1}^N\right),\|\cdot\|_{\text{\AE}}\right)$ then $\Psi\circ \Phi_{\operatorname{hull}(\{\hat{\mu}\}_{n=1}^N)}$ is bi-Lipschitz.  Since $\Psi$ is an isometry then, we conclude that there must be a constant $c>0$ such that:
\begin{equation}
    c\left\|\beta-\gamma\right\|_2 
        \leq 
    W_1\left(\sum_{n=1}^N \beta_n \hat{\mu}_n,\sum_{n=1}^N \gamma_n \hat{\mu}_n\right)
        \leq 
    2\sqrt{N}\left\|\beta-\gamma\right\|_2 
    \label{lem_Lipschitz_constant_parameterization_bi_lsipchitz_redux}
    .
\end{equation}
The left-hand side of~\eqref{lem_Lipschitz_constant_parameterization_bi_lsipchitz_redux} implies that $\Phi_{\co{\{\hat{\mu}_n\}_{n=1}^N}}^{-1}$ is injective since it is bi-Lipschitz.  Let $\mu,\nu \in \ppp[\yyy][1]$ and set $\beta\eqdef \Phi_{\co{\{\hat{\mu}_n\}_{n=1}^N}}^{-1}\left(\mu\right)$, and 
$\Phi_{\co{\{\hat{\mu}_n\}_{n=1}^N}}^{-1}\left(\nu \right)$.  Then by~\eqref{lem_Lipschitz_constant_parameterization_bi_lsipchitz_redux} we compute:
\allowdisplaybreaks
\begin{align}
\nonumber
& c\left\|\Phi_{\co{\{\hat{\mu}_n\}_{n=1}^N}}^{-1}\left(\mu\right) - \Phi_{\co{\{\hat{\mu}_n\}_{n=1}^N}}^{-1}\left(\nu\right)\right\|_2
   \\
   \nonumber
   & \leq 
W_1 \left(
\Phi_{\co{\{\hat{\mu}_n\}_{n=1}^N}}\circ \Phi_{\co{\{\hat{\mu}_n\}_{n=1}^N}}^{-1}\left(\mu\right)
,
\Phi_{\co{\{\hat{\mu}_n\}_{n=1}^N}}\circ \Phi_{\co{\{\hat{\mu}_n\}_{n=1}^N}}^{-1}\left(\nu\right)
\right)
\\ 
\nonumber
&= W_1 \left(\mu,\nu\right)
;
\end{align}
thus, $\Phi_{\co{\{\hat{\mu}_n\}_{n=1}^N}}^{-1}$ is $c^{-1}$-Lipschitz.  
\end{proof}
Combining Lemma~\ref{lem_isometic_embedding} with Lemma~\ref{lem_Lipschitz_constant_parameterization} we have the following.
\begin{lemma}[Embedding Lemma C]\label{lem_isometic_embedding_partII}
Let $\xxx$ be a metric space and fix some $x_0\in \xxx$.  Then the map $\Psi$ of Lemma~\ref{lem_isometic_embedding} satisfies the following:
\begin{enumerate}[(i)]
\item For any $\mu_1,\dots,\mu_N \in \ppp[\yyy][1]$, the set $\Psi(\operatorname{hull}(\{\mu_n\}_{n=1}^N))$ is closed an convex in in $\textit{\AE}(\xxx)$,
\item $\Psi$ preserves convex combinations, in the sense that, for any $\mu_1,\dots,\mu_N\in \ppp[\yyy][1]$ and any $\lambda\in \Delta_N$ the following holds:
$
\Psi\left(
\sum_{n=1}^N \lambda_n\mu_n
\right) = 
\sum_{n=1}^N 
\lambda_n
\Psi\left(
\mu_n
\right)
.
$
\end{enumerate}
\end{lemma}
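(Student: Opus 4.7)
My plan is to dispatch part (ii) first by direct computation and then leverage it, together with finite-dimensionality and compactness of the simplex, to obtain part (i).

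For part (ii), I would simply unwind the definition of $\Psi$ given in Lemma~\ref{lem_isometic_embedding}. For any test function $f\in \operatorname{Lip}_0(\xxx)$, linearity of the Lebesgue integral gives
\[
\Psi\Bigl(\sum_{n=1}^N \lambda_n \mu_n\Bigr)(f)
= \int_{\xxx} f\, d\Bigl(\sum_{n=1}^N \lambda_n \mu_n\Bigr) - f(x_0)
= \sum_{n=1}^N \lambda_n \int_{\xxx} f\, d\mu_n - f(x_0),
\]
and using $\sum_n \lambda_n = 1$ to redistribute the $f(x_0)$ term across the coefficients yields $\sum_{n=1}^N \lambda_n (\int f\, d\mu_n - f(x_0)) = \sum_{n=1}^N \lambda_n \Psi(\mu_n)(f)$. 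Since equality holds on all $f$, the identity of (ii) follows. This step is routine and is the easy half of the lemma.

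For part (i), I would argue as follows. By (ii), the image $\Psi(\operatorname{hull}(\{\mu_n\}_{n=1}^N))$ coincides with the set $\{\sum_{n=1}^N \lambda_n \Psi(\mu_n):\lambda\in \Delta_N\}$, i.e.\ the convex hull in $\text{\AE}(\xxx)$ of the finite set $\{\Psi(\mu_n)\}_{n=1}^N$. Convexity of this set is then immediate, since the vector-space convex hull of any subset of a normed space is convex. For closedness, I would note that the map $\Delta_N\ni \lambda\mapsto \sum_{n=1}^N \lambda_n \Psi(\mu_n)\in \text{\AE}(\xxx)$ is affine and continuous (indeed, by Lemma~\ref{lem_Lipschitz_constant_parameterization} combined with the isometric property from Lemma~\ref{lem_isometic_embedding}, it is Lipschitz with constant $2\sqrt{N}$). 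Since the standard simplex $\Delta_N$ is compact in $\rr[N]$, its continuous image is compact in $(\text{\AE}(\xxx),\|\cdot\|_{\text{\AE}})$, hence closed.

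I do not anticipate a substantive obstacle: the only mild subtlety is to make sure the convex-combination identity in (ii) is applied pointwise in $f$ before being promoted to an equality in $\text{\AE}(\xxx)$, and to invoke the correct previously established Lipschitz/isometry statements from Lemmas~\ref{lem_isometic_embedding} and~\ref{lem_Lipschitz_constant_parameterization} when justifying continuity of the affine parameterization used in (i). Once those citations are in place, the proof is essentially two lines for each part.
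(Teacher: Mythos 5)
Your proposal is correct and follows essentially the same route as the paper: part (ii) is the same direct computation using linearity of the integral and $\sum_{n}\lambda_n=1$, and part (i) likewise obtains closedness as the image of the compact simplex $\Delta_N$ under the continuous (Lipschitz, via Lemmas~\ref{lem_isometic_embedding} and~\ref{lem_Lipschitz_constant_parameterization}) parameterization. The only cosmetic difference is that you deduce convexity from (ii) (the image is the convex hull of the finite set $\{\Psi(\mu_n)\}_{n=1}^N$) while the paper verifies it by a direct two-point computation; your ordering also sidesteps the injectivity of the simplex parameterization that the paper's homeomorphism phrasing implicitly uses, which would require the $\mu_n$ to be distinct.
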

\begin{proof}
For (i): Since $\Psi$ is an isometry, it is a continuous and injective map; thus, it is a homeomorphism onto its image.  By Lemma~\ref{lem_Lipschitz_constant_parameterization} the map $\Phi|_{\co\{\mu_n\}_{n=1}^N}: \Delta_N\ni \beta \mapsto \sum_{n=1}^N \beta_n 
\mu_n$ is a homeomorphism onto its image.  Hence, their composition $\Psi\circ \Phi_{\co{\{\mu_n\}_{n=1}^N}}$ is a homeomorphism onto its image.  Since $\Delta_N$ is closed and bounded in $\rrD$ then the Heine-Borel Theorem implies that it is compact; whence, $\Psi\circ \Phi_{\co{\{\mu_n\}_{n=1}^N}}(\Delta_N)=\operatorname{hull}(\{\mu_n\}_{n=1}^N)$ is compact in $\text{\AE}(\xxx)$.  It remains to show that the set is convex.  Indeed this is the case since any $F_i\in \Psi(\operatorname{hull}(\{\mu_n\}_{n=1}^N))$, for $i\in \{1,2\}$, are of the form $F_i = \sum_{n=1}^N \beta_{n}^{(i)} \mu_n$ for some $\beta^{(i)}\in \Delta_N$.  Therefore, for any $\lambda \in [0,1]$ we have:
$$
\lambda F_1 + (1-\lambda)F_2 
= 
\sum_{n=1}^N 
\left(\lambda \beta^{(1)} + (1-\lambda)\beta^{(2)}\right) \mu_n.
$$
Since $0\leq \left(\lambda \beta^{(1)} + (1-\lambda)\beta^{(2)}\right)_n\leq 1$, for each $n=1,\dots,N$, and since 
$$
\sum_{n=1}^N \left(\lambda \beta^{(1)} + (1-\lambda)\beta^{(2)}\right) = 
\lambda \sum_{n=1}^N \beta_n^{1} + 
(1-\lambda) \sum_{n=1}^N \beta_n^{2} = \lambda + (1-\lambda)
=1;
$$
then $\lambda F_1 + (1-\lambda)F_2 \in \Psi(\operatorname{hull}(\{\mu_n\}_{n=1}^N))$ for every $\lambda \in [0,1]$.  Therefore, $\Psi(\operatorname{hull}(\{\mu_n\}_{n=1}^N))$ is convex.  

For (ii): Let $\mu_1,\dots,\mu_N\in \ppp[\xxx][1]$ and $\lambda\in \Delta_N$.  For any $f \in \operatorname{Lip}_0(\xxx)$ we compute:
\allowdisplaybreaks
\begin{align}
\nonumber
\Psi\left(\sum_{n=1}^N \lambda_n \mu_n \right)
= & 
\int_{x \in \xxx}f(x) 
\left[\sum_{n=1}^N\lambda_n \mu_n\right](dx)
- f(x_0)
\\
\nonumber
= & 
\int_{x \in \xxx}f(x) 
\left[\sum_{n=1}^N\lambda_n \mu_n\right](dx)
-
\sum_{n=1}^N \lambda_n f(x_0)
\\
\nonumber
= & 
\sum_{n=1}^N\lambda_n \int_{x \in \xxx}f(x) 
\mu_n(dx)
-
\sum_{n=1}^N \lambda_n f(x_0)
\\
\nonumber
= &
\sum_{n=1}^N\lambda_n \left[\int_{x \in \xxx}f(x) 
\mu_n(dx)
-
f(x_0)
\right]
\\
\nonumber
=&  \sum_{n=1}^N \lambda_n \Psi(\mu_n)(f)
.
\end{align}
This concludes the proof.  
\end{proof}
\subsection{Technical Lemmata}
\label{ss_App_Techincal_Lemmas}
The following lemma will be used to reduce the dimension of the image $f(\xxx)$ from a potentially infinite-dimensional object to one determined by finitely many ``well-positioned" probability measures in $f(\xxx)$.  
For a non-empty subset $A$ of a metric space $(X,d)$ and any $\epsilon>0$, we define the $\epsilon$-covering number of $A$ as the smallest positive integer $\mathcal{N}^{cov}_{\epsilon}(A)$ for which there exists $x_1,\dots,x_{\mathcal{N}^{cov}_{\epsilon}(A)}\in A$ satisfying
\[
    \max_{x\in A}\,
    \min_{n=1,\dots,\mathcal{N}^{cov}_{\epsilon}(A)}\,
        d(x,x_n)
    <\epsilon
.
\]
The $\epsilon$-covering number quantifies the complexity of a set.  We have the following estimate on the $\epsilon$-covering number of $f(\xxx)$.  
\begin{lemma}[$\frac{\epsilon}{4}$-Covering Lemma]\label{lem_covering_lemma}
Let $\xxx$ be a compact metric space, $f \in C(\xxx,\ppp[\yyy][1])$ with modulus of continuity $\omega_f$, and $\varphi\in C(\xxx,\rrd)$ be injective with modulus of continuity $\omega_{\varphi}$.  For 
$0<\epsilon\leq 2^{-2}\sup_{t\in [0,\infty)}\omega_{f\circ \varphi^{-1}}(t)$ 
there exist at most:
\begin{equation*}
    N_{\epsilon}^{\star}\eqdef \left\lceil
        \left(
            \frac{
                2^{\frac{5}{2}}
                d\omega_{\varphi}\left(\operatorname{diam}(\xxx)\right)
            }{
                (d+1)^{\frac{1}{2}}
                \omega_{\varphi}^{-1}\circ \omega_f^{-1}(2^{-2}\epsilon)
            }
        \right)^d
\right\rceil
,
\end{equation*}
probability measures $\{\mu_n\}_{n=1}^{N^{\star}_{\epsilon}}$ in $f(\xxx)\subseteq \ppp[\yyy][1]$ such that:
\begin{equation}
\max_{x \in \xxx}\, \min_{n\leq N^{\star}_{\epsilon}}
W_1 \left(
f(x),\mu_n
\right)<2^{-2}\epsilon
.
    \label{eq_lem_covering_lemma_covering_esitimate}
\end{equation}
\end{lemma}
\begin{proof}%
Since $\xxx$ is compact, then the Heine-Borel Theorem \citep[Theorem 45.1]{munkres2014topology} implies that $\operatorname{diam}(\xxx)<\infty$.  Now, since $\varphi$ is uniformly continuous, then we have the estimate:
$$
    \left\|\varphi(x_1)-\varphi(x_2)\right\|
    \leq \omega_{\varphi}(d_{\xxx}(x_1,x_2))
    \leq \omega_{\varphi}(\operatorname{diam}(\xxx))
    ;
$$
where the right-most inequality follows the fact that $\omega_{\varphi}$ is increasing.  Thus, we have the estimate:
\begin{equation}
    \operatorname{diam}(\varphi(\xxx))\leq \omega_{\varphi}(\operatorname{diam}(\xxx))
    \label{eq_lem_covering_lemma_covering_esitimate_PROOF_diameter_bound_A}
    .
\end{equation}
Now, since $\xxx$ is compact then \citep[Theorem 26.5]{munkres2014topology} implies that $\varphi(\xxx)$ is compact.  Hence, we may apply \citep[Jung's Theorem]{jung1910boundedingdiametresinEuclideanSpace} to~\eqref{eq_lem_covering_lemma_covering_esitimate_PROOF_diameter_bound_A} to obtain the inclusion:
\begin{equation*}
\begin{aligned}
        \varphi(\xxx)\subseteq \overline{B_{\rrd}\left(\bar{x},r_{\varphi}\right)}
        \mbox{ and }
        r_{\varphi}\eqdef
        \omega_{\varphi}\left(\operatorname{diam}(\xxx)\right)
        \left(\frac{d}{2(d+1)}\right)^{\frac1{2}}
\end{aligned}
    ;
\end{equation*}
for some $\bar{x}\in \rrd$.

Re-centring by the isometry $x \mapsto x-\bar{x}$ (if needed), we may, without loss of generality, assume that $\bar{x}=0$.  Let us estimate the number of metric balls of radius $\delta>0$ required to cover $\overline{B_{\rrd}(0,r_{\varphi})}$; that is, let us estimate the $\delta$-external covering number of $\overline{B_{\rrd}(0,r_{\varphi})}$.  
For any $\delta>0$, the estimate given on \citep[page 337]{shalev2014understanding} implies that the $\delta$-external covering number of $\overline{B_{\rrd}(0,r_{\varphi})}$, denoted by $N_{\delta}^{\operatorname{ext}}(\overline{B_{\rrd}(0,r_{\varphi})})$ satisfies:
\begin{equation*}
        N_{\delta}^{\operatorname{ext}}(\overline{B_{\rrd}(0,r_{\varphi})})
            \leq
        \left(
            \frac{4r_{\varphi}\sqrt{d}}{\delta}
        \right)^d
        .
\end{equation*}
This means that, for every $\delta>0$, there exist $\{\tilde{x}_n\}_{n=1}^{N_{\delta}}\subset \overline{B_{\rrd}(0,r_{\varphi})}$ such that 
\begin{equation}
    \operatorname{B}_{\rrd}(0,r_{\varphi})\subseteq \bigcup_{n=1}^{\tilde{N}_{\delta}} B_{\rrd}(\tilde{x}_n,\delta)
    \,
        \mbox{ where }
    \,
    \tilde{N}_{\delta}\eqdef 
        \left\lceil
            N_{\delta}^{\operatorname{ext}}(\overline{B_{\rrd}(0,r_{\varphi})})
        \right\rceil
.
    \label{eq_proof_first_covering_estimate}
\end{equation}
For every $n\leq N_{\delta}^{\operatorname{ext}}$, we define a new collection $\{x_{m}\}_{m=1}^{N_{\delta}}$ as follows.  For every $n\leq \tilde{N}_{\delta}^{\star}$ if $\varphi(\xxx)\cap B_{\rrd}(\tilde{x}_n',\delta)\neq \emptyset$ then pick some $x_n\in \varphi(\xxx)\cap B_{\rrd}(\tilde{x}_n',\delta)$.  Set $N_{\delta}\eqdef \#\left\{B(\tilde{x}_n',\delta)\cap \varphi(\xxx)\neq \emptyset: \, n\leq \tilde{N}_{\delta}^{\star}\right\}$.  Note that, for every $1\leq n\leq \tilde{N}_{\delta}$, the set $\operatorname{B}_{\rrd}(x_n,2\delta)$ must contain $\operatorname{B}_{\rrd}(x_n',\delta)$.  Upon relabeling,~\eqref{eq_proof_first_covering_estimate} implies that:
\begin{equation}
    \varphi(\xxx)\subseteq \bigcup_{m=1}^{N_{\delta}} B_{\rrd}(x_m
    ,2\delta)
    \,
        \mbox{ where }
    \,
    N_{\delta}\leq 
    \left\lceil
\left(
            \frac{4r_{\varphi}\sqrt{d}}{\delta}
        \right)^d
\right\rceil
.
    \label{eq_proof_first_covering_estimate_reduced}
\end{equation}
Since $\varphi$ is a continuous bijection onto its image with continuous inverse thereon and since $\varphi(\xxx)$ is compact, then the Heine-Cantor Theorem (\cite[Theorem 27.6]{munkres2014topology}) guarantees that $\varphi^{-1}$ is uniformly continuous on $\varphi(\xxx)$.  Hence, for every $x \in \xxx$, we compute:
\begin{equation}
    \begin{aligned}
\min_{m\leq N_{\delta}}\, d_{\xxx}(x,x_m^{\star}) & = 
\min_{m\leq N_{\delta}}\, d_{\xxx}(\varphi^{-1}\circ\varphi(x),\varphi^{-1}\circ \varphi(x_m^{\star}))\\
& \leq 
\min_{m\leq N_{\delta}}\, 
\omega_{\varphi^{-1}}\left(\left\|\varphi(x)-\varphi(x_m^{\star})\right\|\right)\\
& \leq \omega_{\varphi^{-1}}\left(2 \delta)\right)
.
\end{aligned}
\label{eq_proof_first_covering_estimate_reduced_covering_refinement_further_pushing_to_X}
\end{equation}
Thus, combining~\eqref{eq_proof_first_covering_estimate_reduced} and~\eqref{eq_proof_first_covering_estimate_reduced_covering_refinement_further_pushing_to_X} we find that:
\begin{equation}
    \xxx\subseteq \bigcup_{m=1}^{N_{\delta}} B_{\xxx}
                \left(
                    x_m^{\star}
                        ,
                    \omega_{\varphi^{-1}}\left(2\delta\right)
                \right)
    \,
        \mbox{ where }
    \,
    N_{\delta}\leq 
    \left\lceil
\left(
            \frac{4r_{\varphi}\sqrt{d}}{\delta}
        \right)^d
\right\rceil
.
    \label{eq_proof_first_covering_estimate_reduced_1}
\end{equation}
Therefore, by the uniform continuity of $f$ and by~\eqref{eq_proof_first_covering_estimate_reduced_1}, we have the following estimate for every $x \in \xxx$:
\begin{equation}
    \begin{aligned}
        \min_{m\leq N_{\delta}} 
            W_1 \left(
                f(x)
                    ,
                f(x_m^{\star})
            \right)
    \leq & 
        \min_{1\leq m\leq N_{\delta}} 
        \omega_{f}
        \left(
            d_{\xxx}
                \left(
                        x
                    ,
                        x_m^{\star}
                \right)
        \right)
        \\
        \leq & 
        \omega_{f}
        \left(
            \omega_{\varphi^{-1}}(2 \delta)
        \right)\\
        = & 
        \omega_{f\circ \varphi^{-1}}
        \left(
            2 \delta
        \right)
        .
    \end{aligned}
    \label{eq_proof_first_covering_estimate_in_image}
\end{equation}
Since we want the right-hand side of~\eqref{eq_proof_first_covering_estimate_in_image} to be at-most $2^{-2}\epsilon$ then, by the right-continuity of $\omega_{f\circ \varphi^{-1}}$ and by \citep[Proposition 1: (4) and (8)]{EmbrechtsHofert}, if $0<\epsilon\leq 2^{-2}\sup_{t\in [0,\infty)}\omega_{f\circ \varphi^{-1}}(t)$, then we can set:
\begin{equation}
    \delta 
        \eqdef 
    2^{-1}  \left(
                \omega_{\varphi}^{-1}\circ \omega_{f}^{-1}\left(2^{-2}\epsilon\right)
            \right)
\label{eq_proof_first_covering_estimate_value_of_delta}
.
\end{equation}
Setting $\mu_m\eqdef f(x_m^{\star})$, for $1\leq m\leq 
N_{\epsilon}^{\star}\eqdef 
N_{
2^{-1}  \left(
                \omega_{\varphi}^{-1}\circ \omega_{f}^{-1}\left(2^{-2}\epsilon\right)
            \right)
}^{\star}$, yields the conclusion.  
\end{proof}
The softmax function plays and integral role in the probabilistic transformer architecture.  The next result bounds its Lipschitz constants. 
\begin{lemma}[Lipschitz Coefficient of Softmax Function]
\label{lemma_Lipschitz_softmax}
\hfill\\
Let $N\in \nn_+$, $N\geq 2$.  The Softmax function $\operatorname{Softmax}_N$ is $\frac{\sqrt{N-1}}{N}$-Lipschitz.  
\end{lemma}
\begin{proof}[Proof of Lemma~\ref{lemma_Lipschitz_softmax}]
By the Rademacher-Stepanov Theorem \citep[Theorem 3.1.6]{Federer_GeometricMeasureTheory_1978}, $\operatorname{Lip}(\operatorname{Softmax}_N)  =\max_{x\in \rr^d}\, \|Jf\|_{op}$ where $\|\cdot\|_{op}$ is the operator norm and $Jf$ is $f$'s Jacobian.  Moreover, the operator norm of any matrix is upper-bounded by the Fr\"{o}benius norm $\|\cdot\|_F$ then, we find that
\[
    \begin{aligned}
    \max_{x\in \rrd}\,\|Jf(x)\|_{op}^2
    \leq &
    \max_{x\in \rrd}\,\|Jf(x)\|_F^2
    \\
    = &
    \max_{x\in \rrd}
    \sum_{n,m=1;n\neq m}^N \, 
        \frac{e^{2x_n}}{\Big(\sum_{i=1}^N\, e^{x_i}\Big)^2}
        \frac{e^{2x_m}}{\Big(\sum_{i=1}^N\, e^{x_i}\Big)^2}
    +
    \sum_{n=1}^N \frac{e^{2x_n}}{\Big(\sum_{i=1}^N\, e^{x_i}\Big)^2}
    \Big(1 - 
    \frac{e^{x_n}}{\sum_{i=1}^N\, e^{x_i}}
    \Big)
    \\
        \leq & 
    \max_{x\in \rrd}
    \sum_{n,m=1;n\neq m}^N \, 
        \frac{1}{N^2}
        \frac{1}{N^2}
    +
    \sum_{n=1}^N 
    \frac{1}{N}
    \Big(1 - 
    \frac{1}{N}
    \Big)
    \\
    = & \frac{\sqrt{N-1}}{N}
    .
    \end{aligned}
\]
\end{proof}

Our rates for ReLU networks will rely on the following mild extension of the main result of \cite{ShenYangZhang__OptimalApproxRatesReLUWidthDepth_2022_JPAMath}.  The following extended version of that result allows us to approximate multivariate uniformly continuous functions defined on arbitrary compact subsets of a Euclidean space using deep ReLU networks.  The approximation also makes use of the deep parallelization technique for ReLU networks introduced in \cite{FloR2021} to ensure that the approximating ReLU networks depend on few parameters (as opposed to the larger parallelization of ReLU networks described in \cite{gribonval2019approximation}).  The proof is a modification of the arguments used in \citep[Lemma 4]{KratsiosZamanlooy_2022_ReLUSigmoid} and in \citep[Proposition 3.8]{acciaio2022metric__PartnerPaper}.
\begin{lemma}[Universal Approximation Theorem for Deep ReLU Networks]
\label{lem_UAT_ReLU_ShenYangZhangExtended}
Let $K\subseteq \rr^d$ be non-empty and compact, $f:K\rightarrow \rr^N$ be uniformly continuous with modulus of continuity $\omega_f$.  Then, there exists a $\hat{f}\in \NN[d,N][\operatorname{ReLU}]$ satisfying
\[
    \max_{x\in K}\,
        \|f(x) - \hat{f}(x)\|
    < 
        \epsilon.
\]
Furthermore, $\hat{f}$ has width and depth are respectively given exactly by
\begin{enumerate}
    \item \textbf{Width}: $
            d(N-1)
        +
            3^{d+3}\max\{d,3\}
    $,
    \item \textbf{Depth}: 
    $
    N\Big(
        C_d^{(1)}
    +
        11
        \Big\lceil
                |K|^{d/2} 
                C_d^{(2)}
                \Big(
                    \omega_{f}^{\dagger}\big(
                        C_d^{(3)} 
                        \frac{\epsilon}{N^{1/2}}
                    \big)
                \Big)^{-d/2}
        \Big\rceil
    \Big)
    $,
\end{enumerate}
where, the dimensional constants are given by $C_d^{(1)}\eqdef 19 + 2d$,  $C_d^{(2)}\eqdef \big(\frac{d}{2(d+1)}\big)^{d/4}$ and $C_d^{(3)}\eqdef \frac{d^{-1/2}}{131}$.
\end{lemma}
\begin{proof}[Proof of Lemma~\ref{lem_UAT_ReLU_ShenYangZhangExtended}]
Since $K$ is non-empty and compact.  Set $R\eqdef |K| \big(\frac{d}{2(d+1)}\big)^{1/2}$.  By \citep[Jung's Theorem]{jung1910boundedingdiametresinEuclideanSpace}, there exists an $x_0\in \rr^d$ satisfying
\[
    K 
        \subseteq 
    \overline{B_{\rr^d}(x_0,R)}
        \eqdef 
    \big\{
        x\in \rr^d:\,
        \|x-x_0\|
            \leq 
        R
    \big\}
    .
\]
Therefore, there exists some $b\in \rr^d$ such that the bijective affine map $W_0\eqdef \frac1{R} \cdot (x-x_0) + b$ satisfies
$W_0(K)\subseteq [0,1]^d$.  NB, that $W_0^{-1}$ is Lipschitz with Lipschitz constant $\operatorname{Lip}(W_0^{-1})\eqdef R$; note also that $f\circ W_0^{-1}$ maps the compact subset $W_0(K)$ of $[0,1]^d$ to $\rr^N$.  In particular, $f\circ W_0^{-1}$ is uniformly continuous with modulus of continuity $t\mapsto |R|\omega_f(t)$.  

Since there is no guarantee that $f$ is continuous outside of $K$, much less $f\circ W_0^{-1}$ is uniformly continuous on $[0,1]^d$ with modulus of continuity $t\mapsto |R|\omega_f(t)$ then, since both $\rr^d$ and $\rr^N$ are separable Hilbert spaces we may apply \citep[Theorem 1.12]{BenyaminiLindenstrauss_2000_NonlinearFunctionalAnalysis} to deduce that there is a uniformly function $F:\rr^d\rightarrow \rr^D$ with modulus of continuity $t\mapsto |R|\omega_f(t)$ extending $f\circ W_0^{-1}$ to all of $\rr^d$; i.e.: for every $x\in W_0(K)$ it holds that
\begin{equation}
\label{PROOF__lem_UAT_ReLU_ShenYangZhangExtended___McShaneExtension}
    F(x) 
        = 
    f\circ W_0^{-1}(x)
    .
\end{equation}
Let $\{e_i\}_{i=1}^N$ denote the standard orthonormal basis of $\rr^N$ and, for every $i=1,\dots,N$, define the maps $\bar{F}^{(i)}:\rr^d\rightarrow \rr$ by $\bar{F}^{(i)}\eqdef \langle F(\cdot),e_i\rangle$.  By \citep[Proposition 12.28]{Combettes}, for every $i=1,\dots,N$, the map $\rr^N\ni x\mapsto \langle x, e_i\rangle$ is $1$-Lipschitz; consequentially, $t\mapsto |R|\omega_{\bar{f}}(t)$ is a modulus of continuity for each $\bar{F}^{(i)}$.  

Fix $\tilde{\epsilon}>0$.  For every $i=1,\dots,N$, by \citep[Theorem 1.1]{ShenYangZhang__OptimalApproxRatesReLUWidthDepth_2022_JPAMath} there exist a $\hat{f}^{(i)}\in \NN[d,1][\operatorname{ReLU}]$ satisfying the following uniform estimate
\begin{equation}
\label{PROOF__lem_UAT_ReLU_ShenYangZhangExtended___UnivariateExtendedVersion}
    \max_{x\in [0,1]^d}\,
    \|
        \bar{F}^{(i)}
        (x)
            -
        \hat{f}^{(i)}
        (x)
    \|
        <
    \tilde{\epsilon}
    .
\end{equation}
Furthermore, each $\hat{f}^{(i)}$ has width and depth given by
\begin{enumerate}
    \item \textbf{Width:} $3^{d+3} \max\{d,3\}$,
    \item \textbf{Depth:} $
        18 
    + 
        2d 
    +
        11
        \Big\lceil
                \big(
                    \omega_{\bar{F}^{(i)}}^{\dagger}\big(
                        \frac{\tilde{\epsilon}}{131 \sqrt{d}}
                    \big)
                \big)^{-d/2}
        \Big\rceil
    $
    .
\end{enumerate}
Applying \citep[Proposition 5]{FloR2021}, there exists a ReLU network $\tilde{f}\in \NN[d,N][\operatorname{ReLU}]$ satisfying 
\begin{equation}
\label{PROOF__lem_UAT_ReLU_ShenYangZhangExtended___DeepParallelization}
        \tilde{f}
    =
        \sum_{i=1}^N \hat{f}^{(i)} \cdot e_i
,
\end{equation}
and whose width and depth is given by
\begin{enumerate}
    \item \textbf{Width}: $
            d(N-1)
        +
            3^{d+3}\max\{d,3\}
    $,
    \item \textbf{Depth}: 
    $
    N\Big(
        19 
    + 
        2d 
    +
        11
        \Big\lceil
                \big(
                    \omega_{\bar{F}^{(i)}}^{\dagger}\big(
                        \frac{\tilde{\epsilon}}{131 \sqrt{d}}
                    \big)
                \big)^{-d/2}
        \Big\rceil
    \Big)
    $.
\end{enumerate}
Define $\hat{f}\eqdef \tilde{f}\circ W_0$ and note that $\hat{f}\in \NN[d,D][\operatorname{ReLU}]$ and that $\hat{f}$ has the same depth and width as does $\tilde{f}$, since the composition of affine maps is again an affine map.  Together,~\eqref{PROOF__lem_UAT_ReLU_ShenYangZhangExtended___McShaneExtension},~\eqref{PROOF__lem_UAT_ReLU_ShenYangZhangExtended___UnivariateExtendedVersion} and~\eqref{PROOF__lem_UAT_ReLU_ShenYangZhangExtended___DeepParallelization} imply the following uniform estimate
\allowdisplaybreaks
\begin{align}
\nonumber
        \max_{x\in K}\,
        \|
            f
            (x)
                -
            \hat{f}
            (x)
        \|
    & =
        \max_{x\in W_0(K)}\,
        \|
            f\circ W_0^{-1}
            (x)
                -
            \tilde{f}
            (x)
        \|
    \\
    \nonumber
    & =
        \max_{x\in W_0(K)}\,
        \|
            F
            (x)
                -
            \tilde{f}
            (x)
        \|
    \\
    \nonumber
    & =
        \max_{x\in W_0(K)}\,
        \|
            \sum_{i=1}^N 
                \langle F(x) - \hat{f}^{(i)}(x)
                    ,
                e_i
                \rangle
        \|
    \\
    \nonumber
    & =
        \max_{x\in W_0(K)}\,
        \Big(
            \sum_{i=1}^N 
            \|
                    \langle F(x) - \hat{f}^{(i)}(x)
                        ,
                    e_i
                    \rangle
            \|
        \Big)^{1/2}
    \\
    \nonumber
    & \leq 
        \max_{x\in [0,1]^d}\,
        \Big(
            \sum_{i=1}^N 
            \|
                    \langle F(x) - \hat{f}^{(i)}(x)
                        ,
                    e_i
                    \rangle
            \|
        \Big)^{1/2}
    \\
    \nonumber
    & \leq
        N^{{1/2}}
        \tilde{\epsilon}
    .
\end{align}
Setting $\tilde{\epsilon}
    \eqdef 
\frac{\epsilon}{N^{1/2}}
$ and using the identity 
$
\omega{\bar{f}}(t) = |R|\omega_f(t) = |K| \big(\frac{d}{2(d+1)}\big)^{1/2}\omega_f(t)
$, for all $t>0$, yields the conclusion.  
\end{proof}

\subsection{Proof of the Main Theorem}
\label{a_Proofs__MainTheorem}
We are now in place to prove the paper's main results.  Each of our main results is a consequence of the following generalization of Theorem~\ref{theorem_MAIN__GeneralCase}; which acts as the central technical result in this paper.  

We require some notation.  
For every $Q\in \nn_+$, let $\mathcal{P}_{\operatorname{fin}:Q}(\yyy)$ denote the set of all probability measures in $\mathcal{P}_1(\yyy)$ supported on at-most $Q$ points.  Define the interior (in the sense of manifold with boundary) of the $N$-simplex by 
\[
\overset{\circ}{\Delta}_N
\eqdef 
\{
w \in (0,1)^N :\, \sum_{i=1}^N \,w_n = 1
\}
.
\]
The following is a lemma refines Theorem~\ref{theorem_MAIN__GeneralCase}.  
\begin{lemma}[{Refined Version of Theorem~\ref{theorem_MAIN__GeneralCase}}]
\label{MAIN_LEMMA_theorem_UAT_qualitative}
Let $\xxx$ be a compact metric space, $\yyy$ be a path connected polish metric space, $f:\xxx\rightarrow \mathcal{P}_1(\yyy)$ be uniformly continuous with modulus of continuity $\omega_f$, and $\varphi:\xxx\rightarrow \rr^d$ satisfies condition~\ref{cond_feature}.  
Let $\mathcal{N}:(0,\infty)\rightarrow \nn$ be such that $N^{\operatorname{cov}}_{f(\xxx)}(r)\leq \mathcal{N}(r)$ for all $r>0$.
If $\sigma$ satisfies condition~\ref{condi_KL} then, for every ``quantization error'' $\epsilon_Q>0$ and 
every ``approximation error'' $0<\epsilon_A \leq 2^{-2}\sup_{t\in [0,\infty)}\omega_{f\circ \varphi^{-1}}(t)$, there exists a probabilistic transformer $\hat{T}:\xxx\rightarrow \mathcal{P}_1(\yyy)$ satisfying:
\begin{enumerate}
    \item[(i)] \textbf{Universal Approximation:}
        $
        \max_{x\in \xxx}\,
            \mathcal{W}_1\left(
                \hat{T}(x)
                    ,
                f(x)
            \right)
                \leq \epsilon_A + \epsilon_Q
            ,
        $
    \item[(ii)] \textbf{$\epsilon_Q$-Optimal Discretization:} 
    \begin{enumerate}
        \item[(a)] $
        N
            =
        \mathcal{N}^{\operatorname{cov}}_{f(\xxx)}(\epsilon_Q/3)
        \leq 
                \min\left\{
                \mathcal{N}\big(
                        \frac{\epsilon_Q}{3}
                    \big)
            ,
                \left\lceil
                        \left(
                            \frac{
                                2^{\frac{5}{2}}
                                d\omega_{\varphi}\left(
                                \left|
                                    \xxx
                                \right|
                                \right)
                            }{
                                (d+1)^{\frac{1}{2}}
                                \omega_{\varphi}^{-1}\circ \omega_f^{-1}(\frac{4}{3}\epsilon_Q)
                            }
                        \right)^d
                \right\rceil
            \right\}
        ,
        $
        \item  There exists $\mu_1,\dots,\mu_N\subseteq f(\xxx)$ such that $
        \max_{\nu\in f(\xxx)}\,
        \min_{n=1,\dots,N}\,
        	\mathcal{W}_1\big(
        	\nu
        		,
        	\mu_n
        	\big)
        	<
        	\epsilon_Q/2.
        $
        \item[(c)] $Q\eqdef 
    \min\{\tilde{Q} \in \nn_+:\,
    (\exists \hat{\mu}_{n}\in \mathcal{P}_{\operatorname{fin}:\tilde{Q}}(\yyy))\,
    \max_{n\leq N}\,
    \mathcal{W}_1\left(\mu_n,\hat{\mu}_n\right) 
        \leq 
    \frac{\epsilon_Q}{2}
    \}
    ,
    $
    \end{enumerate}
    \item[(iii)] \textbf{$\epsilon_A/2$-Optimal Metric Projection:} For every $x \in \xxx$ satisfies
    \[
    \mathcal{W}_1\Big(\hat{T}(x),f(x)\Big)
        \leq 
            \frac{\epsilon_A}{2}
        +
            \min_{n\leq N}
            \,
            \mathcal{W}_1
            \Big(
                \hat{\mu}_n
            ,
                f(x)
            \Big)
    .
    \]
\end{enumerate}
The width of $\hat{T}$ and the number of point masses is recorded in Table~\ref{tab_general_approximation_rates}, on a case-by-case basis depending on $\sigma$. 

Furthermore, if the following ``quantization rate'' is finite
\begin{equation}
\label{eq_MAIN_LEMMA_theorem_UAT_qualitative___optimalquantizationrate}
    \resizebox{0.90\hsize}{!}{$
        r^{\star}
          \eqdef 
        \min\left\{
            r>0:\,
        \forall \mu\in f(\xxx) 
        \,
            \exists C_{\mu},Q_{\mu}>0
        \,
        \forall Q\in \nn_+ 
            \mbox{ if } 
                Q\geq Q_{\mu} 
            \mbox{ then }
            \inf_{\tilde{\mu}\in \mathcal{P}_{\operatorname{fin}:Q}(\yyy)}
            \,
            \mathcal{W}_1\left(
                \mu
                ,
                \sum_{q=1}^Q w_q\mu_n
            \right)
                < 
            C_{\mu}Q^{-r}
        \right\}
    .
    $}
\end{equation}
then $Q\in \big(
2\max_{n=1,\dots,N}\, C_{\mu_n}
\big)^r
\epsilon_Q^{-r}$.
\end{lemma}
\begin{proof}[{Proof of Lemma~\ref{MAIN_LEMMA_theorem_UAT_qualitative}}]
Let $f:\xxx\rightarrow \mathcal{P}_1(\yyy)$ be uniformly continuous with modulus of continuity $\omega_f$.  
Fix a \textit{``quantization error''}
$\epsilon_Q>0$ and an approximation error $\epsilon_A \leq 2^{-2}\sup_{t\in [0,\infty)}\omega_{f\circ \varphi^{-1}}(t)
=
2^{-2} \, \omega_f\Big(\big(
C_{\varphi:L} \, t
\big)^{1/\alpha}\Big)
$.
\hfill\\
\textbf{Step 1 - Covering and Quantization}
\hfill\\
Since $f$ is continuous and $\xxx$ is compact then $f(\xxx)$ is compact.  Furthermore, since $\omega_f$ is a modulus of continuity for $f$ then the diameter of $\varphi(\xxx)$ is bounded above by
\[
\left|\varphi(\xxx)\right| 
    \leq 
\omega_{\varphi}\big(
    \big|\xxx\big|
\big)
    \leq 
C_{\varphi:U}|\xxx|^{\alpha}
.
\]
By Lemma~\ref{lem_covering_lemma} and the assumption that $\mathcal{N}$ upper-bounds $N^{\operatorname{cov}}_{f(\xxx)}(r)$, there is an $N\in \nn_+$ satisfying 
\[
    N
\leq 
    \min\left\{
        \mathcal{N}\big(
                \frac{\epsilon_Q}{3}
            \big)
    ,
        \left\lceil
                \left(
                    \frac{
                        2^{\frac{5}{2}}
                        d\omega_{\varphi}\left(
                        \left|
                            \xxx
                        \right|
                        \right)
                    }{
                        (d+1)^{\frac{1}{2}}
                        \omega_{\varphi}^{-1}\circ \omega_f^{-1}(\frac{4}{3}\epsilon_Q)
                    }
                \right)^d
        \right\rceil
    \right\}
\]
and there are probability measures $\{\mu_n\}_{n=1}^{N}$ in $f(\xxx)\subseteq \ppp[\yyy][1]$ satisfying the covering estimate
\begin{equation}
\label{PROOF___theorem_UAT_qualitative}
        \max_{x \in \xxx}\, \min_{n\leq N}
        W_1 \left(
        f(x),\mu_n
        \right)
            <
        \frac{\epsilon_Q}{3}
    .
\end{equation}

\textbf{COMMENT:} \textit{We are faced with two cases.  Either the condition in~\eqref{eq_MAIN_LEMMA_theorem_UAT_qualitative___optimalquantizationrate} holds or it does not.  Once either case is addressed separately the proofs will again be the same afterwards.  }
\begin{enumerate}
    \item \textbf{Case A -~\eqref{eq_MAIN_LEMMA_theorem_UAT_qualitative___optimalquantizationrate} does not hold:}  
    Since $\cup_{Q\in \nn_+}\, \mathcal{P}_{fin:Q}(\yyy)$ is dense in $\ppp[\yyy][1]$ (e.g.\ see the proof of \citep[Theorem 6.18]{VillaniOptTrans}) then, for every $n\leq N$ there exist some $Q_n'\in \nn_+$ and finitely supported probability measures $\sum_{q=1}^{Q_n'}\, w_q^{(n)}\, d_{y_{n,q}'}$ satisfying
    \begin{equation}
    \label{eq_quantization_step_A__Case_A}
        \max_{n=1,\dots,N}\,
            \mathcal{W}_1\big(
                \mu_n
            ,
                \sum_{q=1}^{Q_n'}\, w_q^{(n)}\, d_{y_{n,q}'}
            \big) 
                < 
            \frac{\epsilon_Q}{2}
    .
    \end{equation}
    Let $Q^{\star}\eqdef \max_{n\leq N}\, Q_{\mu_n}'$. 
    \item \textbf{Case B -~\eqref{eq_MAIN_LEMMA_theorem_UAT_qualitative___optimalquantizationrate} holds:}  
    By~\eqref{eq_MAIN_LEMMA_theorem_UAT_qualitative___optimalquantizationrate} we have that $r^{\star}$ is finite.  Therefore, there are $C_{\mu_n},Q_{\mu_n}>0$ such that: for every $Q\in \nn_+$ greater than $Q_{\mu_n}$ the following holds
    \begin{equation}
    \label{eq_quantization_step_A__Case_B}
        \max_{n\leq N}\,
        \inf_{\nu_n\in \mathcal{P}_{\operatorname{fin}:Q}(\yyy)}
        \,
        \mathcal{W}_1\left(
        \mu_n
            ,
        \nu_n
        \right)
            \leq 
        \frac{
            C_{\mu_n}
        }{
            Q^{r^{\star}}
        }
            \leq
        \frac{\epsilon_Q}{2}
        .
    \end{equation}
    Let $C^{\star}\eqdef \max_{n\leq N}\, C_{\mu_n}$ and $Q^{\star}\eqdef \max_{n\leq N}\, Q_{\mu_n}$. 
\end{enumerate}
In either case~\eqref{eq_quantization_step_A__Case_A} \textit{(resp.~\eqref{eq_quantization_step_A__Case_B} if the condition of~\eqref{eq_MAIN_LEMMA_theorem_UAT_qualitative___optimalquantizationrate} is met)} imply that for every $n\leq N$ and every $Q\in \nn_+$ with $Q\geq Q^{\star}$ there exists $\sum_{q=1}^{Q_n'}\,w_{q}^{(n)} \delta_{y_{n,q}'}\in \mathcal{P}_{\operatorname{fin}:Q}(\yyy)$ with $1\leq Q_n'\leq Q$ satisfying
\begin{equation*}
    \mathcal{W}_1\Big(
        \mu_n
            ,
        \sum_{q=1}^{Q_n'}\,w_q^{(n)} \delta_{y_{n,q}'}
    \Big)
        \leq 
    \frac{
        C_{\mu_n}
    }{
        Q^{r^{\star}}
    }
        \leq 
    \epsilon_Q
    .
\end{equation*}
Since $\yyy$ was assumed to be a path-connected space then so is $\mathcal{P}_1(\yyy)$.  Therefore, for every $\delta>0$, we may choose distinct $y_{1}^{(1)}$, $\dots$, $y_{Q}^{(N)}$ in $\yyy$ such that
$
\mathcal{W}_1\Big(
        \sum_{q=1}^{Q_n'}\,w_{q}^{(n)} \delta_{y_{q}^{(n)}}
            ,
        \sum_{q=1}^{Q_n'}\,w_{q}^{(n)} \delta_{y_{n,q}'}
    \Big) 
    \leq 
    \delta.
$
Thus, for every $n\leq N$, if $Q_n'<Q$ then for each $Q_n'<q\leq Q$ we define $w_{q}^{(n)}=0$ and we observe that $(w_{q}^{(n)})_{q=1}^Q\in \Delta_Q$.  Therefore, it holds that
\[
\mathcal{W}_1\Big(
        \sum_{q=1}^{Q_n}\,w_{q}^{(n)} \delta_{y_{q}^{(n)}}
            ,
        \sum_{q=1}^{Q_n'}\,w_{q}^{(n)} \delta_{y_{n,q}'}
    \Big) 
=
\mathcal{W}_1\Big(
        \sum_{q=1}^{Q_n'}\,w_{q}^{(n)} \delta_{y_{q}^{(n)}}
            ,
        \sum_{q=1}^{Q_n'}\,w_{q}^{(n)} \delta_{y_{n,q}'}
    \Big) 
    \leq 
    \delta.
\]
Set $\delta \eqdef \frac{
        C_{\mu_n}
    }{
        Q^{r^{\star}}
    }
    .
$
Therefore, for every $Q\in \nn_+$ with $Q\geq  
\max
\big\{
Q^{\star} 
    ,
\,(\frac{3}{2}C)^{1/r^{\star}}
\epsilon_Q^{-1/r^{\star}}
\big\}$ 
there exist $\hat{\mu}_1$, $\dots$, $\hat{\mu}_N\in \mathcal{P}_{\operatorname{fin}:Q}(\yyy)$ in general position each of whose support has \textit{exactly} $Q$ atoms such that, and such that
\begin{equation}
\label{PROOF___theorem_UAT_qualitative____eq_quantization_step_C}
    \max_{n\leq N}
    \,
        \mathcal{W}_1\Big(
        \mu_n
            ,
        \hat{\mu}_n
    \Big)
        \leq 
    \frac{2\epsilon_Q}{3}
    .
\end{equation}
Combining the quantization estimate in~\eqref{PROOF___theorem_UAT_qualitative____eq_quantization_step_C} and the covering estimate on the left-hand side of~\eqref{PROOF___theorem_UAT_qualitative} implies that
\begin{equation}
\label{PROOF___theorem_UAT_qualitative____STEP_CoveringAndQuantization}
    \max_{x \in \xxx}\, \min_{n\leq N}
        \mathcal{W}_1
        \left(
            f(x)
                ,
            \hat{\mu}_n
        \right)
            \leq 
        \epsilon_Q
    .
\end{equation}
\textbf{Comment:} \textit{In the next step, we will approximately metrically project on the convex hull of measures $\{\hat{\mu}_n\}_{n=1}^N$.}
\hfill\\
\textbf{Step 2 - Approximate Metric Projection}
\hfill\\
For simplicity, denote $\bar{Y}
\eqdef 
\Phi(\operatorname{hull}(\hat{\mu}_n)_{n=1}^N)$.  Consider an \textit{''approximation error''} $\epsilon_A>0$.

By construction, the family $\left\{\hat{\mu}_n\eqdef \sum_{q=1}^{Q_n}\,w_{q}^{(n)} \delta_{y_{q}^{(n)}}\right\}_{n=1}^N$ is in general position.  Therefore, the map $\psi:\Delta_N\ni w\mapsto \sum_{n=1}^N\,w_n \hat{\mu}_n$ is injective.  Since $\Delta_N$ is compact then $\psi$ is a homeomorphism onto its image.  
Thus, the set $\Phi\circ \psi(\Delta_N)$ is a convex subset of the finite-dimensional linear subset of $\text{\AE}(\yyy,y_0)$ spanned by the set $\{\Phi(\hat{\mu}_n)\}_{n=1}^N$.  
Let $\epsilon_A>0$, we may apply \citep[Corollary 1.2]{ModulusOfContinuityAdditiveApproxProjection1994Albrekht} to conclude that there is some $c>0$ (depending only on $\bar{Y}$ and \textit{independant of} $\epsilon_A$) such that there is a uniformly-continuous map $\Pi_{\epsilon_A}:\text{\AE}(\yyy,y_0)\rightarrow \bar{Y}$ satisfying the following $\epsilon_A/2$-approximate metric projection property
\begin{equation}
\label{PROOF___theorem_UAT_qualitative_A}
    \sup_{F \in \text{\AE}(\yyy,y_0)}\,
        \left\|
            F - \Pi_{\epsilon_A}(F)
        \right\|_{\text{\AE}(\yyy,y_0)}
        -
        \inf_{\tilde{F} \in \bar{Y}}
        \,
            \left\|
            F - \tilde{F}
        \right\|_{\text{\AE}(\yyy,y_0)}
        \leq \epsilon_A/2
        .
\end{equation}
Furthermore, the following is a modulus of continuity for $\Pi_{\epsilon_A}$
\begin{equation}
\label{PROOF___theorem_UAT_qualitative____modulus_of_projection}
        \omega_{\Pi_{\epsilon_A}} (t) 
            \eqdef 
        c\,\Big(1 - \frac{2}{\epsilon_A}\Big) t
        ,
\end{equation}
where $c$ is a constant depending only on $\Phi(\operatorname{hull}\{\hat{\mu}_n\}_{n=1}^N)\eqdef 
\{
\pp\in \mathcal{P}_1(\yyy):\, (\exists w \in \Delta_N)\, 
\pp = \sum_{n=1}^N \, w_n \hat{\mu}_n
\}
$.

\textbf{Comment:} 
\textit{``We will pull-back this approximate metric projection of $\text{\AE}(\yyy,y_0)$ on $\Phi(\operatorname{hull}(\hat{\mu}_{n=1}^N))$ to an approximate metric projection of $\mathcal{P}_1(\yyy)$ on $\operatorname{hull}(\hat{\mu}_{n=1}^N)$''.}

Since $\Pi_{\epsilon_A}(\text{\AE}(\yyy,y_0))\subseteq \Phi(\mathcal{P}_1(\yyy))$ then, the map $\Pi^{\epsilon_A}\eqdef \Phi^{-1}\circ \Pi_{\epsilon_A}\circ \Phi$ is well-defined.  
By~\eqref{PROOF___theorem_UAT_qualitative____modulus_of_projection} and the fact that $\Phi$ (and therefore $\Phi^{-1}$ also) is an isometry then $\Pi^{\epsilon_A}$ is uniformly continuous and $\omega_{\Pi_{\epsilon_A}}$ is also a modulus of continuity for the map $\Pi^{\epsilon_A}$.  Furthermore,~\eqref{PROOF___theorem_UAT_qualitative_A} implies that, for every $\mu \in \mathcal{P}_1(\yyy)$
\[
\begin{aligned}
    \mathcal{W}_1\Big(
        \mu
    ,
        \Pi^{\epsilon_A}(\mu)
    \Big)
        = &
    \Big\|
        \Phi(\mu)
    -
        \Phi(\Pi^{\epsilon_A}(\mu))
    \Big\|_{\text{\AE}(\yyy,y_0)}
    \\
    = & 
    \Big\|
        \Phi(\mu)
    -
        \Phi(\Phi^{-1}\circ \Pi_{\epsilon_A}\circ \Phi(\mu))
    \Big\|_{\text{\AE}(\yyy,y_0)}
    \\
    = & 
    \Big\|
        \Phi(\mu)
    -
         \Pi_{\epsilon_A}\big( \Phi(\mu)\big)
    \Big\|_{\text{\AE}(\yyy,y_0)}
    \\
    \leq  &
        \inf_{\tilde{F} \in \bar{Y}}
        \,
            \left\|
            \Phi(\mu) - \tilde{F}
        \right\|_{\text{\AE}(\yyy,y_0)}
        +\epsilon_A/2
    \\
    = & 
    \inf_{\nu\in \operatorname{hull}(\{\hat{\mu}\}_{n=1}^N}
        \,
    \mathcal{W}_1\Big(
        \mu
        ,
        \nu
        \Big)
        +\epsilon_A/2
.
\end{aligned}
\]
Thus,~\eqref{PROOF___theorem_UAT_qualitative_A} implies that: for every $x\in f(\xxx)$ it holds that
\begin{equation}
\label{PROOF___theorem_UAT_qualitative__in_Wasserstein_Space}
    \mathcal{W}_1\Big(
        \mu
    ,
        \Pi^{\epsilon_A}(\mu)
    \Big)
-
    \inf_{\nu\in \operatorname{hull}\big(\{\hat{\mu}_n\}_{n=1}^N}\big)
        \,
    \mathcal{W}_1\Big(
        \mu
        ,
        \nu
    \Big)    
\leq 
    \frac{\epsilon_A}{2}
.
\end{equation}

\hfill\\
\textbf{Step 3 - Translating the Approximation Problem to the $N$-Simplex $\Delta_N$}
\hfill\\
Define the barycenter $\bar{\Delta}_N$ of the $N$-simplex by $\bar{\Delta}_N\eqdef (1/N,\dots,1/N) \in \Delta_N$ and define the map
\[
h:[0,1]\times \Delta_N 
\ni (t,w)
\mapsto 
t(w - \bar{\Delta}_N) + \bar{\Delta}_N
\in \Delta_N
.
\]
Define the $1$-Lipschitz map $H\eqdef h(t^{\star},\cdot)$ where $[0,1]\ni t^{\star}
\eqdef 
    \max\{ t\in [0,1]:\, \|H(t,\Delta_N)-\Delta_N\| 
        \leq 
    \epsilon_A/2
    \}$
and where $\bar{\Delta}_N = \big(\frac{1}{N},\dots,\frac{1}{N}\big)\in \Delta_N$. In particular, $\omega_H^{-1}(t)=t$.  Furthermore, since $N>1$ then, $t^{\star}$ is given by
\[
    t^{\star}
=
    1 - \frac{\epsilon_A}{2 \|\partial \Delta_N - \bar{\Delta}_N\|}
=   
    1 - \frac{\epsilon_A}{
    2 
    \|\bar{\Delta}_N - (1,0,\dots)\|
    }
=
    \frac{
        (N - 1) - \sqrt{N}\epsilon_A
    }{
        2(N-1)
    }
.
\]
Define the affine isometry $W:\mathbb{R}^{N-1}\ni z \mapsto (z_1,\dots,z_{N-1},1)\in \mathbb{R}^N$ and define the map
\[
\rho: \mathbb{R}^{N-1}\ni z \mapsto \operatorname{Softmax}_N \circ W(z) \in \overset{\circ}{\Delta}_N
.
\]
By Lemma~\ref{lemma_Lipschitz_softmax} $\operatorname{Softmax}_N$ is $\frac{\sqrt{N-1}}{N}$-Lipschitz and therefore $\rho$ is $\frac{\sqrt{N-1}}{N}$-Lipschitz and since $W$ is an isometry. Moreover, it has the following continuous right-inverse
\[
R: \overset{\circ}{\Delta}_N\ni y \mapsto  (\ln(y_n) - \ln(y_N) + 1)_{n=1}^{N-1}  \in \rr^{N-1}.  
\]
Furthermore, by the Rademacher-Stepanov Theorem (see \citep[Theorem 3.1.6]{Federer_GeometricMeasureTheory_1978}), $R$ is $\max_{u\in H_{t^{\star}}(\Delta_N)}\, \|\nabla R(u)\|=\frac{1}{t^{\star}}$-Lipschitz on $H_{t^{\star}}(\Delta_N)$.  Therefore,
\[
    \omega_R^{-1}(t)
        =
    \frac{
        (N - 1) - \sqrt{N}\epsilon_A
    }{
        2(N-1)
    }\,
        t
    .
\]
Next, the bi-H\"{o}lder regularity of the feature map $\varphi:\xxx\rightarrow \rr^d$ given by Assumption~\ref{cond_feature}, implies that $\varphi^{-1}$ has modulus of continuity $\omega_{\varphi^{-1}}(t)=\frac1{C_{\varphi:L}^{1/\alpha}} t^{1/\alpha}$.  
Consequentially,
\[
    \omega_{\varphi^{-1}}^{-1}(t)
        =
    C_{\varphi:L} \, 
    t^{\alpha}
    .
\]
Moreover, by Lemma~\ref{lem_Lipschitz_constant_parameterization} $\psi^{-1}$ is well-defined and has modulus of continuity $\omega_{\psi^{-1}}=Ct$ for some constant $C>0$ depending only on $\{\hat{\mu}_n\}_{n=1}^N$.  
Consider the map
\[
\bar{f}
    \eqdef 
R\circ H\circ \psi^{-1}\circ \Pi_{\epsilon_A} \circ \Phi \circ f \circ \varphi^{-1}
: \varphi(\xxx)\rightarrow \rr^{N-1}
.
\]
Putting it all together, and using the fact that $0<\epsilon_A<2$, we have that $\bar{f}$ is uniformly continuous and its modulus of continuity $\omega_{\bar{f}}$ satisfying: for every $t>0$
\[
\begin{aligned}
    \omega_{\bar{f}}^{-1}(t)
        \eqdef &
    C_{\varphi:L}\Big(
        \omega_f^{-1}\Big(
            \frac{\epsilon_A}{c(\epsilon_A-2)}
            \frac{1}{C}
            \frac{
                (N - 1) - \sqrt{N}\epsilon_A
            }{
                2(N-1)
            }
            \,
            t
        \Big)
    \Big)^{\alpha}
.
\end{aligned}
\]
In particular, since the inverse of a monotone increasing function (such as $\omega_f$) is monotone increasing then, for any $0<\epsilon_A\leq 1$ we deduce the following bounds on $\omega_f^{-1}$: for every $t>0$
\begin{equation}
\label{PROOF___theorem_UAT_qualitative____BoundsOnModulus}
        C_{\varphi:L}\Big(
        \omega_f^{-1}\Big(
            \tilde{C}
            \frac{\epsilon_A}{(\epsilon_A-2)}
            \frac{
                N - \sqrt{N}- 1
            }{
                N
            }
            \,
            t
            \Big)
        \Big)^{\alpha}
    \leq 
        \omega_{\bar{f}}^{-1}(t) 
    \leq 
        C_{\varphi:L}\Big(
        \omega_f^{-1}\Big(
            \tilde{C}
            \frac{\epsilon_A}{(\epsilon_A-2)}
            \,
            t
        \Big)
    \Big)^{\alpha}
;
\end{equation}
where $\tilde{C}\eqdef \frac{1}{2Cc}$; consequentially, $\tilde{C}$ depends only on $\{\hat{\mu}_n\}_{n=1}^N$.
If $\sigma=\operatorname{ReLU}$ then Lemma~\ref{lem_UAT_ReLU_ShenYangZhangExtended} applies, or if $\sigma\in C^{\infty}(\rr)$ or $\sigma \in C(\rr)$ is polynomial of degree at-least $2$ then, \citep[Theorem 10]{paponkratsios2021quantitative} applies when furthermore $\sigma \in C(\rr)$ simply satisfies the Kidger-Lyons condition (Assumption~\ref{condi_KL}) then \citep[Theorem 3.2]{kidger2019universal} applies.  
In either case, depending on $\sigma$, for any $\epsilon_A'>0$, there exists a deep feedforward network $\hat{f}^{(1)}:\rr^d\rightarrow \overset{\circ}{\Delta}_N$ satisfying the following uniform estimate on $\varphi(\xxx)$
\begin{equation}
\label{PROOF___theorem_UAT_qualitative____UAT_application_A}
    \max_{u\in \varphi(\xxx)}\,
    \Big\|
    \hat{f}^{(1)}(u) - \bar{f}(u)
    \Big\|_2
        < 
    \epsilon_A' 
    ;
\end{equation}
furthermore, using the lower-bound on $\omega_{\bar{f}}^{-1}$ in~\eqref{PROOF___theorem_UAT_qualitative____BoundsOnModulus}, $\hat{f}$ has the following width and depths:
\begin{enumerate}
\item \textbf{Case 1 - $\sigma = \operatorname{ReLU}$:} 
\begin{enumerate}
    \item[(a)] \textbf{Width}: $
            d(N-1)
        +
            3^{d+3}\max\{d,3\}
    $,
    \item[(b)] \textbf{Depth}: 
        $
        N\Big(
            C_d^{(1)}
        +
            11
            \Big\lceil
                    \tilde{C}_d^{(2)} 
                    |\xxx|^{\alpha d/2} 
                     \Big(
                    \omega_f^{-1}\Big(
                        \tilde{C}_d^{(3)}  
                        \,
                        \frac{\epsilon_A^2}{(\epsilon_A-2)}
                        \,
                        \frac{
                            N - \sqrt{N}- 1
                        }{
                            N^{3/2}
                        }
                        \Big)
                    \Big)^{-\alpha d/2}
            \Big\rceil
        \Big)
        $,
    \end{enumerate}
    where $\tilde{C}_d^{(2)} \eqdef  C_d^{(d)} C_{\varphi:L}^{-\alpha d /2} C_{\varphi:U}$, $\tilde{C}_d^{(3)}\eqdef \tilde{C} C_d^{(3)}$, 
    $C_d^{(1)}$, $C_d^{(2)}$, and $C_d^{(3)}$ are as in Lemma~\ref{lem_UAT_ReLU_ShenYangZhangExtended} and thus they only depend on the embedding dimension $d$ and on $\varphi$'s regularity.  
\item \textbf{Case 2 - $\sigma \in C^{\infty}(\rr)$ and satisfies \ref{condi_KL}:} $\hat{f}$ has width $2+N+d$ and depth at-most
    \[
    \mathscr{O}\left(
        \frac{
            C_{\varphi,d}\,
            N 
	        \big(
	            |\xxx|^{2d/\alpha}
	        \big)
        }{
            \omega_f^{-1}\Big(
                C_{d,\hat{\mu}_{\cdot}}
                \frac{
                    \epsilon_A
                }{
                    |2-\epsilon_A|
                }
                \frac{
                    N - \sqrt{N}- 1
                }{
                    N^{1+2d}
                }
                \,
                    \big(
                		\epsilon_A'
                	\big)^{2d}
                \Big)
            \Big)^{\alpha}
        }
    \right)
    ,
    \]
where, $
C_{d,\hat{\mu}_{\cdot}}
    \eqdef 
(C_2(1+\frac1{4} d))^{2d}\tilde{C}$, $
C_{\varphi,d}
    \eqdef 
\frac{2^{2d}C_{\varphi:U}^{2d}}{C^{2d}C_{\varphi:L}}
            $ where $C>0$ is an absolute constant.
\item \textbf{Case 3 - $\sigma\in C(\rr)$ is a polynomial of degree at-least $2$:} 
    \[
    \mathscr{O}\left(
        \frac{
            C_{\varphi,d}\,
            N 
	        \big(
	            |\xxx|^{(4d+2)/\alpha}
	        \big)
        }{
            \omega_f^{-1}\Big(
                C_{d,\hat{\mu}_{\cdot}}
                \frac{
                    \epsilon_A
                }{
                    |2-\epsilon_A|
                }
                \frac{
                    N - \sqrt{N}- 1
                }{
                    N^{3+4d}
                }
                \,
                    \big(
                		\epsilon_A'
                	\big)^{4d+2}
                \Big)
            \Big)^{\alpha}
        }
    \right)
    ,
    \]
    where, $
C_{d,\hat{\mu}_{\cdot}}
    \eqdef 
(C_2(1+\frac1{4} d))^{4d+2}\tilde{C}$, $
C_{\varphi,d}
    \eqdef 
\frac{2^{4d+2}C_{\varphi:U}^{4d+2}}{C^{4d+2}C_{\varphi:L}}$ 
where $C>0$ is an absolute constant,
\item \textbf{Case 4 - $\sigma\in C(\rr)$ and satisfies \ref{condi_KL}:} 
    $\hat{f}$ has width $2+N+d$.
\end{enumerate}
Since $\varphi$ is a homeomorphism then it is a bijection between $\varphi(\xxx)$ and $\xxx$ and therefore~\eqref{PROOF___theorem_UAT_qualitative____UAT_application_A} implies 
\allowdisplaybreaks
\begin{equation}
\label{PROOF___theorem_UAT_qualitative____UAT_application}
\resizebox{0.92\hsize}{!}{$
\begin{aligned}
\sup_{x\in \xxx}\,
    \left\|
        \rho\circ \hat{f} \circ \varphi(x) 
            - 
        \psi^{-1}\circ \Pi_{\epsilon_A}\circ \Phi \circ f(x)
    \right\|
        = &
    \sup_{x\in \xxx}\,
    \left\|
        \rho\circ \hat{f} \circ \varphi(x) 
            - 
        \rho\circ R \circ \psi^{-1}\circ \Pi_{\epsilon_A}\circ \Phi \circ f\circ \varphi^{-1}\circ \varphi(x)
    \right\|
    \\
    \leq &
        \sup_{x\in \xxx}\,
    \operatorname{Lip}(\rho)
    \left\|
        \hat{f} \circ \varphi(x) 
            - 
        R \circ \psi^{-1}\circ \Pi_{\epsilon_A}\circ \Phi \circ f\circ \varphi^{-1}\circ \varphi(x)
    \right\|
    \\
    \leq  &
        \sup_{u\in \varphi(\xxx)}\,
    \frac{\sqrt{N-1}}{N}
    \left\|
        \hat{f}(u)
            - 
         R \circ \psi^{-1}\circ \Pi_{\epsilon_A}\circ \Phi \circ f\circ \varphi^{-1}(u)
    \right\|
    \\
    \leq &
    \frac{\sqrt{N-1}}{N}
        \sup_{u\in \varphi(\xxx)}\,
    \left\|
        \hat{f}(u)
            - 
         R \circ H\circ \psi^{-1}\circ \Pi_{\epsilon_A}\circ \Phi \circ f\circ \varphi^{-1}(u)
    \right\|
    \\
    & +
    \frac{\sqrt{N-1}}{N}
    \sup_{u\in \varphi(\xxx)}\,
    \left\|
        R \circ H\circ \psi^{-1}\circ \Pi_{\epsilon_A}\circ \Phi \circ f\circ \varphi^{-1}(u)
            - 
         R \circ \psi^{-1}\circ \Pi_{\epsilon_A}\circ \Phi \circ f\circ \varphi^{-1}(u)
    \right\|
    \\
    &
    =
    \frac{\sqrt{N-1}}{N}
    \sup_{u\in \varphi(\xxx)}\,
    \left\|
        \hat{f}(u)
            - 
         \bar{f}(u)
    \right\|
    \\
    & +
    \frac{\sqrt{N-1}}{N}
    \sup_{u\in \varphi(\xxx)}\,
    \left\|
        R \circ H\circ \psi^{-1}\circ \Pi_{\epsilon_A}\circ \Phi \circ f\circ \varphi^{-1}(u)
            - 
         R \circ \psi^{-1}\circ \Pi_{\epsilon_A}\circ \Phi \circ f\circ \varphi^{-1}(u)
    \right\|
    \\
    & \leq 
    \frac{\sqrt{N-1}}{N}
    \epsilon_A' 
    + 
    \frac{\sqrt{N-1}}{N}
    \operatorname{Lip}(R)
    \sup_{z \in \Delta_N}\,
    \left\|
        h_{t^{\star}}(z) - z
    \right\|
    \\ 
    & \leq 
    \frac{\sqrt{N-1}}{N}\left(
            \epsilon_A' + \frac{\epsilon_A}{2}
        \right)
    \\
    & = 
    \epsilon_A
    .
\end{aligned}
$}
\end{equation}
Since the composition of affine maps is again affine, then $\hat{f}\eqdef W\circ \hat{f}^{(1)}:\rr^d \rightarrow \rr^N$ is a feedforward network.  
Set $\epsilon_A'\eqdef \frac{N}{2\sqrt{N-1}}\epsilon_A$.  Thus, the following is a probabilistic transformer
\[
    \hat{T}
\eqdef 
    \sum_{n=1}^N 
    \,
    (\operatorname{Softmax}_N \circ \hat{f}\circ \varphi(\cdot))_n
    \,
    \hat{\mu}_n
=
    \Phi^{-1}\circ \psi \circ \rho \circ \hat{f}\circ \varphi
,
\]
\textbf{Comment:} \textit{``We now verify that $\hat{T}$ implements an $\epsilon_A/2$-optimal metric projection of $f(\xxx)$ onto the hull of the set of empirical measures $\{\hat{\mu}_n\}_{n=1}^N$.''}
\hfill\\
\textbf{Step 4 - $\epsilon$-Optimal Metric Projection}
\hfill\\
Therefore,~\eqref{PROOF___theorem_UAT_qualitative____UAT_application} implies that
\begin{equation}
\label{PROOF___theorem_UAT_qualitative____UAT_application_B}
\resizebox{0.92\hsize}{!}{$
\begin{aligned}
    \sup_{x\in \xxx}\, 
        \mathcal{W}_1
        \Big(
            \hat{T}(x)
        ,
            \Pi^{\epsilon_A}\circ f(x)
        \Big)
    = &
        \sup_{x\in \xxx}\, 
        \mathcal{W}_1
        \Big(
            \hat{T}(x)
        ,
            \Phi^{-1}\circ \Pi_{\epsilon_A}\circ \Phi\circ f(x)
        \Big)
    \\
    = &
        \sup_{x\in \xxx}\, 
        \Big\|
            \psi\circ \rho\circ \hat{f}\circ \varphi(x)
        -
            \Phi\circ \Phi^{-1}\circ \Pi_{\epsilon_A}\circ \Phi\circ f(x)
        \Big\|_{\text{\AE}}
    \\
        = &
        \sup_{x\in \xxx}\, 
        \Big\|
            \psi\circ \rho\circ \hat{f}\circ \varphi(x)
        -
            \psi\circ \psi^{-1}\circ \Pi_{\epsilon_A}\circ \Phi\circ f(x)
        \Big\|_{\text{\AE}}
    \\
    \leq 
    &
        \sup_{x\in \xxx}\, 
        \operatorname{Lip}(\psi)
        \Big\|
            \rho\circ \hat{f}\circ \varphi(x)
        -
            \psi^{-1}\circ \Pi_{\epsilon_A}\circ \Phi\circ f(x)
        \Big\|_{2}
    \\
    \leq &
    \epsilon_A
    .
\end{aligned}
$}
\end{equation}
\textbf{Comment:} \textit{It remains to show that the probabilistic transformer $\hat{T}$ approximates $f$ on $\xxx$ with uniform error $\epsilon_A+\epsilon_Q$.}
\hfill\\
\textbf{Step 5- Verifying The Approximation of $f$ on $\xxx$}
\hfill\\
Combining the estimates in~\eqref{PROOF___theorem_UAT_qualitative____STEP_CoveringAndQuantization},~\eqref{PROOF___theorem_UAT_qualitative__in_Wasserstein_Space}, and in~\eqref{PROOF___theorem_UAT_qualitative____UAT_application_B}: for every $x\in \xxx$ we have that
\begin{align}
\label{PROOF___theorem_UAT_qualitative____CONCLUSION_A}
\mathcal{W}_1
        \Big(
            \hat{T}(x)
        ,
            f(x)
        \Big)
    & \leq 
        \mathcal{W}_1
        \Big(
            \hat{T}(x)
        ,
            \Pi^{\epsilon_A}\circ f(x)
        \Big)
    +
        \mathcal{W}_1
        \Big(
            \Pi^{\epsilon_A}\circ f(x)
        ,
            f(x)
        \Big)
    \\
    \nonumber
    & \leq 
        \mathcal{W}_1
        \Big(
            \hat{T}(x)
        ,
            \Pi^{\epsilon_A}\circ f(x)
        \Big)
    +
        \inf_{\nu \in \operatorname{hull}(\{\hat{\mu}_n\}_{n=1}^N)}
        \,
        \mathcal{W}_1
        \Big(
            \nu
        ,
            f(x)
        \Big)
    + 
        \epsilon_A/2
    \\
    \nonumber
    & \leq 
        \mathcal{W}_1
        \Big(
            \hat{T}(x)
        ,
            \Pi^{\epsilon_A}\circ f(x)
        \Big)
    +
        \min_{n\leq N}
        \,
        \mathcal{W}_1
        \Big(
            \hat{\mu}_n
        ,
            f(x)
        \Big)
    + 
        \epsilon_A/2
    \\
    \label{PROOF___theorem_UAT_qualitative____CONCLUSION_B}
    & \leq 
        \frac{\sqrt{N-1}}{N}\Big(
                \epsilon_A' + \frac{\epsilon_A}{2}
            \Big)
    +
        \min_{n\leq N}
        \,
        \mathcal{W}_1
        \Big(
            \hat{\mu}_n
        ,
            f(x)
        \Big)
    \\
    \label{PROOF___theorem_UAT_qualitative____CONCLUSION_C}
    & \leq 
        \epsilon_A
    +
        \epsilon_Q
    .
\end{align}
Together~\eqref{PROOF___theorem_UAT_qualitative____CONCLUSION_A} and~\eqref{PROOF___theorem_UAT_qualitative____CONCLUSION_B} yield the first conclusion and together~\eqref{PROOF___theorem_UAT_qualitative____CONCLUSION_A} and~\eqref{PROOF___theorem_UAT_qualitative____CONCLUSION_C} yield the second.  
\end{proof}


\begin{proof}[Proof of Theorem~\ref{theorem_unit_ball}]
By \citep[Theorem 2.2.11 and Example 2.2.1]{panaretos2020invitation} $\mathcal{N}\big(
            \frac{\epsilon_Q}{3}
        \big)$ is of the order of $\mathcal{O}\Big(
        \exp\big(
            -\log(\epsilon/3) 
            C_D 
            (\epsilon/3)^{-d}
        \big)
    \Big)$.    Moreover, by \citep[Theorem 3]{chevallier2018uniform} $Q=\Big\lceil
     \big(\epsilon_Q\frac{D-1}{4D}\big)^{-D}
    \Big\rceil$.  Applying Lemma~\ref{MAIN_LEMMA_theorem_UAT_qualitative} yields the conclusion.  
\end{proof}
\begin{proof}[Proof of Theorem~\ref{theorem_pmoments}]
By \citep[Theorem 2.2.11 and Example 2.2.3]{panaretos2020invitation} $\mathcal{N}\big(
            \frac{\epsilon_Q}{3}
        \big)$ is of the order of 
    \[
        \mathcal{O}\Big(
        \exp\big(
            -\log(\epsilon/3) 
            C_D 
            (\epsilon/3)^{-d}
        \big)
        \Big).
    \]
    Moreover, by \citep[Corollary 1]{chevallier2018uniform} if $\epsilon_Q = 
    \big\lceil
        \epsilon_Q^{-1/D} \, (\frac{2(D-1)}{D})^D
    \big\rceil
    $ then we may make $\frac{\epsilon_Q}{2}$ is $o\big(
        \epsilon_Q^{-D + D/p}
        \big(\frac{2^{1-1/p}(D-1)}{D}\big)^D
    \big)$.
    Applying Lemma~\ref{MAIN_LEMMA_theorem_UAT_qualitative} and defining $C_{D,p} \eqdef \big(\frac{2^{1-1/p}(D-1)}{D}\big)^D$ 
    yields the conclusion.  
\end{proof}
\begin{proof}[Proof of Theorem~\ref{theorem_MAIN__AhlforsRegularCase}]
Suppose that $f(\xxx)\subseteq \mathcal{AP}^{C,q}(\rrD)$ then \citep[Proposition 5.1]{KloecknerQuantizationAhlforsRegular2012} guarantees that $r^{\star} = \frac{1}{q}$; where $r^{\star}$ is defined in~\eqref{eq_MAIN_LEMMA_theorem_UAT_qualitative___optimalquantizationrate}.  
A fortiori, 
\[
    \sup_{\pp \in \mathcal{AP}^{C,q}(\rr^D)}
    \min_{
    \underset{\#\operatorname{supp}(\hat{\pp})\leq Q}{
        \hat{\pp} \in \mathcal{P}_1(\rr^D)
    }
    }
    \,
    \mathcal{W}_1\big(
        \pp
    ,
        \hat{\pp}
    \big)
        \leq 
    \big(\frac{5}{C^{1/q}}\big)
        N^{-1/q}
    .
\]
Thus, the conclusion follows from Lemma~\ref{MAIN_LEMMA_theorem_UAT_qualitative}.  
\end{proof}
\begin{proof}[{Proof of Corollary~\ref{cor_lowerboundeddensity}}]
Let $x\in \xxx$, $r>0$, and $u\in \rr^D$ be such that $B(x,r)\subseteq [0,1]^d\cap H$.  The assumptions on $f$ imply that
\[
\begin{aligned}
        \mu(B(u,r))
    = &
        \int_{y\in B(u,r)}\,
            1
            f(dy)
    \\
    = &
        \int_{y\in B(u,r)}\,
            \frac{df(x)}{d\mu}(y)
            \,
            \mu(dy)
    \\
    \geq &
        \int_{y\in B(u,r)}\,
            C\,
            \mu(dy)
    \\
    = &
        C\mu(B(u,r))
    \\ 
    = &
        C
        \frac{\pi^{q/2}}{\Gamma(\frac{q}{2}+1)}
        r^q
    .
\end{aligned}
\]
\end{proof}
%
\begin{proof}[{Proof of Theorem~\ref{theorem_MAIN__LipschitzManifoldCase}}]
As shown on \citep[page 612]{Navas_2013_CanonicalBarycenterMap} the map from $\mathcal{P}_1(\rr^D)$ to $\rr^D$ defined by $\pp\mapsto \mathbb{E}_{Y\sim \pp}[Y]$ is a $1$-Lipschitz left-inverse of the map $x\mapsto \delta_x$ (here $\mathbb{E}_{T\sim \pp}[Y]$ denotes the Bochner integral on $\rr^D$).  Consequentially,~\eqref{eq_biLipshitz_condition} implies that
\begin{equation}
\label{PROOF_genstatmanifold_eq_biLipshitz_condition__lowerbound}
        \frac1{C}\,
        \|\theta_1-\theta_2\|
    \leq 
        C\frac1{C}\,
        \big\|
            \mathbb{E}_{Y_1\sim \pp_{\theta_1}}[Y_1]
                -
            \mathbb{E}_{Y_2\sim \pp_{\theta_2}}[Y_2]
        \big\|
    \leq 
        \mathcal{W}_1\big(
            \pp_{\theta_1}
                ,
            \pp_{\theta_2}
        \big)
    .
\end{equation}
Conversely, fix $\theta_1,\theta_2\in \Theta$, let $\operatorname{DIAG}:\rr^{D}\rightarrow \rr^D\times \rr^D$, and define the coupling $\pi^{\star}\eqdef \big((p_{\theta_1},p_{\theta_2})\circ \operatorname{DIAG}\big)_{\#} \pp$ between $\pp_{\theta_1}$ and $\pp_{\theta_2}$.  By definition of the Wasserstein $1$ distance, we have that
\begin{equation}
\label{PROOF_genstatmanifold_eq_biLipshitz_condition__upperbound}
\begin{aligned}
        \mathcal{W}_1\big(
            \pp_{\theta_1}
                ,
            \pp_{\theta_2}
        \big)
    = &
        \inf_{\pi \in \operatorname{Cpl}(\pp_{\theta_1},\pp_{\theta_2}}\,
        \int\, \|y_1-y_2\| \, \pi(d(y_1,y_2))
    \\
    \leq &
        \int\, \|y_1-y_2\| \, \pi^{\star}(d(y_1,y_2))
    \\
    = &
        \int\, \|p_{\theta_1}(u)-p_{\theta_2}(u)\| \, \pp(du)
    \\
        \leq &
        L\, 
        \|\theta_1-\theta_2\|\int 1 \pp(du)
    \\
       \leq & 
       \max\{L,C\}\, 
       \|\theta_1-\theta_2\| 1
    .
\end{aligned}
\end{equation}
Thus, the map from $[0,1]^{P}$ to $\mathcal{P}_1(\rr^D)$ given by $\theta \mapsto \pp_{\theta}$ is $\max\{L,C\}$-bi-Lipschitz.  Therefore, \citep[Lemmata 9.2 and 9.3]{Robinson_2009} imply that $\mathcal{M}$ is $(2^p\max\{L,C\}^{2p},p)$-homogeneous (in the sense of \citep[Definition 9.1]{Robinson_2009}); meaning that, the compact set $\mathcal{M}$ can be covered by at-most 
$
2^p\max\{L,C\}^{2p} \, 
\big(
    \frac{|\mathcal{M}|}{\epsilon_Q}
\big)^p,
$
of radius $\epsilon_Q>0$.  Note, by~\eqref{PROOF_genstatmanifold_eq_biLipshitz_condition__upperbound} the map $\theta \mapsto \pp_{\theta}$ is $L$-Lipschitz and $|[0,1]^p|=\sqrt{p}$; thus, $|\mathcal{M}|=L\sqrt{p}$.  Therefore, $\mathcal{M}$ can be covered by at-most
\[
\big\lceil
    \epsilon_Q^{-p}
    \,
    \big(
        2^p\max\{L,C\}^{2p} L^p\sqrt{p}^{p}
    \big)
\big\rceil
\]
many $1$-Wasserstein balls of radius $\epsilon_Q>0$.  

For the last claim about $\theta \mapsto \pp_{\theta}$, note that by~\eqref{PROOF_genstatmanifold_eq_biLipshitz_condition__lowerbound} the map $\theta \mapsto \pp_{\theta}$ must be injective.  Therefore, the inverse map $\pp_{\theta}\mapsto \theta$ is a well-defined map from $\mathcal{M}$ to $[0,1]^p$.  Multiplying across by $C$ in~\eqref{PROOF_genstatmanifold_eq_biLipshitz_condition__lowerbound} we conclude that $\pp_{\theta}\mapsto \theta$ is also $1$-Lipschitz.  
\end{proof}
\begin{proof}[Proof of Theorem~\ref{theorem_finite_Assouad_Dimension}]
Since $\mmm$ has Assouad dimension $d_{\mathcal{M}}>0$ and since $C_{\mathcal{M}}$ is such that~\eqref{eq_homogenity} holds then 
\[
    \mathcal{N}^{\operatorname{cov}}_{f(\xxx)}(r)
        \leq 
    C_{\mathcal{M}} r^{-d_{\mathcal{M}}} |f(\xxx)|^{d_{\mathcal{M}}}.
\]
Since $f$ is uniformly continuous with modulus of continuity $\omega_f$ then $|f(\xxx)|\leq \omega_f(|\xxx|)$.  The result now follows from Lemma~\ref{MAIN_LEMMA_theorem_UAT_qualitative}.  
\end{proof}

\begin{proof}[{Proof of Corollary~\ref{cor_mean_prediction}}]
By \citep[page 612]{Navas_2013_CanonicalBarycenterMap} the map $\beta:\mathcal{P}_1(\rr^d)\rightarrow \rr^d$ sending any $\mathbb{Q} \in \mathcal{P}_1(\rr^d)$ to the Bochner integral $\mathbb{E}_{X\sim \mathbb{Q}}[X]$ is $1$-Lipschitz.  Since Theorem~\ref{theorem_MAIN__GeneralCase} showed that
\begin{equation}
	\label{PROOF_cor_mean_prediction}
	    \max_{x\in \xxx}\,
	\mathcal{W}_1(f(x),\hat{T}(x))<\epsilon_Q+\epsilon_A
,
\end{equation}
and since the contracting barycenter map $\beta$ $1$-Lipschitz then~\eqref{PROOF_cor_mean_prediction} implies that
\[
    \max_{x\in \xxx}\,
    \Big\|
        \mathbb{E}_{Y^x\sim f(x)}\big[
            Y^x
        \big]
            -
        \mathbb{E}_{\hat{Y}^x\sim \hat{\mu}_n}\big[
            \hat{Y}^x
        \big]
    \Big\|
        \leq 
    \max_{x\in \xxx}\,
    1 
    \mathcal{W}_1(f(x),\hat{T}(x))<\epsilon_Q+\epsilon_A.
\]
\end{proof}

\begin{proof}[{Proof of Corollary~\ref{cor_MC_Lipschitz_test_functions}}]
If $g$ is constant, then
$
\Big|
            \mathbb{E}_{Y\sim \hat{T}(x)}\big[
            g(Y)
            \big]
        -
            \mathbb{E}_{\tilde{Y}\sim f(x)}\big[
            g(\tilde{Y})
            \big]
        \Big|
    =
        0.
$
Therefore, let us assume that $g$ is non-constant.  Now, since $g:\yyy\rightarrow \rr$ is Lipschitz and non-constant then $\tilde{g}\eqdef \frac1{\operatorname{Lip}(g)}\, g$ is $1$-Lipschitz.  Thus, for every $x\in \xxx$, the Kantorovich-Rubinstein duality and Theorem~\ref{theorem_MAIN__GeneralCase} imply that
\begin{equation}
\label{PROOF_eq__cor_MC_Lipschitz_test_functions___KRDualityPlusMainTheorem}
       \Big|
            \mathbb{E}_{Y\sim \hat{T}(x)}\big[
                \tilde{g}(Y)
            \big]
        -
            \mathbb{E}_{\tilde{Y}\sim f(x)}\big[
            \tilde{g}(\tilde{Y})
            \big]
        \Big|
    \leq   
        \mathcal{W}_1(\hat{T}(x),f(x))
    <
        \epsilon_Q + \epsilon_A
    .
\end{equation}
Plugging in the definition of $\tilde{g}$ and using the linearity of integration in~\eqref{PROOF_eq__cor_MC_Lipschitz_test_functions___KRDualityPlusMainTheorem} yields
\[
       \Big|
            \mathbb{E}_{Y\sim \hat{T}(x)}\big[
                g(Y)
            \big]
        -
            \mathbb{E}_{\tilde{Y}\sim f(x)}\big[
            g(\tilde{Y})
            \big]
        \Big|
    <
        \operatorname{Lip}(g)\big(
                \epsilon_Q + \epsilon_A
        \big)
    .
\]
Expanding the definition of each $\hat{T}(x)$ and relabeling yields the conclusion.
\end{proof}

\begin{proof}[Proof of Corollary~\ref{cor_UAT_Qualitative}]
Since $\rr^D$ is complete and separable and since $\yyy\subseteq \rr^D$ is closed, then $\yyy$ is also closed and separable.  Thus,  by \citep[Theorem 6.18]{VillaniOptTrans} the Wasserstein space $\mathcal{P}_1(\rr^D)$ is also complete and separable.  Since both $\mathcal{P}_1(\yyy)$ and $\xxx$ are separable and metrizable then \citep[Theorem 13.6]{klenke2013probability} implies that $\pp$ is a Radon measure on $(\xxx,\bbb(\xxx))$.  Since $\xxx$ is locally-compact, separable, and has a second-countable topology, and since $f:\xxx\rightarrow \mathcal{P}_1(\yyy)$ is Borel-measurable thus, we may apply Lusin's Theorem (as formulated in \citep[Exercise 13.1.3]{klenke2013probability}) to conclude that there is a compact subset $K_{\epsilon}\subseteq \xxx$ satisfying
\begin{equation}
\label{PROOF__cor_UAT_Qualitative_Lusin}
    \pp\left(K_{\epsilon}\right)
        \geq 
    1-\epsilon
\end{equation}
     and 
    $f|_{K_{\epsilon}}:K_{\epsilon}\rightarrow \mathcal{P}_1(\yyy)$ is continuous.  
    Since every $K_{\epsilon}$ is compact and $f|_{K_{\epsilon}}:K_{\epsilon}\rightarrow \mathcal{P}_1(\rr^D)$ is continuous then by \citep[Theorem 27.6]{munkres2014topology} it is uniformly continuous.  Thus, Theorem~\ref{theorem_MAIN__GeneralCase} (i) implies that there is a probabilistic transformer satisfying 
    \begin{equation}
    \label{PROOF__cor_UAT_Qualitative_Theorem1Application}
    \max_{x\in K_{\epsilon}}\,
        \mathcal{W}_1\Big(
            f(x)
        ,
            \hat{T}(x)
        \Big)   < \epsilon;
    \end{equation}
    where we have set $\epsilon_Q\eqdef \epsilon_A\eqdef \epsilon/2$ in the notation of Theorem~\ref{cor_UAT_Qualitative}.  Together~\eqref{PROOF__cor_UAT_Qualitative_Lusin} and~\eqref{PROOF__cor_UAT_Qualitative_Theorem1Application} imply that
    \[
    \pp\Big(
        \mathcal{W}_1\Big(
            f(x)
        ,
            \hat{T}(x)
        \Big)   < \epsilon
    \Big)
        \geq 1-\epsilon
    .
    \]
\end{proof}

\subsection{{Proof of Results from Section~\ref{sss_Functorial_efficiency}}}\label{s_A_proofs_of_localization_theorems}
\begin{lemma}[Uniform Dual Bound]
\label{lem_tree_bound}
Fix a valency parameter $V\in \nn_+$ with $V\geq 2$, a rate $r>0$, a growth parameter $S>0$, an activation function $\sigma \in C(\rr)$, a feature map $\varphi:\xxx\rightarrow \rr^d$ satisfying condition~\ref{cond_feature}, and a dimension $D\in \nn_+$.  
Any function $f\in \xxx\rightarrow \mathcal{P}_1(\rr^D)$ belonging to the tree-class $\mathcal{T}_{T:V}^{\varphi,r,\sigma,S}(\rr^d,\mathcal{P}_1(\rr^D))$
satisfies the following ``uniform dual boundedness property''
\[
    \underset{h:\rr^D\rightarrow \rr,\, h(x_v)=0,\, \operatorname{Lip}(h)\leq 1}{
            \sup
        }\,
            \int_{x\in \rr^D}\,
            h(u)\,
            [f_v(x)](du)
        \leq 
    S.
\]
Equivalently, in terms of the Arens-Eells' space norm, $\big\|f\big\|_{\text{\AE}(\rr^D),0}\leq S$.
\end{lemma}
\begin{proof}[Proof of Lemma~\ref{lem_tree_bound}]
We argue by induction on $T$.  For the base case, if $T=0$ then $f\in\mathcal{T}_{0:V}^{\varphi,r,\sigma,S}(\rr^d,\mathcal{P}_1(\rr^D))$.  Whence, $f$ is of the form $f=\delta_{g\circ \varphi}$ for some $g\in \mathcal{A}^{\sigma,r,S}(\rr^d,\rr^D)$.  Therefore, 
\[\resizebox{0.95\hsize}{!}{$
\begin{aligned}
\nonumber
    \underset{h:\rr^D\rightarrow \rr,\, h(x_v)=0,\, \operatorname{Lip}(h)\leq 1}{
            \sup
        }\,
            \int_{u\in \rr^D}\,
                h(u)\,
            [f(x)](du)
    = &
    \underset{h:\rr^D\rightarrow \rr,\, h(x_v)=0,\, \operatorname{Lip}(h)\leq 1}{
            \sup
        }\,
            \int_{u\in \rr^D}\,
                 h(u)-h(0)\,
            \delta_{g\circ \varphi(x)}(du)
    \\
    \nonumber
    \leq &
    \underset{h:\rr^D\rightarrow \rr,\, h(x_v)=0,\, \operatorname{Lip}(h)\leq 1}{
            \sup
        }\,
            \int_{u\in \rr^D}\,
                |h(u)-h(0)|\,
            \delta_{g\circ \varphi(x)}(du)
    \\
    \nonumber
    \leq &
        \int_{u\in \rr^D}\,
            1 \|u\|\,
        \delta_{g\circ \varphi(x)}(du)
    \\
    \nonumber
    = &
        g\circ \varphi(x)
    \\
    \nonumber
    \leq &
        |g(\varphi(x))-0|
    \\
    \nonumber
    \leq & 
        S.
\end{aligned}
$}
\]
This establishes the base case.  For the (strong) inductive step let $T\in \nn_+$ and assume that for every $t\in \{0,\dots,T-1\}$ and every $\tilde{f}\in \mathcal{T}_{T:V}^{\varphi,r,\sigma,S}(\rr^d,\mathcal{P}_1(\rr^D))$
\begin{equation}
\label{PROOF__lem_tree_bound__InductiveStep}
    \underset{h:\rr^D\rightarrow \rr,\, h(x_v)=0,\, \operatorname{Lip}(h)\leq 1}{
            \sup
        }\,
            \int_{u\in \rr^D}\,
            h(u)\,
            [\tilde{f}(x)](du)
        \leq 
    S.
\end{equation}
Using,~\eqref{PROOF__lem_tree_bound__InductiveStep} and the recursive representation of any $f\in \mathcal{T}_{T:V}^{\varphi,r,\sigma,S}(\rrd,\mathcal{P}_1(\rr^D))$ as $f
        \eqdef 
    \sum_{v=1}^V\,
        [\operatorname{Softmax}_V\circ w\circ \varphi(\cdot)]_v
        f_v$ where $
f_1,\dots,f_V
    \in
\mathcal{T}_{T-1:V}^{\varphi,r,\sigma,S}(\rr^d,\mathcal{P}_1(\rr^D))
$ and $w\in \mathcal{A}^{r,\sigma,S}(\rr^d,\rr^V)$, we deduce the follow upper-bound; wherein, for notational simplicity we set $
\bar{w}(x)\eqdef \operatorname{Softmax}_V\circ w\circ \varphi(x)
$
\[
\resizebox{0.95\hsize}{!}{$
\begin{aligned}
    \nonumber
        \underset{h:\rr^D\rightarrow \rr,\, h(x_v)=0,\, \operatorname{Lip}(h)\leq 1}{
                \sup
            }\,
        \int_{u\in \rr^D}\,
                h(u)
            [f(x)](du)
    = &
        \underset{h:\rr^D\rightarrow \rr,\, h(x_v)=0,\, \operatorname{Lip}(h)\leq 1}{
                \sup
            }\,
        \int_{u\in \rr^D}\,
                h(u)-h(0)
            \big[
                \sum_{v=1}^V
                \bar{w}(x)_v
                f_v
            \big](du)
    \\
    \nonumber
    = &
        \underset{h:\rr^D\rightarrow \rr,\, h(x_v)=0,\, \operatorname{Lip}(h)\leq 1}{
                \sup
            }\,
        \int_{u\in \rr^D}\,
                h(u)-h(0)
            \big[
                \sum_{v=1}^V
                \bar{w}_v(x)
                f_v(x)
            \big](du)
    \\
    \nonumber
    \leq &
        \sum_{v=1}^V
            \bar{w}_v(x)
        \,
        \underset{h:\rr^D\rightarrow \rr,\, h(x_v)=0,\, \operatorname{Lip}(h)\leq 1}{
                \sup
            }\,
        \int_{u\in \rr^D}\,    
                \big(
                    h(u)-h(0)
                \big)
                f_v(x)(du)
    \\
    \nonumber
    \leq &
        \sum_{v=1}^V
            \bar{w}_v(x)
        \,
        S
    \\
    \label{PROOF__lem_tree_bound__InductiveStep___SimplicialSum}
    = & S;
\end{aligned}
$}
\]
where~\eqref{PROOF__lem_tree_bound__InductiveStep___SimplicialSum} held since $\bar{w}_v(x)\in \Delta_V$ whence, for every $x\in \rr^d$ we have that $\sum_{v=1}^V\, \bar{w}_v(x) = 1$.  This completes the induction.  
\end{proof}
\begin{proof}[{Proof of Theorem~\ref{theorem_efficient_approximation}}]
Let $f\in \mathcal{T}_{T:V}^{\varphi,r,\sigma,S}$.  We will argue by induction.  
\hfill\\
\textbf{Base Case: Height $T=0$}
\hfill\\
If $T=0$ then, by definition, there must exist a $g\in \mathcal{A}^{\sigma,r,S}(\rr^d,\rr^D)$ such that
\begin{equation*}
    f 
        = 
    \delta_{g\circ \varphi}
.
\end{equation*}
By condition~\ref{cond_feature} and \citep[Jung's Theorem]{jung1910boundedingdiametresinEuclideanSpace} there is an $x \in \rr^d$ for which $\varphi(\xxx)\subseteq \overline{B_{\rr^d}(x,R)}$ where 
\begin{equation*}
    R
        \eqdef 
    C_{\varphi:U} |\xxx|^{\alpha} \Big(\frac{d}{2(d+1)}\Big)^{1/2}   
.
\end{equation*}
Since $g\in \mathcal{A}^{\sigma,r,S}(\rr^d,\rr^D)$ then,
for every ${\frac{\epsilon}{V^T}}>0$ there must exist a DNN approximation class $\hat{f} \in \NN[d,D][\sigma]$ whose depth and width are both at-most $C^{(0)} R^{r}({\frac{\epsilon}{V^T}})^{-r}$; for some constant $C^{(0)}>0$ independent of $R$, $x$, $r$, and of $\epsilon$.  Therefore, the number of parameters determining $\hat{f}$ must be at-most
\begin{equation}
\label{PROOF_theorem_efficient_approximation__parametercount___0}
\begin{aligned}
\operatorname{Par}(\hat{f})
        \eqdef &
    \Big\lceil
    C^{(0)} R^{r}({V^{-T}\epsilon})^{-r}
    \,
    C^{(0)} R^{r}({V^{-T}\epsilon})^{-r}
    \left(
    C^{(0)} R^{r}({V^{-T}\epsilon})^{-r}
    + 1
    \right)
    \Big\rceil
        \\
        \leq &
    2
    \big\lceil
        \tilde{C}^{(0)} R^{2r}({V^{-T}\epsilon})^{-2r}
    \big\rceil
        \\
    = &
    \Big\lceil
        \tilde{C}^{(0)} 
        C_{\varphi:U}^{2r} 
        |\xxx|^{\alpha 2r} 
        \Big(\frac{d}{2(d+1)}\Big)^{r}
        ({V^{-T}\epsilon})^{-2r}
    \Big\rceil
;
\end{aligned}
\end{equation}
where $(C^{(0)})^2\eqdef \tilde{C}^{(0)}$.  
Furthermore, $\hat{f}$ satisfies the uniform estimate
\begin{equation}
\label{PROOF_theorem_efficient_approximation__estimateEuclidean}
    \max_{x\in \overline{B_{\rr^d}(x,R)}}\,
    \big\|
        g(x)
            -
        \hat{f}(x)
    \big\|
        <
    {V^{-T}\epsilon}
.
\end{equation}
Since the map sending any $x\in \rr^D$ to $\delta_x\in \mathcal{P}_1(\rr^D)$ is an isometry, since $\varphi$ is a bijection from $\xxx$ onto $\varphi(\xxx)$, and since $\varphi(\xxx)\subseteq \overline{B_{\rr^d}(x,R)}$ then, the estimate in~\eqref{PROOF_theorem_efficient_approximation__estimateEuclidean} implies that
\allowdisplaybreaks
\begin{align*}
        \max_{x\in \xxx}\,
            \mathcal{W}_1\big(
            f(x)
                ,
            \delta_{\hat{f}\circ \varphi(x)}
        \big)
    = &
        \max_{x\in \xxx}\,
            \mathcal{W}_1\big(
            \delta_{g\circ \varphi(x)}
                ,
            \delta_{\hat{f}\circ \varphi(x)}
        \big)
    \\
    = & 
        \max_{u\in \varphi(\xxx)}\,
            \mathcal{W}_1\big(
            \delta_{g(u)}
                ,
            \delta_{\hat{f}(u)}
        \big)
    \\
    = &
    \max_{u\in \varphi(\xxx)}\,
        \big\|
            g(u)
                -
            \hat{f}(u)
        \big\|
    \\
    \leq &
        \max_{u\in \overline{B_{\rr^d}(x,R)}}\,
        \big\|
            g(u)
                -
            \hat{f}(u)
        \big\|
    \\
    < & {V^{-T}\epsilon}
.
\end{align*}
Thus, $\hat{T}\eqdef \hat{T}^{(0)} \eqdef \delta_{\hat{f}\circ \varphi}$ satisfies (i) and~\eqref{PROOF_theorem_efficient_approximation__parametercount___0} implies that (ii) holds.  
\hfill\\
\textbf{Inductive Step: Height $T\in \nn_+$}
\hfill\\
Since $f\in \mathcal{T}_{T:V}^{\varphi,r,\sigma,S}$ then by definition it has representation
\begin{equation}
\label{PROOF_theorem_efficient_approximation___representation}
    f
        \eqdef 
    \sum_{v=1}^V\,
        [\operatorname{Softmax}_V\circ w\circ \varphi(\cdot)]_v
        f_v
    ,
\end{equation}
where $
f_1,\dots,f_V
    \in
\mathcal{T}_{T-1:V}^{\varphi,r,\sigma,S}(\rr^d,\mathcal{P}_1(\rr^D))
$ and $w\in \mathcal{A}^{r,\sigma,S}(\rr^d,\rr^V)$.  
By strong-induction, for every $t\in \{0,\dots,T-1\}$, there exist $\hat{T}_1,\dots,\hat{T}_V\in \NN[\varphi:\yyy:s,V][\sigma:\star]$ such that
\begin{equation}
\label{PROOF_theorem_efficient_approximation___InductionHypothesis}
    \max_{x\in \overline{B_{\rr^d}(x,R)}}\,
        \mathcal{W}_1\big(
            f_v(x)
                ,
            \hat{T}_v(x)
        \big)
    <
        \epsilon^{(t)}
            \eqdef 
        \frac{
            \epsilon
        }{
            S 
            V^{t}
        }
\end{equation}
and such that, for $v=1,\dots,V$, $\hat{T}_v$ is determined by at-most 
\allowdisplaybreaks
\begin{equation*}
\begin{aligned}
    \max_{v=1,\dots,V}\,
    \operatorname{Par}(\hat{T}_v)
        \leq 
    C^{(t)}
        \epsilon^{-2r}
        V^{t(1-2r)}
        \left(
            \frac{
                    C_{\varphi:U}^{2} 
                    |\xxx|^{2\alpha } 
                    d
                }{
                    2(d+1)
                }
        \right)^r
    ,
\end{aligned}
\end{equation*}
for some constants $C^{(t)}>0$ independent of $\epsilon$, $d$, $D$, $|\xxx|$, $\alpha$, and of $r$.  
Now, since $w\in \mathcal{A}^{r,\sigma,S}(\rr^d,\rr^V)$ then, there exists a $\hat{w}^{(T)}\in \NN[d,V][\sigma]$ satisfying
\begin{equation}
\label{PROOF_theorem_efficient_approximation___wapproximation}
    \max_{u \in \varphi(\xxx)}\,
    \big\|
        w(x)
            -
        \hat{w}^{(T)}(x)
    \big\|
        <
    \frac{\epsilon}{
    S
    V^{1/2}
    }
,
\end{equation}
and $\hat{w}^{(T)}$ has representation depth and width bounded-above by
$
\Big\lceil
        C_{(T)}
        C_{\varphi:U}^r
        |\xxx|^{\alpha r}
        \Big(
            \frac{
                d
            }{
                2(d+1)
            }
        \Big)^{r/2}
        V^{-r}
        \epsilon^{-r}
    \Big\rceil
.
$  
Therefore, the number of parameters determining $\hat{w}^{(T)}$ is at-most 
\begin{equation}
\label{PROOF_theorem_efficient_approximation___wapproximation_A}
    2
    \Big\lceil
        C^{(T)}
        C_{\varphi:U}^r
        |\xxx|^{\alpha r}
        \Big(
            \frac{
                d
            }{
                2(d+1)
            }
        \Big)^{r/2}
        S^{-r}
        V^{-r/2}
        \epsilon^{-r}
    \Big\rceil^2
.
\end{equation}
In particular,~\eqref{PROOF_theorem_efficient_approximation___wapproximation_A} implies the following estimate
\begin{equation}
\label{PROOF_theorem_efficient_approximation___wapproximation_2}
\resizebox{0.95\hsize}{!}{$
\begin{aligned}
    \max_{x\in \xxx}\,
        \|
            \operatorname{Softmax}_V\circ w\circ \varphi(x)
                -
            \operatorname{Softmax}_V\circ \hat{w}^{(T)}\circ \varphi(x)
        \|_1
    = &
        \max_{u\in \varphi(\xxx)}\,
        \|
            \operatorname{Softmax}_V\circ w(u)
                -
            \operatorname{Softmax}_V\circ \hat{w}^{(T)}(u)
        \|_1
    \\
    \leq &
        \max_{u\in \overline{B_{\rr^d}(x,R)}}\,
        V^{1/2}
        \|
            \operatorname{Softmax}_V\circ w(u)
                -
            \operatorname{Softmax}_V\circ \hat{w}^{(T)}(u)
        \|_2
    \\
    \leq &
        \max_{u\in \overline{B_{\rr^d}(x,R)}}\,
        V^{1/2}
        \frac{(V-1)^{1/2}}{V}
        \|
            w(u)
                -
            \hat{w}^{(T)}(u)
        \|_2
    \\
    \leq &
        V^{1/2}
        \frac{(V-1)^{1/2}}{V}
        \frac{\epsilon}{
            S
            V^{1/2}
        }
    \\
    = & 
        \frac{1}{S}
        \frac{(V-1)^{1/2}}{V}
        \epsilon
    \\
    \leq &
        \frac{1}{S}
        \epsilon .
\end{aligned}
$}
\end{equation}
Together~\eqref{PROOF_theorem_efficient_approximation___representation},~\eqref{PROOF_theorem_efficient_approximation___InductionHypothesis}, and~\eqref{PROOF_theorem_efficient_approximation___wapproximation_2} imply that there are $\hat{T}_1,\dots,\hat{T}_V\in \NN[\varphi:\yyy:s,V][\sigma:\star]$ and a $\hat{w}^{(T)}\in \NN[d,N][\sigma]$ such that the tall probabilistic transformer $
\hat{T}\eqdef \sum_{v=1}^V\,
[\operatorname{Softmax}_{V}\circ \hat{w}^{(T)}\circ \varphi(x)]_v
\hat{T}_v(x)
$ satisfies the following
\[
\resizebox{1\hsize}{!}{$
\begin{aligned}
    \max_{x\in \xxx}\,
        \mathcal{W}_1\big(
                f(x)
            ,
                \hat{T}(x)
        \big)
    = &
        \max_{x\in \xxx}\,
            \mathcal{W}_1\big(
                    \sum_{v=1}^V\,
                        [\operatorname{Softmax}_V\circ w\circ \varphi(x)]_v
                        f_v(x)
                ,
                    \sum_{v=1}^V\,
                        [\operatorname{Softmax}_{V}\circ \hat{w}^{(T)}\circ \varphi(x)]_v
                        \hat{T}_v(x)
            \big)
    \\
    \leq &
        \max_{x\in \xxx}\,
            \mathcal{W}_1\big(
                    \sum_{v=1}^V\,
                        [\operatorname{Softmax}_V\circ w\circ \varphi(x)]_v
                        f_v(x)
                ,
                    \sum_{v=1}^V\,
                        [\operatorname{Softmax}_{V}\circ \hat{w}^{(T)}\circ \varphi(x)]_v
                        f_v(x)
            \big)
    \\
    & +
        \max_{x\in \xxx}\,
            \mathcal{W}_1\big(
                    \sum_{v=1}^V\,
                        [\operatorname{Softmax}_{V}\circ \hat{w}^{(T)}\circ \varphi(x)]_v
                        f_v(x)
                ,
                    \sum_{v=1}^V\,
                        [\operatorname{Softmax}_{V}\circ \hat{w}^{(T)}\circ \varphi(x)]_v
                        \hat{T}_v(x)
            \big)
    \\
    \leq &
        \max_{x\in \xxx}\,
            \big\|
                    \big(
                            \sum_{v=1}^V\,
                            [\operatorname{Softmax}_V\circ w\circ \varphi(x)]_v
                            f_v(x)
                        -
                            \delta_{0}
                    \big)
                -
                    \big(
                        \sum_{v=1}^V\,
                        [\operatorname{Softmax}_{V}\circ \hat{w}^{(T)}\circ \varphi(x)]_v
                        f_v(x)
                        -
                            \delta_{0}
                    \big)
            \big\|_{\text{\AE}(\rr^D,x_v)}
    \\
    & +
        \max_{x\in \xxx}\,
            \big\|
                    \big(
                        \sum_{v=1}^V\,
                        [\operatorname{Softmax}_{V}\circ \hat{w}^{(T)}\circ \varphi(x)]_v
                        f_v(x)
                            -
                        \delta_{0}
                    \big)
                -
                    \big(
                        \sum_{v=1}^V\,
                        [\operatorname{Softmax}_{V}\circ \hat{w}^{(T)}\circ \varphi(x)]_v
                        \hat{T}_v(x)
                            -
                        \delta_{0}
                    \big)
            \big\|_{\text{\AE}(\rr^D,x_v)}
    \\
    = &
        \max_{x\in \xxx}\,
            \big\|
                \sum_{v=1}^V\,
                f_v(x)
                \big(
                    [\operatorname{Softmax}_V\circ w\circ \varphi(x)]_v
                        -
                    [\operatorname{Softmax}_{V}\circ \hat{w}^{(T)}\circ \varphi(x)]_v
                \big)
            \big\|_{\text{\AE}(\rr^D,x_v)}
    \\
    & +
        \max_{x\in \xxx}\,
            \big\|
                \sum_{v=1}^V\,
                [\operatorname{Softmax}_{V}\circ \hat{w}^{(T)}\circ \varphi(x)]_v
                    \big(
                        f_v(x)
                            -
                        \hat{T}_v(x)
                    \big)
            \big\|_{\text{\AE}(\rr^D,x_v)}
    \\
    \leq &
        \max_{x\in \xxx}\,
        \sum_{v=1}^V\,
            \big|
                [\operatorname{Softmax}_V\circ w\circ \varphi(x)]_v
                    -
                [\operatorname{Softmax}_{V}\circ \hat{w}^{(T)}\circ \varphi(x)]_v
            \big|
            \big\|
                f_v(x)
            \big\|_{\text{\AE}(\rr^D,x_v)}
    \\
    & +
        \max_{x\in \xxx}\,
        \sum_{v=1}^V\,
            [\operatorname{Softmax}_{V}\circ \hat{w}^{(T)}\circ \varphi(x)]_v
                \big\|
                    \big(
                        f_v(x)
                            -
                        \hat{T}_v(x)
                    \big)
                \big\|_{\text{\AE}(\rr^D,x_v)}
    \\
    = &
        \max_{x\in \xxx}\,
        \sum_{v=1}^V\,
            \big\|
                f_v(x)
            \big\|_{\text{\AE}(\rr^D,x_v)}
            \big\|
                \operatorname{Softmax}_V\circ w\circ \varphi(x)
                    -
                \operatorname{Softmax}_{V}\circ \hat{w}^{(T)}\circ \varphi(x)
            \big\|_1
    \\
    & +
        \max_{x\in \xxx}\,
        \sum_{v=1}^V\,
            [\operatorname{Softmax}_{V}\circ \hat{w}^{(T)}\circ \varphi(x)]_v
                \big\|
                    \big(
                        (f_v(x)-\delta_{0})
                            -
                        (\hat{T}_v(x) - \delta_{0})
                    \big)
                \big\|_{\text{\AE}(\rr^D,x_v)}
    \\
    \leq &
        \max_{x\in \xxx}\,
        \left(
            \max_{v=1,\dots,V}
            \underset{g:\rr^D\rightarrow \rr,\, h(x_v)=0,\, \operatorname{Lip}(g)\leq 1}{\sup}\,
                \int_{x\in \rr^D}\,
                g(u)\,
                [f_v(x)](du)
        \right)
            \big\|
                \operatorname{Softmax}_V\circ w\circ \varphi(x)
                    -
                \operatorname{Softmax}_{V}\circ \hat{w}^{(T)}\circ \varphi(x)
            \big\|_1
    \\
    & +
        \max_{x\in \xxx}\,
        \sum_{v=1}^V\,
            [\operatorname{Softmax}_{V}\circ \hat{w}^{(T)}\circ \varphi(x)]_v
            \mathcal{W}_1
                \big(
                    f_v(x)
                ,
                    \hat{T}_v(x)
                \big)
   \\
   \leq &
        \frac1{S}
        \epsilon
            \left(
                \max_{x\in \xxx}\,
                \max_{v=1,\dots,V}
                    \underset{g:\rr^D\rightarrow \rr,\, h(x_v)=0,\, \operatorname{Lip}(g)\leq 1}{\sup}\,
                        \int_{x\in \rr^D}\,
                        g(u)\,
                        [f_v(x)](du)
            \right)
        +
        \max_{x\in \xxx}\,
        V
        \max_{v=1,\dots,V}\,
            \mathcal{W}_1
                \big(
                    f_v(x)
                ,
                    \hat{T}_v(x)
                \big)
   \\
   \leq &
        \frac{1}{S}
        \epsilon
            \left(
                \max_{x\in \xxx}\,
                \max_{v=1,\dots,V}
                    \underset{g:\rr^D\rightarrow \rr,\, h(x_v)=0,\, \operatorname{Lip}(g)\leq 1}{\sup}\,
                        \int_{x\in \rr^D}\,
                        g(u)\,
                        [f_v(x)](du)
            \right)
        +
            \max_{x\in \xxx}\,
            V 
            \epsilon^{(T-1)}
    \\
    \leq &
        \frac{1}{S}
        \epsilon
            S
        +
            \max_{x\in \xxx}\,
            V 
            \epsilon^{(T-1)}
    \\
    = &
            \epsilon
        +
            \max_{x\in \xxx}\,
            V 
            \epsilon^{(T-1)}
    ;
\end{aligned}
$}
\]
where, the last inequality held by the ``uniform dual bound'' in Lemma~\ref{lem_tree_bound}. 

Finally, by recursion and by~\eqref{PROOF_theorem_efficient_approximation__parametercount___0} and~\eqref{PROOF_theorem_efficient_approximation___wapproximation}, the number of parameters determining $\hat{T}$ is at-most
\allowdisplaybreaks
\begin{align*}
    \nonumber
    \operatorname{Par}(\hat{T})
    = &
    \operatorname{\hat{w}^{(T)}}
        +
    \sum_{v=1}^V\, 
        \operatorname{Par}(\hat{T}_v)
    \\
    \nonumber
    \leq &
        V^{T} \, 
        \Big\lceil
            \tilde{C}^{(0)} 
            C_{\varphi:U}^{2r} 
            |\xxx|^{\alpha 2r} 
            \Big(\frac{d}{2(d+1)}\Big)^{r}
            ({V^{-T}\epsilon})^{-2r}
        \Big\rceil
    + 
        \sum_{t=1}^T \,
            V^{T-t}\,\operatorname{Par}(\hat{w}^{(t)})
    \\
    \nonumber
    \leq &
        \sum_{t=0}^T \,
            V^{T-t}
            \Big\lceil
                C^{(t)} \,
                C_{\varphi:U}^{2r} 
                |\xxx|^{\alpha 2r} 
                \Big(\frac{d}{2(d+1)}\Big)^{r}
                ({V^{-T}\epsilon})^{-2r}
            \Big\rceil
    \\
    \nonumber
    \leq & 
        \Big(
            \max_{t=0,\dots,T}\, C^{(t)}\,
                C_{\varphi:U}^{2r} 
                |\xxx|^{\alpha 2r} 
                \Big(\frac{d}{2(d+1)}\Big)^{r}
                \epsilon)^{-2r}
        \Big)
        \Big(
            V^{T(1-2r)}
            \sum_{t=0}^T \,
            V^{-t}
        \Big)
    \\
    \nonumber
    = &
        \Big(\max_{t=0,\dots,T}\, C^{(t)}\,\Big)
        \Big(
                C_{\varphi:U}^{2r} 
                |\xxx|^{\alpha 2r} 
                \Big(\frac{d}{2(d+1)}\Big)^{r}
                \epsilon)^{-2r}
        \Big)
        \Big(
            V^{T(1-2r)}
            \frac{1 - V^{-T-1}}{1-V}
        \Big)
    \\
    \leq &
        \Big(\max_{t=0,\dots,T}\, C^{(t)}\,\Big)
        V^{T(1-2r)}
        \Big(
                C_{\varphi:U}^{2r} 
                |\xxx|^{\alpha 2r} 
                \Big(\frac{d}{2(d+1)}\Big)^{r}
                \epsilon)^{-2r}
        \Big)
\end{align*}
\end{proof}
\begin{proof}[{Proof of Corollary~\ref{cor_localization}}]
Set $\epsilon\eqdef \epsilon_Q/2\eqdef \epsilon_A/2$ where $\epsilon_A\in (0,1)$ and set $\sigma = \operatorname{ReLU}$.  
Let $\delta\eqdef \omega_f^{-1}(\epsilon/3)$ and define $
        \mathbb{X}^{\delta}
    \eqdef 
        \cup_{x\in \mathbb{X}}\, 
        \{u\in \xxx:\, 
        d(u,x) <\delta\}.
$ 
Then, $\mathcal{N}_{f(\mathbb{X}^{\delta})}(\epsilon_Q/3)=\#\mathbb{X}$.  
For every $R>0$, $x\in \xxx$, define $
    \mathbb{X}^{R,\delta}_{x} 
        \eqdef 
    \{u\in \xxx:\, d(u,x)\leq R\} \cap \mathbb{X}^{\delta}
$. Lemma~\ref{MAIN_LEMMA_theorem_UAT_qualitative} implies that that there exists a $\hat{T}\in \NN[\varphi,D][\sigma:\star]$ satisfying 
\[
    \sup_{x\in \mathbb{X}^{R,\delta}_x}\,
        \mathcal{W}_1\big(
            f(x)
        ,   
            \hat{T}(x)
        \big)
    <
        \epsilon
    ,
\]
where $N = \# \{u\in \xxx:\, d(u,x)\leq R\}$, Width $d(N-1) + 3^{d+3}\max\{d,3\}$, and of depth $\operatorname{Depth}(\hat{T})$ bounded above by
\begin{equation}
\label{PROOF__cor_localization}
        \operatorname{Depth}(\hat{T})
    \leq 
        N\left(
                C_d^{(1)}
            +
                11
                \left\lceil
                        {C}_d^{(2)} 
                        R^{\alpha d /2}
                        \,
                        \Big(
                            \omega_f^{-1}\Big(
                                {C}_d^{(3)} 
                                \,
                                \frac{\epsilon^2}{
                                |2-\epsilon|}
                                \,
                                \frac{
                                    N - \sqrt{N}- 1
                                }{
                                    N^{3/2}
                                }
                                \Big)
                            \Big)^{-\alpha d/2}
                \right\rceil
            \right)
.
\end{equation}
We may without loss of generality assume that $N\geq 10$; which we do.  
Since $\omega_f(t)=B t^{\beta}$ is strictly increasing and since $\frac{N^{3/2}}{N - N^{1/2}-1}<10$ then
\begin{equation}
\label{PROOF__cor_localization___improved_bound}
        \operatorname{Depth}(\hat{T})
    \leq 
        N\left(
                C_d^{(1)}
            +
                11
                \left\lceil
                    {C}_d^{(2)} 
                    R^{\alpha d /2}
                    \,
                    \Big(
                        B
                        (
                            10 \,
                            {C}_d^{(3)}
                        )^{\alpha d/(\beta 2)}
                            2
                        \,
                            \epsilon^{-2}
                        \Big)^{\alpha d/(\beta 2)}
                \right\rceil
            \right)
.
\end{equation}
Set $R
<
    \sup\{r>0:\, 
        \# \mathbb{X} \cap B_{\xxx}(x,r) \leq \epsilon^{-r/2}
    \}
$.  
Then, $N\leq \epsilon^{-r/2}$; therefore,~\eqref{PROOF__cor_localization___improved_bound} implies that $\operatorname{Depth}(\hat{T})\in \mathscr{O}\big(
    \epsilon^{-r}
\big)$ if $
R^{\alpha d /2}
\,
    \epsilon^{\alpha d/(\beta)}
\leq 
    \epsilon^{-r/2}.
$
Therefore, setting
\[
        R
    \eqdef 
        \min\left\{
            \epsilon^{2/\beta - 2r/ad}
        ,
            \sup\{r>0:\, 
                \# \mathbb{X} \cap B_{\xxx}(x,r) \leq \epsilon^{-r/2}
            \}
        \right\}
\]
implies that $\operatorname{Depth}(\hat{T})\in \mathscr{O}\big(
    \epsilon^{-r}
\big)$.  Finally, the assumption that each $f(x)$ is lower $(C,q)$-Ahlfors regular implies that Theorem~\ref{theorem_MAIN__AhlforsRegularCase} applies; whence we may take $Q\in \mathscr{O}\big(
\epsilon^{-r}
\big)$.  Setting $C_f\eqdef (3B)^{\frac{-1}{\beta}}$ completes the proof.  
\end{proof}

\section{Proofs from the Applications Section}\label{s_A_Motivational_Results}
This appendix contains proofs and formalizations of the paper's motivational examples in the paper's introductory section.  

\subsection{Proof Pertaining to Problem 1: Generic Regular Conditional Distributions}
\label{a_Proofs__applications}
\begin{proof}[Proof of Corollary~\ref{cor_Universal_Regular_Conditional_Distributions_RCP}]
Since $\yyy$ and $\xxx$ are Polish, then \citep[Theorem 13.6]{klenke2013probability} implies that $X_{\#}\pp$ is a Borel probability measure on $(\yyy,\bbb(\yyy))$.  Note also that $\xxx$ was assumed to be locally compact.  Applying \citep[Theorem 6.3]{Kallenberg2002Foundations}, also we observe that $\pp_{\cdot}:\xxx \ni x\mapsto \pp_x^Y\in \ppp[\yyy][1]$ is Borel-measurable.   Therefore, we may apply Corollary~\ref{cor_UAT_Qualitative} yields the conclusion to obtain the conclusion.  
\end{proof}

\begin{proof}[{Proof of Corollary~\ref{cor_Universal_Disintegration}}]
Let $L$ be the uniform Lipschitz constant of $f$. 
Since $y\mapsto f(y,x)$ is Lipschitz, for every $x\in \xxx$, then it is continuous in its first argument.  Since $f$ is uniformly bounded then, there must be some $m>0$ and some distinguished point $(y^{\star},x^{\star})\in \yyy\times \xxx$ for which:
\begin{equation*}
    \sup_{(y,x)\in \yyy\times \xxx} |f(y,x)|
=
\sup_{y \in \yyy} \sup_{x \in \xxx}|f(y,x)|
\leq 
m|f(y^{\star},x^{\star})|
<\infty
.
\end{equation*}
Hence, $M\eqdef \max\{2,L,\sup_{(y,x)\in \yyy\times \xxx}f(y,x)\}<\infty$.  
Therefore, the function $
\tilde{f}(y,x)\eqdef M^{-1}f(y,x)
,
$ is measurable, $2^{-1}$-Lipschitz in its first argument, and uniformly bounded in both arguments by $2^{-1}$.  
In particular:
\begin{equation}
    \sup_{(y,x) \in \yyy\times \xxx}
    |\tilde{f}(y,x)|
        +
    \|\tilde{f}(\cdot,x)\|_{\operatorname{Lip}_0}\leq 1
        \label{eq_proof_cor_universal_disintegration_0}
    ,
\end{equation}
Applying Corollary~\ref{cor_Universal_Regular_Conditional_Distributions_RCP} and Kantorovich-Rubinstein duality (see \citep[Remark 6.5]{VillaniOptTrans}) we obtain the following bound, for every $x \in \xxx$:
\allowdisplaybreaks
\begin{align}
\nonumber
\epsilon 
& >
W_1\left(
\pp(Y \in \cdot|X=x)
    ,
\sum_{n=1}^{N(\epsilon)}
    \left(
        \operatorname{Softmax}\circ \hat{f}\circ \varphi(x)
    \right)\hat{\mu}_n
\right)
\\
\nonumber
& =  \sup_{\sup_{y \in \yyy} \|g(x)\| + \operatorname{Lip}(g)\leq 1}\,
\int_{y \in \yyy} g(y) \pp\left(Y \in dx\mid X=x\right)
-
\int_{y \in \yyy} g(y) 
    \left(
    \sum_{n=1}^{N(\epsilon)}
    \left(
        \operatorname{Softmax}\circ \hat{f}\circ \varphi(x)
    \right)\hat{\mu}_n\right)(dy)
\\ 
& \geq 
\left|
    \int_{y \in \yyy} \tilde{f}(y,x) \pp\left(Y \in dx\mid X=x\right)
    -
    \int_{y \in \yyy} \tilde{f}(y,x)
        \left(
        \sum_{n=1}^{N(\epsilon)}
        \left(
            \operatorname{Softmax}\circ \hat{f}\circ \varphi(x)
        \right)\hat{\mu}_n\right)(dy)
\right|
;
    \label{eq_proof_cor_universal_disintegration_1}
\end{align}
where we have used~\eqref{eq_proof_cor_universal_disintegration_0} to obtain the last inequality in~\eqref{eq_proof_cor_universal_disintegration_1}.  Under our assumptions, we may apply the Disintegration Theorem (as formulated in \citep[Theorem 6.4]{Kallenberg2002Foundations}) to identify:
\begin{equation}
    \ee\left[\tilde{f}(Y,X)
\mid X = x
\right]
=
\int_{y \in \yyy} \tilde{f}(y,x) 
d\pp\left(Y \in \mid X =x \right)(y)
\label{eq_proof_cor_universal_disintegration_2_applyication_of_disintegration_Theorem}
,
\end{equation}
on a set of full $\pp$-measure.  Plugging~\eqref{eq_proof_cor_universal_disintegration_2_applyication_of_disintegration_Theorem} in to~\eqref{eq_proof_cor_universal_disintegration_1} we obtain the bound:
\allowdisplaybreaks
\begin{align}
\nonumber
\epsilon 
& >
\left|
    \int_{y \in \yyy} \tilde{f}(y,x) d\pp\left(Y \in \mid X=x\right)(d)
    -
    \int_{y \in \yyy} \tilde{f}(y,x)
        \left(
        \sum_{n=1}^{N(\epsilon)}
        \left(
            \operatorname{Softmax}\circ \hat{f}\circ \varphi(x)
        \right)\hat{\mu}_n\right)(dy)
\right|
\\
\nonumber
& 
= 
\left|
    \ee\left[
        \tilde{f}(Y,X)
        \mid X = x
    \right]
    -
    \int_{y \in \yyy} \tilde{f}(y,x)
        \left(
        \sum_{n=1}^{N(\epsilon)}
        \left(
            \operatorname{Softmax}\circ \hat{f}\circ \varphi(x)
        \right)\hat{\mu}_n\right)(dy)
\right|
\\
& 
=
    M^{-1}
\left|
    \ee\left[
        f(Y,X)
        \mid X = x
    \right]
    -
    \int_{y \in \yyy} 
        f(y,x)
        \left(
        \sum_{n=1}^{N(\epsilon)}
        \left(
            \operatorname{Softmax}\circ \hat{f}\circ \varphi(x)
        \right)\hat{\mu}_n\right)(dy)
\right|
.
    \label{eq_proof_cor_universal_disintegration_B}
\end{align}
Multiplying both sides of~\eqref{eq_proof_cor_universal_disintegration_B} by $M$ yields the result.  
\end{proof}

\subsection{Proof Pertaining to Problem 3: ``A Generic Expression of Epistemic Uncertainty''}\label{a_PROOF_ss_Problem_2}

\subsection{{Proof from Section~\ref{ss_Problem_2}}}\label{ss_A_proofs_ss_Problem_2}

\begin{proof}[{Proof of Proposition~\ref{prop_MOTIVATION_ex_MOTIVATION_CNTFUNCTIONS_2_Randomized_Parameters}}]
The condition: there exists some $y_0\in \yyy$ and some $1<q<\infty$ such that, for all $x\in \xxx$ it holds that $\sup_{\theta \in \Theta} \ee\left[d_{\yyy}(\hat{f}_{\theta}(x),y_0)^q\right]<\infty$ is the definition of $\pp\left(\hat{f}_{\theta}(x)\in \cdot\right)\in \ppp[\yyy][q]$.  Thus, $\xxx\ni x \mapsto \pp\left(\hat{f}_{\theta}(x)\in \cdot \right) \in \ppp[\yyy][q]$.  Next, fix any two $x_1,x_2 \in \xxx$ and compute:
$$
\begin{aligned}
W_1\left(
\mathbb{P}(\hat{f}_{\theta}(x_1)\in \cdot),\mathbb{P}(\hat{f}_{\theta}(x_2)\in \cdot)
\right)
 =&
\inf_{Y_1\sim \hat{f}_{\theta}(x_1),Y_2\sim \hat{f}_{\theta}(x_2)}\,\mathbb{E}\left[d_{\yyy}(Y_1,Y_2)\right]
\\
\leq &
\mathbb{E}\left[d_{\yyy}(\hat{f}_{\theta}(x_1),\hat{f}_{\theta}(x_2))\right]
\\
\leq &
\mathbb{E}\left[
\omega_{\hat{f}}\left(d_{\xxx}(g(\cdot),g(\cdot)) + d_{\Theta}(x_1,x_2)\right)\right]
\\ 
= & \mathbb{E}\left[0+\omega_{\hat{f}}\left(d_{\xxx}(x_1,x_2)\right)\right]\\
= & \omega\left(d_{\xxx}(x_1,x_2)\right).
\end{aligned}
$$
Hence, $x\mapsto \pp\left(\hat{f}_{\theta}(x)(\vartheta)\in \cdot\right)$ is uniformly continuous with modulus of continuity $\omega_f$.  
\end{proof}

\begin{proof}[{Proof of Corollary~\ref{cor_learnability}}]
By Proposition~\ref{prop_MOTIVATION_ex_MOTIVATION_CNTFUNCTIONS_2_Randomized_Parameters} $f:x\mapsto \pp\left(\hat{f}_{\vartheta}(x)\in \cdot\right) \in \ppp[\yyy][q]$ belongs to $C(\xxx,\ppp[\yyy][q])$.  Therefore, the result follows from Theorem~\ref{theorem_MAIN__GeneralCase}.  
\end{proof}
\section{Heuristic Training Procedure Used to Train the Probabilistic Transformer}
\label{a_NumericalScheme}
This section records the numerical scheme used to train our probabilistic transformer model.  This is also the same numerical scheme used in the partner paper \cite{AB_2022__PartnerPaper}.  
The procedure is a heuristic mimicking the proof technique for our main result, namely, Lemma~\ref{MAIN_LEMMA_theorem_UAT_qualitative}.  

Assume that the total number of training data points $\#\xx$ is at-least $2$.  
We highlight that, in Algorithm~\ref{Algo_train_Model} below, one must pick an integer $N$ which is strictly less than the total number of training.  This encourages the heuristic to learn how to interpolate from the training data rather than trivially assign $\{x_n\}_{n=1}^N \eqdef \xx$ in the algorithm's first step.  

\begin{algorithm}[H]
\label{Algo_train_Model}
\caption{Procedure used to train the probabilistic transformer in our experiments}
\SetAlgoLined
\KwInput{$1\leq N< \#\xx;\,N\in \nn_+$}
\KwOutput{Trained probabilistic transformer $\hat{T}$}
$x_1,\dots,x_N \leftarrow 
\underset{\tilde{x}_1,\dots,\tilde{x}_N \in \xx}{\operatorname{argmin}}
\sum_{n=1}^N \sum_{x \in \xx} \|\tilde{x}_n-x\|$
\tcp*{Get $\hat{\mu}_n$}
\For{$n\leq N$}{
$\hat{\mu}_n \eqdef \mu_{x_n}$
}
\For{$x \in \xx$,$n\leq N$
}{
\uIf{$n\in \operatorname{argmin}_{\tilde{n}\leq N} \|x-x_{\tilde{n}}\|$
\tcp*{Partition $\xx$ using $\hat{\mu}_n$}
}{
$L_{n,x}\eqdef 1$
}
\Else{
$L_{n,x}\eqdef 0$
}
}
$\hat{f}\in \operatorname{argmin}_{\tilde{f}\in \NN[\varphi,N]}
\frac{-1}{\#\xx}\sum_{x \in \xx} \sum_{n=1}^N
L_{n,x}\log\left(
\operatorname{Softmax}_N\circ \tilde{f}\circ \varphi(x)_n
\right).
$\tcp*{Train}
\textbf{Return:}$\,
\hat{F}\eqdef \sum_{n=1}^N \left(\operatorname{Softmax}_N\circ \hat{f}\circ \varphi(\cdot)\right)_n\hat{\mu}_n$. \tcp*{Return trained model.}
\end{algorithm}
\begin{remark}[Computational Tractability of Algorithm~\ref{Algo_train_Model}]\label{training_advantage}
An appealing feature of Algorithm~\ref{Algo_train_Model} is that it intentionally avoids passing any (stochastic) gradient updates through the Wasserstein distance by decoupling the identification of the $\{\mu_n\}_{n=1}^N$ from the training of the deep classifier $\operatorname{Softmax}_N\circ \hat{f}\circ \varphi$ when training any model in $\NN[\varphi,\ddd][\sigma:\star]$.  This decoupling trick is especially important from a computational standpoint since evaluating Wasserstein distances has super-cubic complexity (see \cite{pele2009fast} and \cite{cuturi2013sinkhorn}).  
The logic behind the two-step procedure of Algorithm~\ref{Algo_train_Model} is to emulate the method of proof behind Lemma~\ref{MAIN_LEMMA_theorem_UAT_qualitative}.  First, we heuristically try to cover $f(\xxx)$ by $N$ probability measures $\mu_1,\dots,\mu_N$ and the proof of Lemma~\ref{MAIN_LEMMA_theorem_UAT_qualitative}.  Then, each $x \in \xx$ is labelled with a binary vector with a non-zero entry indicating which $\hat{\mu}_n$ it is closest to. 
The key observation here in this step is that by uniform continuity, the unknown measure $f(x)$ is near $\hat{\mu}_n$, only if $x$ is near $x_n$.  
Thus, the continuity of $f$ allows us to fully reduce the classification problem in $\ppp[\rrD][1]$ to a classical classification problem in $\rrD$.  
Thus, in the last step, a classical Euclidean classifier may be trained, and we again avoid working with outputs in $\ppp[\rrD][1]$.
\end{remark}
\end{appendix}

\bibliography{References}
\end{document}